\newtheorem{theorem}{Theorem}
\newtheorem{definition}[theorem]{Definition}
\newtheorem{lemma}[theorem]{Lemma}
\newtheorem{assumption}{Assumption}
\newtheorem{example}{Example}
\newtheorem{remark}{Remark}
\def\x{{\bm x}}
\def\y{{\bm y}}
\def\z{{\bm z}}
\def\e{{\bm e}}
\def\t{{\bm t}}
\def\Rbb{\mathbb{R}}
\def\rmd{\mathrm{d}}
\newcommand{\argmin}{\mathop{\mathrm{argmin}}}
\newcommand{\argmax}{\mathop{\mathrm{argmax}}}
\title{Scaling-based Data Augmentation for Generative Models \\ and its Theoretical Extension}
\author{
Yoshitaka Koike\thanks{Science Tokyo}, \
Takumi Nakagawa\thanks{Science Tokyo/RIKEN}, \ 
Hiroki Waida\thanks{Science Tokyo}, \ 
Takafumi Kanamori\thanks{Science Tokyo/RIKEN. (kanamori@c.titech.ac.jp)}
}
\date{}
\begin{document}
\maketitle



\begin{abstract}
This paper studies stable learning methods for generative models that enable high-quality data generation. Noise injection is commonly used to stabilize learning. However, selecting a suitable noise distribution is challenging. Diffusion-GAN, a recently developed method, addresses this by using the diffusion process with a timestep-dependent discriminator. We investigate Diffusion-GAN and reveal that data scaling is a key component for stable learning and high-quality data generation. Building on our findings, we propose a learning algorithm, Scale-GAN, that uses data scaling and variance-based regularization. Furthermore, we theoretically prove that data scaling controls the bias-variance trade-off of the estimation error bound. As a theoretical extension, we consider GAN with invertible data augmentations. Comparative evaluations on benchmark datasets demonstrate the effectiveness of our method in improving stability and accuracy. 
\end{abstract}

\section{Introduction}
\label{sec:Introduction}

Generative adversarial networks (GANs) and their numerous variants can generate high-quality samples, 
such as images, audio, and graphs~\citep{
goodfellow2014generative,arjovsky2017wasserstein,gulrajani2017improved,%
mao2017least,radford2015unsupervised,miyato2018spectral,brock2018large,%
zhang2019self,li2017mmd,karras2019style,karras2020analyzing,%
karras2021alias,sauer2021projected,wang2022diffusion,van2016wavenet,de2018molgan}. 
Although learning with diffusion models has received much attention in recent years~\citep{ho2020denoising,song2020score}, generative models properly trained with GANs are still superior in terms of the quality and computation efficiency of data generation~\citep{xiao2021tackling,sauer2022stylegan}.
 However, GANs have stability issues that have not yet been fully resolved, such as lack of convergence, mode collapse, and catastrophic forgetting~\citep{thanh2020catastrophic}, in addition to overfitting, a common problem in machine learning. 

Many studies have been conducted to alleviate these problems
 including modification of loss function, noise injection, 
adaptive instance normalization~\citep{huang2017arbitrary}, 
weight normalization~\citep{ioffe2015batch, ba2016layer, wu2018group}, 
Jacobian regularization~\citep{mescheder2017numerics, nagarajan2017gradient, nie2020towards}, 
and utilization of pre-trained diffusion-models~\citep{luo2024diff,pmlr-v235-xia24d}. 
To avoid overfitting, the regularization to the discriminator's 
gradient is widely exploited~\citep{arjovsky2017wasserstein,gulrajani2017improved,miyato2018spectral,kodali2017convergence,adler2018banach,petzka2018regularization,xu2021towards,mescheder2018training,zhou2019lipschitz,thanh2018improving}. 
Details of related works are summarized in 
Section~\ref{appendix:sec:Related_Works}. 

Noise injection is another popular approach for stabilizing GANs~\citep{roth2017stabilizing}. 
The instability occurs when the support of the distributions is far apart. 
By bringing the true training data and fake generated data closer together by noise injection, 
the discriminator's overfitting can be alleviated. 
As summarized in Section~\ref{appendix:relatedworks:fusion}, 
Diffusion-GAN~\citep{wang2022diffusion}, inspired by the Denoising Diffusion Probabilistic Model (DDPM)~\citep{ho2020denoising,song2020score}, showed that the addition of noise contributes to stable learning and improves accuracy for GAN. 
As DDPM does, Diffusion-GAN diffuses the data by scaling toward the origin and noise injection, i.e., diffusion process. 
Furthermore, the strength of the data scaling is adaptively updated according to the degree of overfitting of the discriminator. 
While this strategy is powerful as a learning stabilization, 
it is unclear which component in Diffusion-GAN contributes to improving the data generation performance. 
Besides the diffusion-GAN, the fusion of GANs and diffusion models have been considered 
to improve the efficiency, speed, and quality of generative models~\cite{zheng2022truncated,yin2024one,%
sauer2023adversarial,kim2023consistency,yin2024improved}.

This paper studies stable learning methods for generative models.
For that purpose, we deconstruct the components of Diffusion-GAN and analyze the significance of each. 
We conduct numerical simulations to identify critical factors for stable learning and high-quality sampling. 
We also investigate regularization to stabilize the learning process. 
Using the findings from the above discussion, we propose the learning algorithm called Scale-GAN, which uses data scaling and variance-based regularization. 
Our empirical studies on benchmark datasets in image generation demonstrate that the proposed method performs superior to existing methods. 
The contributions of this paper are summarized below.
\begin{itemize}
    \item 
 We reveal that data scaling contributes more significantly to stabilization than noise injection. 
	   Indeed, data scaling helps avoid mode collapse. Compared to data scaling without noise injection, 
	   the data diffusion with noise makes it harder for the discriminator to convey an effective gradient direction to the generator. 
\item In GAN-based learning methods, 
  the discrepancy between true training data and fake generated data, and the discriminator's gradient, is often incorporated into the loss function. 
  We introduce a regularization method that leverages the discriminator's variance as an approximation of the regularization for the gradient, offering a simpler yet effective approach. 
\item We analyze theoretical properties of the proposed method, including i) the invariance of gradient
  direction to the scaling intensity, ii) the influence of the variance-based regularization on the generator's distribution, 
  and iii) the relationship between the scaling strategy and the generalization performance. 
\end{itemize}

\section{A Brief Survey of Diffusion-GAN}
\label{sec:Diffusion-GAN}
Diffusion-GAN~\citep{wang2022diffusion} is a learning method for generative models inspired by the success of diffusion models and revisits the use of instance noise in GANs. 
Similar to vanilla GAN~\citep{goodfellow2014generative},
the learning of Diffusion-GAN is formulated as the following min-max optimization problem, 
\begin{align*}
\min_{\theta}\max_{\phi}   \mathbb{E}_{\x,t}\mathbb{E}_{\y} [\,\log(D_\phi(\boldsymbol{y}, t))\mid \x,t\,]
 + \mathbb{E}_{\z,t}\mathbb{E}_{\y} [\,\log (1-D_\phi(\boldsymbol{y}, t))\mid G_\theta(\z),t\,], 
\end{align*}
where $G_\theta$ is a generator and $D_\phi$ is a discriminator depending on the scaling intensity $t$. 
Differently from the vanilla GAN, 
both training and generated samples are diffused with a conditional multivariate normal distribution 
$\mathcal{N}\left(\boldsymbol{y} ; s_t \boldsymbol{x},(1-s_t^2)\sigma^2 \boldsymbol{I}\right)$ 
with mean $s_t\boldsymbol{x}$ and the variance-covariance matrix $(1-s_t^2)\sigma^2\boldsymbol{I}$, 
The scaling function $s_t$ determines the diffusion schedule, 
and various scheduling schemes have been devised for the diffusion model. 
As the distribution of the scaling intensity $t$, 
the uniform distribution $\pi(t)=1/T$, or the priority distribution $\pi(t)\propto t$ for $t\in\{0,1,\ldots,T\}$ is often used. 
In practice, a fixed $T$ is not recommended. 
Instead, \cite{karras2020training} and \cite{wang2022diffusion} proposed an adaptive update rule for $T$ 
according to how much the discriminator overfits to the training data. 

As shown above, Diffusion-GAN consists of 
 three main components: i) data scaling, ii) noise injection, and iii) scaling strategy consisting of $s_t$ and $\pi(t)$. 
In Section~\ref{sec:Effect_of_Data_Scaling_and_NoiseInjection_in_GANs}, 
we deconstruct each component of Diffusion-GAN and analyze how each contributes to learning stability and data-generation quality.

\section{Data Scaling and Noise Injection}
\label{sec:Effect_of_Data_Scaling_and_NoiseInjection_in_GANs}

The simulation setting is the following. 
Suppose that i.i.d. samples, ${\x}_1,\ldots,{\x}_{n}, n=80$, are generated from the two-dimensional Gaussian mixture distribution with eight components, $\sum_{k=1}^{8}\frac{1}{8}\mathcal{N}({\bm\mu}_k, \sigma^2 I_2)$, 
where ${\bm\mu}_k=(\cos(2\pi k/8),\sin(2\pi k/8)), k=1,\ldots,8$ and $\sigma^2=0.05$. 
Four-layer fully connected neural networks with LeakyReLU activation function are used for the discriminator and generator. 
The LeakyReLU is used in StyleGAN2~\citep{karras2020analyzing}. 
The findings in this section will also be confirmed for the standard benchmark datasets in Section~\ref{Numerical_Experiments}. 

\subsection{Data Scaling}
\label{sec:DataScaling}

We investigate the effect of the data scaling. Scale the data by $s=0.25, 0.5, 1, 1.5$ respectively, and train the scaled generator $s G_\theta$ and the discriminator $D_\phi$ having the learning parameters $\theta$ and $\phi$ using the scaled data $\{\y_i\}_{i=1}^{80}$ for $\y_i=s\x_i$. 
Since applying data scaling only to the training data causes a bias in the generator, it also applies to the generated data, which has been studied by~\cite{wang2022diffusion,jenni2019stabilizing,tran2021data,zhao2020differentiable,karras2020training}. 
The data scaling does not apply to $D_\phi$. 

In Fig~\ref{pr_rc_original_diff_scale}, (a) and (b) show precision and recall to training iterations for each scaling. 
Additional results with different seeds are reported in Section~\ref{appendix_DataScaling_NoiseInjection_GAN}.
The learning with $s=0.25$ shows that if the data distribution is well covered by the generator, i.e., high recall is attained in the early stages of learning, the precision gradually increases, and the learning is successful.
As shown in the lower panels of Fig~\ref{pr_rc_original_diff_scale}, 
however, it has not escaped from the initial learning failure.  
These results can be explained by the size of the gradient for the discriminator, as shown in Fig~\ref{pr_rc_original_diff_scale} (c). 
The gradient norm for $s=0.25$ is smaller than the other cases, meaning that abrupt changes in the discriminator rarely occur.  
Hence, the learning result will be significantly affected by the early stages of the learning process. 
On the other hand, for $s=1, 1.5$, high recall is achieved in the early stages of learning. 
However, it is observed that the recall drops significantly 
from the middle to the latter half of the learning period, causing mode collapse. 
Existing studies~\citep{gulrajani2017improved,kodali2017convergence,adler2018banach,petzka2018regularization,xu2021towards,mescheder2018training,zhou2019lipschitz,thanh2018improving} have shown that a large gradient of the discriminator tends to cause instability. 

Fig~\ref{1730_scale1.5_oscillation}, which visualizes the discriminator's output at $s=1.5$, 
illustrates an unstable situation as the mode oscillates to higher predicted values at each iteration. 
This phenomenon is a precursor to mode decay. 
This oscillation leads to mode decay, known as catastrophic forgetting. 
We observed that the mode collapses occur even for $s=1.0$. 

In summary, data scaling has a significant impact on learning dynamics. 
When the scale is small, the discriminator is less prone to sudden gradients and instability but is susceptible to early status in learning. 
Conversely, 
large scales make it easier to learn discriminant boundaries but are prone to instability before generating accurate samples.
These results suggest that learning data at several scales simultaneously can reflect the good characteristics of each.

\begin{figure}[t]
\centering 
 \begin{tabular}{ccc}  
 \includegraphics[height=45mm]{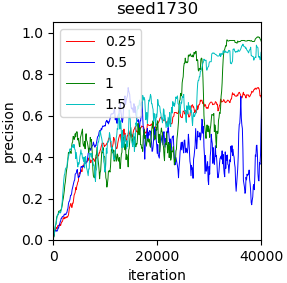} &
 \includegraphics[height=45mm]{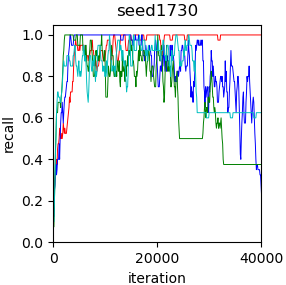}    &
 \includegraphics[height=45mm]{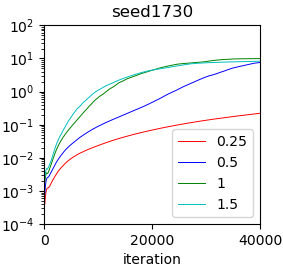}    \\
 \includegraphics[height=45mm]{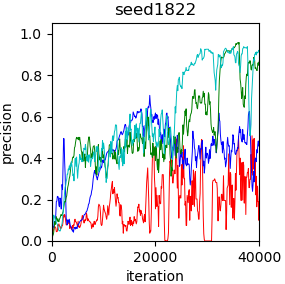} &
 \includegraphics[height=45mm]{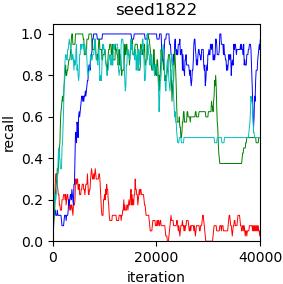}    &
 \includegraphics[height=45mm]{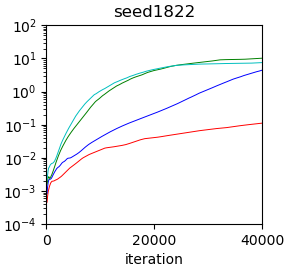}    \\
 {\small\begin{tabular}{l}(a) precision \end{tabular}} & 
 {\small\begin{tabular}{l}\ \ (b) recall \end{tabular}}  &  
 {\small\begin{tabular}{l}\hspace{-2mm}(c) gradient norm \end{tabular}}
 \end{tabular}
 \caption{For each data scaling, 
 (a)\,precision, (b)\,recall, and (c)\,averaged norm of discriminator's gradient 
 are depicted. 
 The upper and lower panels correspond to two different seeds. 
 }
 \label{pr_rc_original_diff_scale}
\end{figure}

\begin{figure}[t]
\centering 
\begin{tabular}{c}
 \includegraphics[height=40mm]{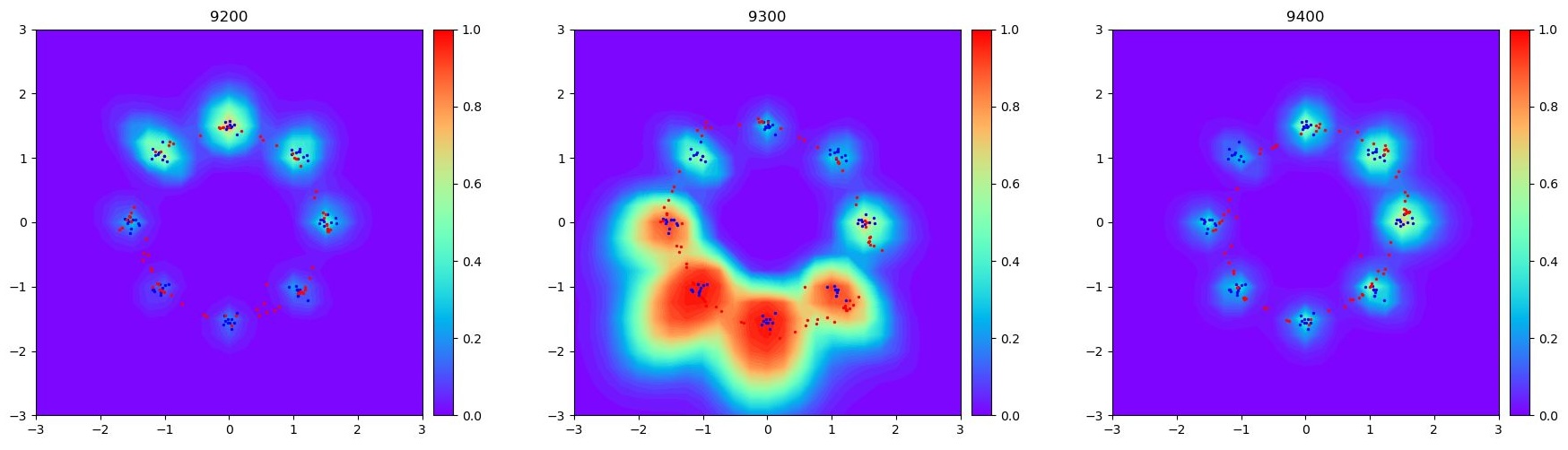}\\
 \includegraphics[height=40mm]{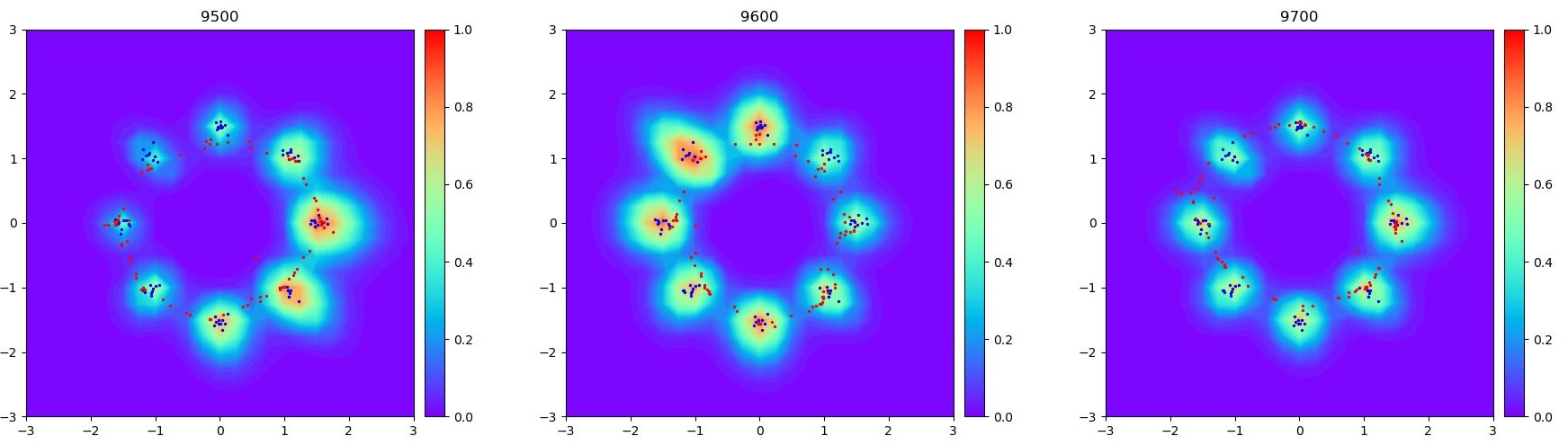}
\end{tabular}
  \caption{
  The discriminator's predictions for $s=1.5$. The number of repetitions ranges from 9200 to 9700 from top left to bottom right. 
  Blue (resp. Red) dots represent the data (resp. generated samples). 
  Modes with high values change in an oscillatory manner, and the oscillations become more intense, leading to mode collapse. 
  }
  \label{1730_scale1.5_oscillation} 
\end{figure}

\begin{figure}[t]
\centering 
\begin{tabular}{cc}
 \includegraphics[height=55mm]{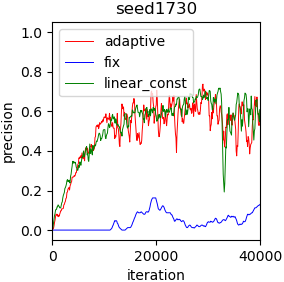} &
 \includegraphics[height=55mm]{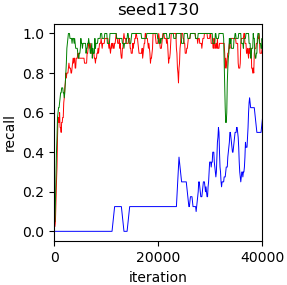}   \\
 (a) precision & (b) recall
\end{tabular}
 \caption{ (a)\,precision, and (b)\,recall 
 for each scaling strategy, ``fix'', ``linear const'', and ``adaptive''. }
 \label{pr_rc_diff_t_strategy}
\end{figure}

\subsection{Scaling Strategy}
\label{subsec:Scaling_Strategy}
Let us consider learning at multiple scales simultaneously. 
In a similar way as \cite{ho2020denoising}, we define the scaling function $s_t$ by 
$s_0=1$ and $s_t=s_{t-1}\sqrt{1-\beta_0(1-t/T)-\beta_T t/T}$ 
for $t\in\{1,\ldots,T\}$. 
The parameters $\beta_0$ and $\beta_T$ control the decay rate and the scale at $T$. 
The distribution $\pi(t)$ of $t$ is given by $\pi(t)=\frac{1}{2}\pi_0(t)+\frac{1}{2}\delta_0(t)$, 
where $\delta_0$ is the point mass distribution at $t=0$, 
and $\pi_0$ is a pre-defined distribution on $\{1,\ldots,T\}$. 
Determine the distribution $\pi_0$ in three different ways and compare the properties of each; 
\begin{itemize}
 \item[i)] $T$ is fixed to $T_{\max}$ and $\pi_0$ is the uniform distribution on $\{1,\ldots,T_{\max}\}$
	 (denoted by ``fix''). 
 \item[ii)] $T$ increases in proportion to the learning iteration $i$ and make it constant from the middle (denoted by ``linear const''), 
	  i.e., $\pi_0$ is the uniform distribution on $\{1,\ldots,T(i)\}$ with  $T(i) = \min\left(\frac{2T_{\text{max}}}{I}i,T_{\text{max}}\right)$. 
 \item[iii)]
	   $T$ adaptively changes by the following update rule, 
	   $T\leftarrow T+\operatorname{sign}(r_d-d_{\text {target }})$, 
	   for $r_d=\mathbb{E}[\operatorname{sign}(D(\y, t)-0.5)]$,
	   and $\pi_0$ is the uniform distribution on $\{1,\ldots,T\}$ (denoted by ``adaptive''). 
\end{itemize}
We use the discriminator $D(\boldsymbol{y}, t)$ that takes not only the data but also the scaling intensity $t$. 
The expectation in $r_d$ 
is taken for the joint distribution of $\y$ and $t$. 
In the adaptive strategy, $T$ is clipped to $[T_{\min}, T_{\max}]$. 
In Diffusion-GAN, the adaptive strategy with 
the priority distribution $\pi_0(t)\propto t$ is used. We train with $T_{\text{min}} = 0, T_{\text{max}} = 500, I=40000$ and $d_{\text{target}} =0.1$. 
The scaling function $s_t$ is determined by $\beta_0=0.0001$ and $\beta_T=0.02$. 
Some hyperparameters in Diffusion-GAN are summarized in Section~\ref{appendix:Hyperparameters_Image_Generation}. 

Panels (a) and (b) in Fig~\ref{pr_rc_diff_t_strategy} show the precision and recall for each scaling strategy. 
Additional results with different seeds are reported in Section~\ref{appendix_DataScaling_NoiseInjection_GAN}. 
The ``fix'' strategy has low precision and does not learn well compared to the others. 
Both the ``linear const'' and ``adaptive'' strategies can roughly estimate distributions from an early stage and learn well. 
We see that learning with multiple scales is stable. 
The ``adaptive'' strategy is less dependent on the hyperparameters and can be used more universally~\citep{wang2022diffusion}.

\subsection{Noise Injection and Data Scaling}
\label{sec:Noise_Injection_Data_Scaling}
We numerically investigate the effectiveness of the noise injection and its relation to data scaling. 
At the beginning, let us consider the noise injection defined by 
$\tilde{\x} = \x + \bm{\epsilon}, \bm{\epsilon} \sim \mathcal{N}(\bm{0}, \sigma_{\text{noise}}^2I)$. 
We compare the learning with $\sigma_{\text{noise}}=0$, i.e., the vanilla GAN, and the learning with positive $\sigma_{\text{noise}}$. 
Fig~\ref{1730colormap}~(a) 
shows the training data $\{\tilde{\x}\}_{i=1}^{80}$, 
generated samples, and the discriminator's outputs. 
As shown in the upper panels, the vanilla GAN learns faster, 
and the discriminator tends to have a large gradient. 
On the other hand, learning with noise injection yields a relatively smooth discriminator surface, meaning that mode collapse is mitigated. 
The precision and recall for each $\sigma_{\text{noise}}$ are shown in Fig~\ref{appendix:noise_scale_adding_noise_supp}.
We observe that the noise injection tends to stabilize the learning process compared to using the original training data. However, occasionally unstable behavior occurs even for noise injection. 

In Diffusion-GAN, both the data scaling and noise injection are incorporated, i.e., the training data $\boldsymbol{x}$ is transformed to 
$\tilde{\x} = s_t \x + \sqrt{1-s_t^2}\bm{\epsilon},\, \bm{\epsilon}\sim\mathcal{N}(\bm{0}, \sigma_{\text{noise}}^2I)$. 
The scaling function  $s_t$ is defined in the same way as that in Section~\ref{subsec:Scaling_Strategy}, and 
the ``adaptive'' strategy is employed. 
Diffusion-GAN is compared with the vanilla GAN with the scaled generator $s_t G_\theta({\z})$ trained by the scaled data $\tilde{\x} = s_t \x$ 
without noise injection. 
The precision and recall for each $\sigma_{\text{noise}}$ are shown in Fig~\ref{appendix:diff_noise_scale_diffusion_supp}. 
We see that overall the learning process is stable even in the case of $\sigma_{\mathrm{noise}}=0$, 
while Diffusion-GAN with $\sigma_{\mathrm{noise}}=0.05$ seems unstable. 
Fig~\ref{1730colormap}~(b) shows the training data, generated samples, and the discriminator's outputs. 
Unlike the results in Fig~\ref{1730colormap}~(a), 
the noise does not have the effect of smoothing the surface of the discriminator's outputs that much. 
This result suggests that scaling contributes significantly to stabilization and that the effect of noise injection is limited.

\begin{figure*}[!t]
 \begin{tabular}{cc}
  \includegraphics[width=75mm]{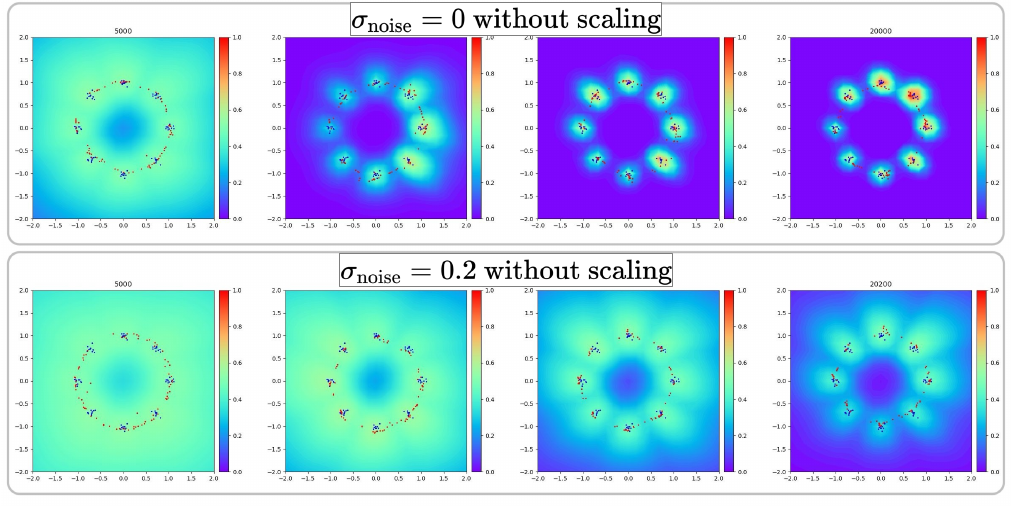}  &
  \includegraphics[width=75mm]{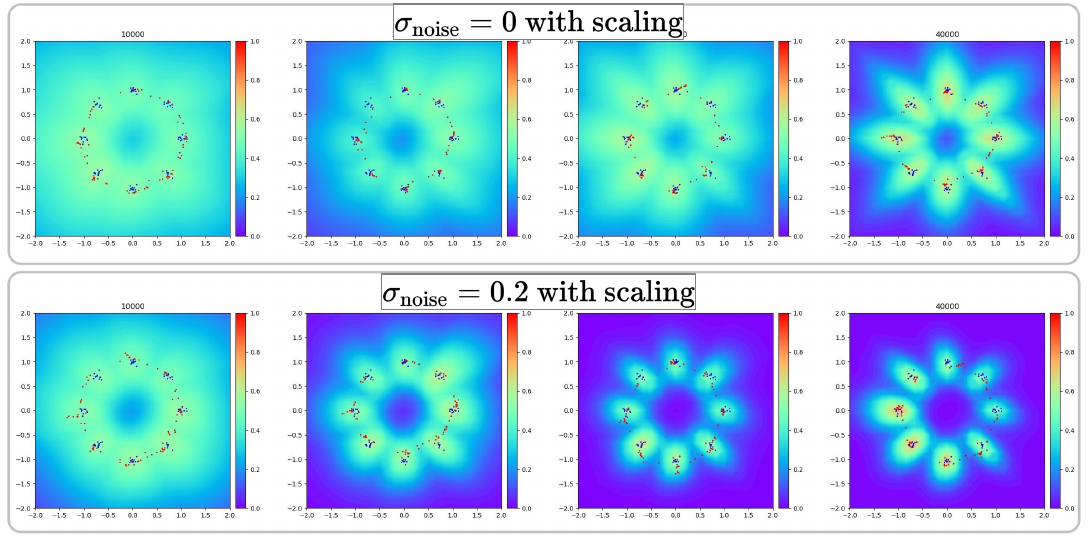} \\ 
  (a) Learning without scaling   &  (b) Learning with scaling  
 \end{tabular}
 \caption{
 (a) Learning without scaling: 
 the training data (red dots), generated data (blue dots), and the discriminator's outputs (heatmap) are depicted. 
 Upper panels: $\sigma_{\text{noise}}=0$. Lower panels: $\sigma_{\text{noise}}=0.2$. 
 The number of training iterations ranges from 5000 to 20000 at every 5000 from right to left. 
 (b) Learning with scaling: 
 the training data (red dots), generated data (blue dots), and the discriminator's outputs (heatmap) are depicted. 
 Upper panels: ``adaptive'' scaling with $\sigma_{\text{noise}}=0$. 
 Lower panels: Diffusion-GAN with ``adaptive'' scaling and $\sigma_{\text{noise}}=0.2$. 
 The number of training iterations ranges from 10000 to 40000 at every 10000 from right to left. 
 }
 \label{1730colormap}
 \end{figure*}

\section{Proposed Framework}
Based on the analysis in Section~\ref{sec:Effect_of_Data_Scaling_and_NoiseInjection_in_GANs}, 
we propose a learning algorithm called Scale-GAN. 
We use the following notations. 
Let $\mu$ be the Lebesgue measure on the Borel algebra of a subset in the Euclidean space. 
Let us define 
$\|f\|_1=\int|f|\rmd{\mu}$ and $\|f\|_\infty=\sup_{\x}|f(\x)|$, which is regarded as the essential supremum according to
the context. 
The function set ${\displaystyle L^{p}} (p=1,\infty)$ is defined as the set of all measurable functions 
for which $\|f\|_p<\infty$ holds. 
The expectation of the measurable function $f$ with respect to the probability density $p$ is 
denoted by $\mathbb{E}_{p}[f]$ or $\mathbb{E}_{\x\sim p}[f(\x)]$, meaning that 
$\int f(\x)p(\x)\rmd{\mu}(\x)$. 
The Lipschitz constant of $f$ is defined by 
$\|f\|_{\mathrm{Lip}}=\sup_{\x\neq \y}\frac{|f(\x)-f(\y)|}{\|\x-\y\|}$, 
where $\|\cdot\|$ is the Euclidean norm. 
For $\delta\in[0,1/2)$, let us define 
 $\mathcal{U}_\delta=\{f\in L^\infty\,|\,\delta<\inf{f},\,\sup{f} <1-\delta\}$. Let $[a]_+=\max\{a,0\}$ for $a\in\Rbb$.

\subsection{Learning Algorithm}
\label{subsec:Learning_Algorithms}
In Section~\ref{sec:Effect_of_Data_Scaling_and_NoiseInjection_in_GANs}, 
we find that data scaling contributes not only to counteracting overfitting but also to stabilization.
Adding a diffusion term suggested that data augmentation updates the generator in a different direction than it should. Based on this idea, we consider data augmentation with scaling alone.

In addition to the data augmentation, let us consider the regularization. 
As discussed in~\cite{mangalam2021overcoming}, 
the generator is prone to catastrophic forgetting and mode collapse 
when the discriminator's outputs differ significantly among modes. 
Thus, stabilization of the discriminator is critical. 
We use the variance-based regularization $\mathbb{E}_t[\mathbb{V}_{p_t}[\widetilde{D}]|t]$ to the discriminator $\widetilde{D}(y,t)$, where $p_t$ is the distribution of the scaled data. A simple calculation leads that this  regularization is an upper bound of the expectation of the approximate derivative,
$\mathbb{E}_t[\mathbb{E}_{p_t(x)\otimes p_t(y)}[\frac{|\widetilde{D}(x,t)-\widetilde{D}(y,t)|}{\|x-y\|}]^2]$,
up to a constant factor. 
Our variance regularization is similar to the NICE regularization~\citep{ni2024nice}, 
which is an empirical approximation of  
$\mathbb{E}_{x\sim p_0}[\mathbb{V}_s[D(x\circ s)]|x], s\sim N_d({\bm 1},\beta^2{\bm I})$, where $p_0$ is the data distribution. 
The paper's author proved that the NICE regularization approximates the gradient penalty for the discriminator. 
Compared to some popular gradient-based regularization \citep{arjovsky2017wasserstein,gulrajani2017improved,zhou2019lipschitz}, the variance-based regularization has an advantage for computation efficiency as well as NICE. 
In Section~\ref{subsec:Theoretical_Analysis}, 
we show that our variance regularization leads to the invariance of gradient direction under the data scaling. 
This is an important feature that mitigates the imbalance of the convergence speed between the discriminator and generator in the GAN learning process. 


We introduce the loss function for Scale-GAN. 
The learning algorithm is obtained through an empirical approximation of the loss function. 
Let $\z$ be the latent variable of the generator having the distribution $p_z$. 
The distribution of the scaling intensity, $t$, is denoted by $\pi(t)$ on the interval $[0, T]$. 
Here, $T$ is a fixed positive value, while $T$ can be variable in practical learning algorithms. 
The proposed framework for Scale-GAN is given by the min-max optimization problem, 
\begin{align}
 \min_{G}\max_{\widetilde{D}}
 \mathbb{E}_{t\sim\pi}\big[ L_{p_0}(G,\widetilde{D}) -\lambda \mathbb{V}_{p_0}[\widetilde{D}(s_t\x,t)] \big],
\label{eqn:Scale-GAN-loss}
\end{align}
with 
\begin{align*}
L_{p_0}(G,\widetilde{D}):= \mathbb{E}_{p_0}[\log \widetilde{D}(s_t\x,t)]+\mathbb{E}_{\z}[\log(1-\widetilde{D}(s_t G(\z),t))]
\end{align*}
for the data distribution $p_0(\x), \x\in\mathcal{X}$. 
Here, $\lambda\geq 0$ is the regularization parameter. 
The learning algorithm with an empirical approximation 
of \eqref{eqn:Scale-GAN-loss} is illustrated in Section~\ref{appendix:Learning_Algorithm_Scale-GAN}. 


\subsection{Theoretical Analysis}
\label{subsec:Theoretical_Analysis}

This section is devoted to revealing three theoretical properties of Scale-GAN; 
i) invariance property of the gradient direction for the generator's learning, 
ii) the bias induced by the variance regularization and 
iii) relation between the estimation error bound and the scaling strategy. 
The property i) means that the data scaling will not degrade the efficiency of the generator learning. Due to ii), we can quantitatively understand the effect of the regularization. The result iii) provides a guideline on how to design the scaling $s_t$ to balance the learning stability and data-generation quality. 

It is important to study the properties of the optimal discriminator 
for the inner maximization problem of~\eqref{eqn:Scale-GAN-loss}. 
In the below, the discriminator on $\mathcal{X}$ (resp. $\mathcal{X}\times[0,T]$) is denoted by $D$ (resp. $\widetilde{D}$). 
\begin{theorem} 
\label{thm:opt-discriminator}
 Suppose that $\mathcal{X}$ is $[0,1]^d$ or $\Rbb^d$. 
 Let $q(\y), \y\in\mathcal{X}$ be the probability density of the generated sample $\y=G(\z), \z\sim p_z$. 
 Suppose $\|p_0\|_\infty<\infty$ and 
 $\frac{p_0}{p_0+q}\in \mathcal{U}_\delta$ for a $\delta\in[0,1/2)$. 
 Let $s_t, t\in[0,T]$ be a strictly positive scaling function. 
 Then, the following maximization problem has 
 the unique optimal solution 
 $\{\widetilde{D}(\cdot,t)\}_{t\in[0,T]}\subset \mathcal{U}_0$, 
 \begin{align*}
  \max_{\substack{\widetilde{D}(\cdot,t)\in\mathcal{U}_0,\\ 0\leq t\leq T}}
  \mathbb{E}_{t\sim\pi}[L_{p_0}(G,\widetilde{D})-\lambda \mathbb{V}_{p_0}[\widetilde{D}(s_t\x,t)]
 \end{align*}
 Furthermore, the optimal discriminator satisfies $\widetilde{D}(s_t\x,t)=\widetilde{D}(\x,0)$ for $(\x,t)\in\mathcal{X}\times[0,T]$. 
\end{theorem}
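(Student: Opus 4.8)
The plan is to exploit the scaling symmetry built into~\eqref{eqn:Scale-GAN-loss} to collapse the whole $t$-indexed family of inner maximizations into a single scale-free problem. First, the inner objective decomposes over $t$: the average $\mathbb{E}_{t\sim\pi}$ sits outside and the integrand depends on $\widetilde{D}$ only through its slice $\widetilde{D}(\cdot,t)$, so maximizing over $\{\widetilde{D}(\cdot,t)\}_{t\in[0,T]}$ reduces to maximizing, for $\pi$-a.e.\ $t$ separately, $\mathbb{E}_{p_0}[\log\widetilde{D}(s_t\x,t)]+\mathbb{E}_{\z}[\log(1-\widetilde{D}(s_tG(\z),t))]-\lambda\mathbb{V}_{p_0}[\widetilde{D}(s_t\x,t)]$. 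Substituting $g(\x):=\widetilde{D}(s_t\x,t)$ — legitimate since $s_t>0$ and, for $\mathcal{X}=[0,1]^d$, $s_t\le 1$ so $s_t\mathcal{X}\subseteq\mathcal{X}$ — and using $\widetilde{D}(s_tG(\z),t)=g(G(\z))$ with $G(\z)\sim q$, the slice objective becomes $\mathcal{J}(g):=\mathbb{E}_{p_0}[\log g]+\mathbb{E}_{q}[\log(1-g)]-\lambda\mathbb{V}_{p_0}[g]$, which contains no $t$ whatsoever: the scale $s_t$ has been absorbed into the change of variables, and $g\mapsto\widetilde{D}(\cdot,t)=g(\cdot/s_t)$ is a bijection onto $\mathcal{U}_0$ (values of $g$ off $\mathcal{X}$ being irrelevant, since $\mathcal{J}$ only sees $g$ on $\operatorname{supp}p_0\cup\operatorname{supp}q$). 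Hence every slice is the \emph{same} problem $\max_{g\in\mathcal{U}_0}\mathcal{J}(g)$; once this has a unique maximizer $g^\star\in\mathcal{U}_0$, the optimal discriminator obeys $\widetilde{D}(s_t\x,t)=g^\star(\x)$ for every $t$, and evaluating at $t=0$ (where $s_0=1$) gives $\widetilde{D}(\x,0)=g^\star(\x)$, so $\widetilde{D}(s_t\x,t)=\widetilde{D}(\x,0)$, with uniqueness of $\widetilde{D}$ inherited from that of $g^\star$.

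Everything therefore reduces to showing $\mathcal{J}$ has a unique maximizer in $\mathcal{U}_0$. Uniqueness follows from strict concavity: $g\mapsto\mathbb{E}_{p_0}[\log g]$ and $g\mapsto\mathbb{E}_{q}[\log(1-g)]$ are concave, and strictly so on $\operatorname{supp}p_0$ resp.\ $\operatorname{supp}q$, which coincide and equal $\mathcal{X}$ since $\tfrac{p_0}{p_0+q}\in\mathcal{U}_\delta$ forces $p_0,q$ to be a.e.\ positive with $\tfrac{\delta}{1-\delta}\le\tfrac{q}{p_0}\le\tfrac{1-\delta}{\delta}$; meanwhile $g\mapsto\mathbb{V}_{p_0}[g]$ is convex — e.g.\ via $\mathbb{V}_{p_0}[g]=\inf_{c\in\Rbb}\mathbb{E}_{p_0}[(g-c)^2]$, an infimum of functionals jointly convex in $(g,c)$ — so $-\lambda\mathbb{V}_{p_0}[g]$ is concave. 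Existence I would obtain by the direct method: $\sup\mathcal{J}\ge\mathcal{J}\big(\tfrac{p_0}{p_0+q}\big)>-\infty$ because $\tfrac{p_0}{p_0+q}\in\mathcal{U}_\delta$; along a maximizing sequence each of the three nonpositive terms of $\mathcal{J}$ stays bounded, one extracts a weak-$*$ limit in $L^\infty$ (valued in $[0,1]$), and weak-$*$ upper semicontinuity of the concave functional $\mathcal{J}$ makes the limit a maximizer.

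The step I expect to be the real obstacle is verifying that this maximizer actually lies in $\mathcal{U}_0$, i.e.\ is bounded away from $0$ and $1$ \emph{uniformly} on $\mathcal{X}$ — this is exactly where $\|p_0\|_\infty<\infty$ and the two-sided control on $\tfrac{p_0}{p_0+q}$ get used. I would analyze the pointwise stationarity condition $p_0(1-g)-q\,g-2\lambda p_0\big(g-\mathbb{E}_{p_0}[g]\big)g(1-g)=0$ and show that, with $q/p_0$ bounded and $\mathbb{E}_{p_0}[g]\in(0,1)$, its relevant root is confined to a compact subinterval of $(0,1)$ depending only on $\delta$ and $\lambda$ (for $\lambda=0$ the root is exactly $\tfrac{p_0}{p_0+q}\in\mathcal{U}_\delta$, and a monotonicity/perturbation argument handles $\lambda>0$). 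This same stationarity equation also yields the alternative ``direct'' check of the invariance: since the density ratio $p_t/q_t$ is scale-equivariant and the pushforward Jacobian $s_t^{-d}$ cancels the $s_t^{d}$ from the substitution, $g^\star(\cdot/s_t)$ solves the $t$-slice stationarity condition. Finally, a brief measurable-selection remark assembles the unique per-$t$ maximizers into a genuine $\widetilde{D}$ on $\mathcal{X}\times[0,T]$; with the maximizer unique and $s_t$ measurable in $t$, this is routine.
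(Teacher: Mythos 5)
Your overall architecture lines up closely with the paper's. The reduction of every $t$-slice to a single scale-free problem is exactly the content of the paper's Lemma on the optimal $D_t$ (there phrased as the invariance of the density ratio, $q_t(\y)/p_t(\y)=q(\x)/p_0(\x)$ for $\y=s_t\x$, applied to the optimality condition, rather than as an up-front change of variables); uniqueness comes from the same strict-concavity argument; and your pointwise stationarity equation is precisely the paper's cubic $2\lambda D^3-2\lambda(\mathbb{E}_{p_0}[D]+1)D^2+(2\lambda\mathbb{E}_{p_0}[D]-1-q/p_0)D+1=0$ after clearing denominators. The genuine divergence is the existence step: the paper explicitly notes that $L^\infty$ is not reflexive and that the general existence theorem does not apply, and instead \emph{constructs} the maximizer --- for each fixed $c=\mathbb{E}_{p_0}[D]$ it shows the cubic has a unique root $z_c(\x)\in(0,1)$ (via $f(0,c)=1>0$, $f(1,c)=-r<0$ and the location of the three real roots), then closes the global coupling with an intermediate-value argument on $h(c)=\mathbb{E}_{p_0}[z_c]-c$.

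Your direct-method alternative has a real gap that you only partially acknowledge. First, $\mathcal{U}_0$ imposes \emph{uniform} bounds ($0<\inf g$, $\sup g<1$), so the weak-$*$ limit of a maximizing sequence lands only in $\{0\le g\le 1\}$; ruling out boundary contact a.e.\ is easy (the one-sided derivative of $\log$ blows up where $p_0>0$), but ``$g^\star>0$ a.e.'' is strictly weaker than $\inf g^\star>0$. Hence the cubic analysis is not an optional verification at the end but the load-bearing part of existence, and it is entangled with the global unknown $\mathbb{E}_{p_0}[g^\star]$: the stationarity condition is not purely pointwise, so confining the root to a compact subinterval of $(0,1)$ must be done uniformly over $c\in[0,1]$, and a self-contained proof still needs something like the paper's fixed-point lemma in $c$. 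Second, weak-$*$ upper semicontinuity of $\mathbb{E}_{p_0}[\log g]$ and $\mathbb{E}_q[\log(1-g)]$ is asserted but not justified; it holds for concave integrands bounded above, but the integrands are unbounded below and the standard ``concave plus norm-continuous implies weakly u.s.c.'' shortcut is not automatic here. None of this is unfixable, but as written the existence argument is the weakest link, and completing it honestly essentially converges back onto the paper's constructive route.
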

The proof is deferred to Section~\ref{appendix:Theorem_scale_invariance}.
In the vanilla GAN, the explicit expression of the optimal discriminator is obtained. 
In our case, however, such an explicit expression is unavailable due to the regularization term. 
In the proof, we use the fact that the objective function is concave in the discriminator. The concavity ensures that 
the G\^{a}teaux differential~\citep{kurdila2005convex} of the objective function leads the condition on the global optimality. 
Then, we obtain a cubic equation of $\widetilde{D}(\x,t)$ at each $\x$ and $t$. 
Analyzing the cubic equation, we can prove the existence of the optimal solution. 
The uniqueness comes from the strict concavity of the objective function. 
\begin{remark}
 The assumption $\frac{p_0}{p_0+q}\in \mathcal{U}_\delta$ means that 
 $q/p_0$ is bounded away from zero and infinity on $\mathcal{X}$. 
\end{remark}

\subsubsection{Invariance of Gradient Direction}
\label{subsubsec:Invariance_Gradient_Direction}
For a fixed generator $G_\theta$, suppose that the optimal discriminator $\widetilde{D}(\y,t)$ is obtained. 
Theorem~\ref{thm:opt-discriminator} guarantees the equality $\widetilde{D}(s_t\x,t)=\widetilde{D}(\x,0)$. 
Then, we have $\nabla_{\y}\widetilde{D}(s_t\x,t)=\frac{1}{s_t}\nabla_{\y}\widetilde{D}(\x,0)$. 
At the optimal discriminator, the gradient vector of the objective function in~\eqref{eqn:Scale-GAN-loss} 
with respect to $\bm{\theta}$ is 
$
 \mathbb{E}_{\z}[ \frac{\nabla_{\theta}G_\theta(\z)\nabla_{\y}\widetilde{D}(G_\theta(\z),0) }{-1+\widetilde{D}(G_\theta(\z),0)}]$, 
which is independent of the distribution of the scaling intensity $\pi(t)$. 
Since $s_0=1$, the above gradient is nothing but the gradient for the vanilla GAN. 
Note that the NICE regularization~\citep{ni2024nice} does not induce such an invariance property. 

As illustrated in Section~\ref{sec:DataScaling}, 
the small scaling will make slower progress in learning the discriminator.
In contrast, the learning of the generator is not affected by the scaling that much. 
On the other hand, if the noise is also injected,
the generator's gradient direction is disturbed.
As a result, the convergence of the generator becomes slower.
Hence, the scaling without noise is thought to improve the balance of the learning progress for the discriminator and generator compared to Diffusion-GAN, in which 
the convergence of both the discriminator and generator becomes slower due to the noise injection. 
The above discussion is confirmed by numerical experiments in Section~\ref{subsec:Additional_Numerical_Studies}.

\subsubsection{Bias induced by Variance Regularization}
For $\lambda=0$, the probability distribution corresponding to the optimal generator is $p_0$~\citep{goodfellow2014generative}. 
Let us extend this to learning with regularization. 
Let $q_\lambda$ be the probability density corresponding to the optimal generator 
of \eqref{eqn:Scale-GAN-loss}. 
Under additional assumptions to Theorem~\ref{thm:opt-discriminator}, the upper bound of $\|p_0-q_\lambda\|_\infty$ is evaluated as follows. 
The detailed assumptions and the proof are found in Section~\ref{appendix:Theorem_bias_Scale-GAN}. 
\begin{theorem}
\label{thm:bias_Scale-GAN}
Let $\mathcal{X}$ be $[0,1]^d$ or $\Rbb^d$. 
Suppose $\|p_0\|_\infty<\infty$ and $\|p_0\|_{\mathrm{Lip}}<\infty$. 
Let $q_\lambda$ be the probability density corresponding to the optimal generator of~\eqref{eqn:Scale-GAN-loss}. 
Then, $\|p_0-q_\lambda\|_\infty=O(\lambda^{\frac{1}{d+3}})$ holds. 
\end{theorem}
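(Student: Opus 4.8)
The plan is to bound $\|p_0 - q_\lambda\|_\infty$ by analyzing how the regularization parameter $\lambda$ perturbs the optimality conditions away from the $\lambda=0$ case, where the optimal generator recovers $p_0$ exactly. The starting point is Theorem~\ref{thm:opt-discriminator}, which gives the scale-invariance $\widetilde{D}(s_t\x,t)=\widetilde{D}(\x,0)=:D(\x)$ and the cubic optimality equation for the optimal discriminator at each point. First I would write down this cubic equation explicitly and expand the optimal discriminator $D_\lambda$ as a perturbation of the unregularized $D_0=\frac{p_0}{p_0+q}$. The key observation is that the variance regularization term $\mathbb{V}_{p_0}[\widetilde{D}]$ couples the pointwise values of $D$ through an integral (the mean subtraction), so the first-order effect of $\lambda$ on $D_\lambda$ should be $O(\lambda)$ in a suitable norm, with the correction governed by how far $D_0$ deviates from its $p_0$-mean.

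Second, I would translate the discriminator-level perturbation into a statement about the optimal generator density $q_\lambda$. At the outer minimization, the optimal $q_\lambda$ is characterized by the condition that plugging the optimal discriminator back into the objective and minimizing over $q$ yields a fixed-point relation. Since at $\lambda=0$ we have $q_0=p_0$, I expect a relation of the schematic form $\frac{p_0}{p_0+q_\lambda} = D_0 + \lambda\,(\text{correction depending on }D_0,p_0,q_\lambda)$, which can be inverted to express $q_\lambda - p_0$ in terms of $\lambda$ and the local behavior of $p_0$. Solving this to leading order would naively give $\|p_0-q_\lambda\|_\infty = O(\lambda)$, so the nontrivial exponent $\frac{1}{d+3}$ must come from somewhere else — most plausibly from a bias-variance decomposition where one balances a term that shrinks in $\lambda$ against a discretization or smoothing scale $h$ that must be introduced to control the variance term, with the Lipschitz assumption $\|p_0\|_{\mathrm{Lip}}<\infty$ supplying the bias estimate.

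This balancing is where I would spend the most care. The appearance of the dimension $d$ in the exponent strongly suggests a covering/localization argument: to control the variance functional $\mathbb{V}_{p_0}[\widetilde{D}]$ one partitions $\mathcal{X}$ (or the support region) into cells of side length $h$, uses the Lipschitz continuity of $p_0$ to bound the variation of $p_0$ within a cell by $O(h)$, and the number of such cells scales like $h^{-d}$. The regularization penalty then contributes a term that behaves like $\lambda\, h^{-d}$ (or $\lambda$ divided by a cell-volume factor $h^d$), while the approximation bias from treating $p_0$ as locally constant contributes $O(h)$. Optimizing $\max\{h,\ \lambda h^{-d}\}$, or more precisely balancing $h \asymp \lambda\, h^{-(d+2)}$ after accounting for the two powers lost in passing from the variance to the density, yields $h \asymp \lambda^{1/(d+3)}$ and hence the claimed rate.

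I expect the main obstacle to be making the second step rigorous: the optimal generator for a regularized GAN objective is not given by a clean closed form (unlike the $\lambda=0$ case), so I must argue that a minimizing $q_\lambda$ with the desired properties exists and that the perturbative expansion of the coupled discriminator–generator optimality system is valid uniformly in $\x$. Concretely, the variance term introduces a nonlocal coupling through the $p_0$-mean of $D$, so the ``correction'' in the optimality relation is itself an integral functional of $q_\lambda$, turning the inversion into an implicit fixed-point problem rather than an explicit formula. Handling this — likely via a contraction or continuity argument in $L^\infty$, combined with the $\mathcal{U}_\delta$ boundedness from Theorem~\ref{thm:opt-discriminator} to keep denominators bounded away from zero — together with rigorously justifying the localization scale $h$ and the optimization over it, is the technical heart of the argument.
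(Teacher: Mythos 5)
Your opening step---perturbing the cubic optimality equation to show the regularized optimal discriminator is $O(\lambda)$-close to $\frac{p_0}{p_0+q}$ in sup norm, uniformly over $q$, with denominators controlled via $\mathcal{U}_\delta$---matches the paper's actual first move, which establishes $\|D-\frac{p_0}{p_0+q}\|_\infty\leq 4\lambda(\frac{1-\delta}{\delta})^2$ by sandwiching the cubic between two quadratics, and uses it only to confine the inner maximizer to a region $\mathcal{D}_\delta$ on which $\log D$ and $\log(1-D)$ are uniformly bounded. From there, however, the paper's argument is not perturbative at all, and the mechanism you propose for the exponent is not the right one. The variance penalty is uniformly bounded ($\mathbb{V}_{p_0}[D]\leq 1/4$ for $0<D<1$), so the regularization shifts the reduced objective $G(q,\lambda)=\max_{D\in\mathcal{D}_\delta}\{L_{p_0}(q,D)-\lambda\mathbb{V}_{p_0}[D]\}$ by at most $O(\lambda)$ \emph{uniformly in $q$}; there is no $\lambda h^{-d}$ blow-up, and no covering or cell partition of the variance functional appears anywhere. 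Your balancing $h\asymp\lambda h^{-(d+2)}$ reverse-engineers the exponent but does not correspond to any quantity in the problem. Similarly, your plan to invert a first-order fixed-point relation for $q_\lambda$ cannot work as stated: the displacement of the outer minimizer is not governed by a pointwise stationarity condition but by the \emph{growth rate} of the unregularized objective around its minimizer, and in $L^\infty$ that growth is degenerate, which is precisely why the rate is $\lambda^{1/(d+3)}$ rather than $\lambda$.

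The missing idea is a coercivity (sharpness-of-minimum) estimate for the plain JS objective over the Lipschitz-constrained density class: setting $\bar{G}_\epsilon=\inf\{G(q,0)\,:\,q\in\mathcal{Q}_{\delta,C},\ \|q-p_0\|_\infty=\epsilon\}$, the paper proves $\bar{G}_\epsilon-G(p_0,0)\gtrsim\epsilon^{d+3}$ (Lemma~\ref{lemma:G_ineq}). The three powers arise as follows: a sup-norm deviation of size $\epsilon$ of a $C$-Lipschitz density must persist at level $\epsilon/2$ on a ball $\mathcal{X}_0$ of radius $\epsilon/(5C)$, of volume $\asymp\epsilon^d$---this is where the Lipschitz hypothesis and the dimension enter, not through a discretization of the penalty; on that ball the pointwise integrand $r\mapsto\frac{1}{1+r}\log\frac{1}{1+r}+\frac{r}{1+r}\log\frac{r}{1+r}$ exceeds its minimum $-\log 2$ by $\asymp\epsilon^2$, since it is quadratic near $r=q/p_0=1$; and the mass lower bound $\int_{\mathcal{X}_0}(p_0+q)\,\rmd\mu\gtrsim\epsilon\cdot\epsilon^d$ supplies the final factor of $\epsilon$. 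Convexity of $G(\cdot,0)$ upgrades the bound from the sphere $\|q-p_0\|_\infty=\epsilon$ to all $\|q-p_0\|_\infty\geq\epsilon$, and a standard argmin-stability argument---the minimizer of a function with $\epsilon^{d+3}$ growth moves by at most $O(\lambda^{1/(d+3)})$ under a perturbation that is $\lambda$-Lipschitz in $\lambda$---concludes. Without this growth estimate, your outline has no valid route from the $O(\lambda)$ perturbation of the objective to the claimed rate for $q_\lambda$; note also that the rigorous statement restricts the generator class to Lipschitz densities $\mathcal{Q}_{\delta,C}$, an assumption your sketch uses only on $p_0$ but which is essential for the localization step applied to $q$.
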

Theorem~\ref{thm:bias_Scale-GAN} suggests that the variance regularization is close to the regularization with $ L_{\infty}$-norm 
on the set of probability densities. 


\subsubsection{Estimation Error Bound}
We apply the statistical analysis of GAN introduced by~\cite{puchkin24:_rates} and \cite{belomestny23:_simul}. 
Let us assume $\mathcal{X}=[0,1]^d$. 
For i.i.d. training data $\x_1,\ldots,\x_n\sim p_0$ and i.i.d. scaling intensity $t_1,\ldots,t_n\sim \pi$, 
the empirical approximation of \eqref{eqn:Scale-GAN-loss} is given by the min-max optimization problem, 
\begin{align}
 \label{eqn:empirical-scale-GAN}
 \min_{G\in\mathcal{G}}\max_{\widetilde{D}\in\widetilde{\mathcal{D}}}\ 
 \frac{1}{n}\sum_{i}\log \widetilde{D}(s_{t_i}\x_i, t_i)
 +
 \frac{1}{n}\sum_{i}\log(1-\widetilde{D}(s_{t_i}G(\z_i), t_i))
 -\lambda \widetilde{\mathbb{V}}[\widetilde{D}], 
\end{align} 
where $\widetilde{\mathbb{V}}[\widetilde{D}]$ is an empirical approximation of the variance regularization. 
Deep neural networks (DNNs) are used for $\mathcal{\widetilde{D}}$ and $\mathcal{G}$. 
In our theoretical analysis, we consider DNNs with the rectified quadratic activation function (ReQU), 
which is defined by the square of ReLU. 
Learning using ReQU-based DNNs enables us to simultaneously approximate function value and the derivative, which is the preferable property for our purpose; see \citep{puchkin24:_rates} for details. 

The set of probability densities corresponding to $\mathcal{G}$ 
is denoted by $\mathcal{Q}$, 
i.e., the set of the push-forward probability density of $p_z$ by $G\in\mathcal{G}$. 
Let $\widehat{q}_\lambda\in\mathcal{Q}$ be the probability density of the trained generator. 
The estimation accuracy is evaluated by the JS divergence, 
\begin{align*}
\mathrm{JS}(p_0,\widehat{q}_\lambda)=\frac{1}{2}\mathbb{E}_{p_0}[\log\frac{2p_0}{p_0+\widehat{q}_\lambda}]
+\frac{1}{2}\mathbb{E}_{\widehat{q}_\lambda}[\log\frac{2\widehat{q}_\lambda}{p_0+\widetilde{q}_\lambda}]. 
\end{align*}
As the function class of $p_0$ and $G_0$, 
let us consider the H\"{o}lder class of order $\beta$, i.e., $\mathcal{H}^\beta(\mathcal{X})$ endowed the norm
$\|\cdot\|_{\mathcal{H}^\beta}$. 
Let us define
\begin{align*}
&\mathcal{H}^\beta(\mathcal{X},B)=\{f\in\mathcal{H}^\beta(\mathcal{X})|\|f\|_{\mathcal{H}^\beta}\leq B\}, \\
\text{and}\ &\ 
\mathcal{H}_\Lambda^\beta(\mathcal{X},B)=\{f\in\mathcal{H}^\beta(\mathcal{X},B)|\Lambda^{-2}I_{d\times d}\preceq\nabla_x f^T\nabla_xf\preceq\Lambda^{2}I_{d\times d}\}
\end{align*}
accordig to \cite{puchkin24:_rates}. 
Note that $G_0\in\mathcal{H}^{1+\beta}_\Lambda(\mathcal{X})$ leads to $p_0\in\mathcal{H}^{\beta}(\mathcal{X})$. 
Without loss of generality, we assume that $\pi(t)$ is the uniform distribution on $[0,1]$ for the continuous scaling intensity. 
Suppose  $\sup_{\widetilde{D}\in\widetilde{\mathcal{D}}}\|\widetilde{D}\|_{\mathrm{Lip}}\leq L$ for a constant $L>0$. 

\begin{theorem}
\label{thm:estimation_error} 
 Suppose that the generator $G_0$ of the data distribution $p_0\in\mathcal{H}^{\beta}(\mathcal{X})$ 
 satisfies $G_0\in\mathcal{H}_{\Lambda}^{1+\beta}([0,1]^d, H_0)$ for $\beta>2, H_0>0$ and $\Lambda>1$. 
 For the scaling function $t\mapsto s_t$, 
 suppose 
 $1/s\in\mathcal{H}^\alpha([0,1])$ and $L\|s\|_{\mathcal{H}^1} \lesssim n^{c'}$ for  constants $\alpha >2$ and $c'>0$. 
 Then, there exist ReQU-based DNNs, $\mathcal{G}$ and $\widetilde{\mathcal{D}}$, 
 such that the following holds with high probability greater than $1-\delta$, 
\begin{align}
 \mathrm{JS}(p_0,\widehat{q}_\lambda)
& \lesssim
 \bigg[\max_{q\in\mathcal{Q}}\|\frac{p_0}{p_0+q}\|_{\mathcal{H}^\beta} -
 \frac{1}{8\sqrt{d}}\cdot\frac{L}{\|1/s\|_{\mathcal{H}^\alpha}}
 \bigg]_+^2
 + \bigg\{\bigg(\frac{L}{\|1/s\|_{\mathcal{H}^\alpha}}\bigg)^2 +c\bigg\}
 \bigg(\frac{\log{n}}{n}\bigg)^{\frac{2\beta}{2\beta+d}}
 +\frac{\log(1/\delta)}{n} + \lambda. 
\label{eqn:estimation-accuracy-DataScaling}
\end{align}
 In the above, $c$ is a positive constant depending only on $d, \beta, \Lambda, c'$ and $H_0$. 
\end{theorem}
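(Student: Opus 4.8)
The plan is to follow the statistical analysis of GAN estimation developed in \cite{puchkin24:_rates} and \cite{belomestny23:_simul}, adapted to the scaled empirical objective~\eqref{eqn:empirical-scale-GAN}. The first step is an oracle decomposition of $\mathrm{JS}(p_0,\widehat q_\lambda)$ into three contributions: (a) a generator approximation error measuring how well the class $\mathcal{Q}$ captures $p_0$; (b) a discriminator approximation error measuring how well the Lipschitz-constrained ReQU class $\widetilde{\mathcal{D}}$ (with $\|\widetilde{D}\|_{\mathrm{Lip}}\le L$) reproduces the optimal discriminators $\{D^*_{p_0,q}:q\in\mathcal{Q}\}$; and (c) a stochastic error from replacing population expectations by empirical averages uniformly over $\mathcal{G}\times\widetilde{\mathcal{D}}$. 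Because $\widetilde{D}$ is bounded, the regularizer $\lambda\widetilde{\mathbb{V}}[\widetilde{D}]$ changes the objective value by only $O(\lambda)$, and since $\mathrm{JS}(p_0,q)$ equals $\max_{D}L_{p_0}(G,D)$ up to an additive constant, this perturbation contributes the additive $\lambda$ in~\eqref{eqn:estimation-accuracy-DataScaling}. I would take $\mathcal{G}$ large enough that the generator term (a) is dominated by the others, using the ReQU approximation results of \cite{puchkin24:_rates} together with the hypothesis $G_0\in\mathcal{H}^{1+\beta}_\Lambda([0,1]^d,H_0)$.

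The core of the argument is controlling (b) in the presence of data scaling. By Theorem~\ref{thm:opt-discriminator}, the optimal discriminator for the scaled problem satisfies $\widetilde{D}^*(s_t\x,t)=\widetilde{D}^*(\x,0)$, so on $\mathcal{X}\times[0,T]$ it equals $(\y,t)\mapsto D^\#(\y/s_t)$ for a single $d$-variable profile $D^\#$ whose regularity is that of $p_0/(p_0+q)$ up to an $O(\lambda)$ perturbation coming from the variance regularization (which I absorb into the $\lambda$ term). A composition/product estimate for H\"older norms then gives $\|\widetilde{D}^*\|_{\mathcal{H}^\beta(\mathcal{X}\times[0,T])}\lesssim\|D^\#\|_{\mathcal{H}^\beta(\mathcal{X})}\,\|1/s\|_{\mathcal{H}^\alpha([0,T])}$, while $\|\widetilde{D}^*\|_{\mathrm{Lip}}\asymp\|D^\#\|_{\mathrm{Lip}}\,\|1/s\|_\infty$. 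Since the discriminator class is constrained by $\|\widetilde{D}\|_{\mathrm{Lip}}\le L$, the best achievable approximation corresponds to a profile whose derivative is effectively clipped at level $L/\|1/s\|_{\mathcal{H}^\alpha}$, so the irreducible error is of order $\bigl[\max_{q}\|p_0/(p_0+q)\|_{\mathcal{H}^\beta}-\tfrac{1}{8\sqrt d}\,L/\|1/s\|_{\mathcal{H}^\alpha}\bigr]_+$. This term appears squared because, near the optimum, the GAN objective and hence the JS divergence depend quadratically on the discriminator error; the constant $\tfrac{1}{8\sqrt d}$ records the passage between the Euclidean Lipschitz norm and the coordinatewise H\"older norm in dimension $d$ and is inherited from \cite{puchkin24:_rates}.

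For the stochastic error (c), I would bound the uniform deviation between the empirical and population objectives over $\mathcal{G}\times\widetilde{\mathcal{D}}$ by covering-number and Rademacher-complexity estimates for ReQU networks, in which the effective Lipschitz budget $L/\|1/s\|_{\mathcal{H}^\alpha}$ enters the complexity and yields the prefactor $(L/\|1/s\|_{\mathcal{H}^\alpha})^2+c$. A localization (peeling) argument exploiting the local quadratic behavior of the JS divergence, balanced against the ReQU approximation rate for $D^\#$, gives the rate $(\log n/n)^{2\beta/(2\beta+d)}$ (the extra scaling variable $t$ costing nothing at leading order because $\widetilde{D}^*$ factors through the $d$-variable profile $D^\#$), and a Bernstein-type concentration adds $\log(1/\delta)/n$ on the high-probability event. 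Balancing the network sizes of $\mathcal{G}$ and $\widetilde{\mathcal{D}}$ against these stochastic terms produces~\eqref{eqn:estimation-accuracy-DataScaling}. I expect the main obstacle to be step (b): propagating the scaling function $s_t$ simultaneously through the H\"older norm and the Lipschitz constraint so that the clipped-derivative approximation error takes exactly the stated $[\cdot]_+$ form, and verifying that the variance-regularized optimal discriminator of Theorem~\ref{thm:opt-discriminator} differs from the classical $p_0/(p_0+q)$ only by a quantity that is genuinely $O(\lambda)$ in the norms driving the approximation argument.
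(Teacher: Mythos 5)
Your proposal follows essentially the same route as the paper: the oracle decomposition from \cite{puchkin24:_rates} into generator error, discriminator error, and stochastic error; the additive $\lambda$ from boundedness of the variance penalty; the factorization of the discriminator as a $d$-variable profile composed with $(\x,t)\mapsto \x/s_t$, which both produces the $[\,\cdot\,]_+$ bias term from the Lipschitz budget $L$ divided by $\|1/s\|_{\mathcal{H}^\alpha}$ and reduces the effective input dimension from $d+1$ back to $d$; and the squaring of the discriminator error via the local quadratic behavior of the objective. The one substantive difference is how the regularizer interacts with the discriminator-approximation step. You propose to approximate the variance-regularized optimal discriminator of Theorem~\ref{thm:opt-discriminator} and then argue it differs from $p_0/(p_0+q)$ by $O(\lambda)$ ``in the norms driving the approximation argument,'' and you correctly flag this as your main anticipated obstacle: the paper's Lemma only controls that perturbation in sup-norm, and controlling its H\"{o}lder norm would be genuinely awkward. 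The paper avoids this entirely by defining the discriminator approximation error $\Delta_{\widetilde{\mathcal{D}}}$ relative to the \emph{unregularized} target $D_q=p_0/(p_0+q)$ and confining the regularizer's entire effect to the single inequality $0\leq\widetilde{\mathbb{V}}[\widetilde{D}]\leq 1$ inside the empirical comparison term, which yields the additive $+\lambda$ and nothing else; your own first paragraph already contains this observation, so the obstacle you anticipate in the second paragraph never needs to be confronted. A minor further discrepancy: you attribute the prefactor $(L/\|1/s\|_{\mathcal{H}^\alpha})^2$ on the rate $(\log n/n)^{2\beta/(2\beta+d)}$ to the stochastic complexity of the discriminator class, whereas in the paper it arises as the remainder of the discriminator approximation error (the $L^2 H_s^{-2}K^{-2\beta}$ term with $K=(n/\log n)^{1/(2\beta+d)}$), the constant $c$ absorbing the generator approximation and the parameter-counting complexity; this does not change the final bound but does change which lemma you would need to prove.
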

The proof is shown in Section~\ref{appendix:Theorem_estimation_error}. 
Theorem~\ref{thm:estimation_error} indicates that 
$s_t$ controls the bias(1st term)-variance(2nd term) trade-off in the estimation error bound. 
The order of the variance term coincides with the min-max optimal rate for the class of probability densities in $\mathcal{H}^{\beta}(\mathcal{X})$. In this case, however, the class of $p_0$ is slightly restricted by the push-forward 
with $G_0$. 
The estimation error for the vanilla GAN is recovered from $\lambda=0$ and $s_t=1$~\citep{puchkin24:_rates}. 
The scaling with a large $\|1/s\|_{\mathcal{H}^\alpha}$, such as $s_t=e^{-Mt}$ with $1\ll M\leq n^{c'}/L$, leads to a small variance and a large bias. 
When $L$ is a large constant, and the first term of the upper bound vanishes, 
the data augmentation with scaling will improve the estimation accuracy. 


\subsubsection{Invertible Data Augmentation} 

Let us consider an extension of data scaling. 
A remarkable property of data scaling is that it is an invertible transformation.
For invertible data augmentations (DAs), some properties similar to those of Scale-GAN hold, including 
the expression of the optimal discriminator in Theorem~\ref{thm:opt-discriminator}, 
the effect of variance regularization in Theorem~\ref{thm:bias_Scale-GAN}, 
and the invariance of gradient direction shown in Section~\ref{subsubsec:Invariance_Gradient_Direction}. 
Theorem~\ref{thm:estimation_error} concerning estimation accuracy also holds with minor modifications.

The invertible data augmentation we consider here is formulated as follows. 
Let $S_t:\mathcal{X}\rightarrow\mathcal{X}$ be a transformation on $\mathcal{X}$ with the parameter $t\in{T}$ such that 
$S_t^{-1}(S_t(\x))=\x$ holds for $\x\in\mathcal{X}$ with inverse map $S_t^{-1}$. 
In what follows, parentheses for the transformation are omitted; that is, we use the notation 
$S_t\x$ or $S_t^{-1}\x$ for simplicity. 
In the learning process, the data scaling is replaced with the data transformation with $S_t$
meaning that the pair $(S_t\x,t)$ is substituted into the discriminator in \eqref{eqn:empirical-scale-GAN}. 
Such a learning procedure is referred to as \emph{generalized Scale-GAN}. 
Detailed analysis of the generalized Scale-GAN is presented in Section~\ref{appendix:Invertible_Data_Augmentation}. 

The estimation accuracy of the generalized Scale-GAN is roughly given by 
\begin{align}
 \mathrm{JS}(p_0,\widehat{q}_\lambda)
& \lesssim
 \bigg[\max_{q\in\mathcal{Q}}\|\frac{p_0}{p_0+q}\|_{\mathcal{H}^\beta} -
 \frac{1}{8\sqrt{d}}\cdot\frac{L}{\mathcal{J}_{\mathrm{DA}}}
 \bigg]_+^2
 + \bigg\{\bigg(\frac{L}{\mathcal{J}_{\mathrm{DA}}}\bigg)^2 +c\bigg\}
 \bigg(\frac{\log{n}}{n}\bigg)^{\frac{2\beta}{2\beta+d}}
\label{eqn:estimation-accuracy-InvertibleDA}
\end{align}
with high probability, 
where $\mathcal{J}_{\mathrm{DA}}$ is the coefficient depending on the data augmentation employed the learing
algorithm. 
For the invertible DA $S_t, t\in{T}$, let us define $\phi_{S^{-1}}(\x,t)={S_t}^{-1}\x\in\mathcal{X}$. 
For the single DA, $\{S_t\,:\,t\in{T}\}$, suppose
$\phi_{S^{-1}}\in\mathcal{H}^{\alpha}(\mathcal{X}\times{T}), \alpha>2$. Then, we have 
\begin{align*}
\mathcal{J}_{\mathrm{DA}}=\frac{1+\|\phi_{S^{-1}}\|_{\mathcal{H}^\alpha}}{2}. 
\end{align*}
Furthermore, suppose that $B$ invertible data augmentations, $S_{i,t_i},t_i\in{T}_i, i=1,\ldots,B$ such that 
$\phi_{S_{i}^{-1}}\in\mathcal{H}^{\alpha_i}(\mathcal{X}\times{T_i}), \alpha_i>2$, are used. 
In this case, the augmented data, $S_{i,t_i}\x,\, i\in[B], t_i\in{T}_i$, is fed into the GAN algorithm, where
the index $i$ and the parameter $t_i$ are randomly selected. 
Then, we have 
\begin{align*}
\mathcal{J}_{\mathrm{DA}}=\frac{\sqrt{B}+\max_{i\in[B]}\|\phi_{{S_i}^{-1}}\|_{\mathcal{H}^{\alpha_i}}}{2}. 
\end{align*}

\begin{example}[Scaling]
For the data-scaling, $\x\mapsto S_t\x=s_t\x$, the map $\phi_{S^{-1}}$ is defined by
$\phi_{S^{-1}}(\x,t)=\x/s_t$. Hence, the H\"{o}lder norm is
 $\|\phi_{S^{-1}}\|_{\mathcal{H}^{\alpha}}=\|1/s\|_{\mathcal{H}^{\alpha}}$, 
which leads to 
$\|1/s\|_{\mathcal{H}^{\alpha}}/2\leq 
\mathcal{J}_{\mathrm{DA}}=(1+\|1/s\|_{\mathcal{H}^{\alpha}})/2
\leq \|1/s\|_{\mathcal{H}^{\alpha}}$. 
\end{example}

\begin{example}[$\pi/2$ rotation]
 The $\pi/2$ rotation is realized by the permutation on pixels. 
 Suppose the pixel of the image is indexed by $(i,j)$ such that
 $(i,j)\in\mathbb{Z}\times\mathbb{Z}, |i|,|j|\leq v$. 
 Then, $\pi/2$ rotation of the image 
 maps $(x_{i,j})$ to $(x_{i,j}')=(x_{-j,i})$, 
 which is simply a permutation. Thus, the $\pi/2$ rotation is invertible. 
 Similarly, $\pi$ and $3\pi/2$ rotations are also invertible transformations. 
 The rotation with a fixed angle $\pi/2$ is represented by $S_t$ with the singleton $T=\{\pi/2\}$. 
 Then, the H\"{o}lder norm of $\phi_{S^{-1}}$ is $\|\phi_{S^{-1}}\|_{\mathcal{H}^{\alpha}}=1$. 
 When rotations with $0, \pi/2, \pi$, and $3\pi/2$ angles are used, 
 we have $\mathcal{J}_{\mathrm{DA}}=(\sqrt{4}+1)/2=1.5$. 
\end{example}

\begin{example}[Saturation]
 Each pixel of the RGB image, $(r,g,b)\in[-1,1]^3$, is mapped to $(r',g',b')$ by the saturation transformation 
 $(r,g,b,1)\mapsto (r',g',b',\ast)=(t I_4+(1-t)\bm{v}\bm{v}^T)(r,g,b,1)^T$, where $t$ is a positive parameter
 sampled from $\pi(t)$ and $\bm{v}\in\Rbb^4$ is a fixed unit vector~\citep{karras2020training}. 
 The log-normal distribution often applies as $\pi(t)$. 
 Here, we assume $t$ is bounded below by a positive constant, i.e., $t\geq t_{\min}>0$. 
 At each pixel, the inverse map is defined by the linear transformation with the matrix
 $t^{-1}I_4+(1-t^{-1})\bm{v}\bm{v}^T$. Then, the H\"{o}lder norm of $\phi_{S^{-1}}(\x,t)$ is 
 of the order $\|\phi_{S^{-1}}\|_{\mathcal{H}^{\alpha}}=O((1/t_{\min})^{\alpha+1})$. 
 The coefficient $\mathcal{J}_{\mathrm{DA}}$ is also $O((1/t_{\min})^{\alpha+1})$. 
\end{example}

The data augmentation using only pixel blitting, such as the rotation, does not significantly impact on the estimation accuracy. 
On the other hand, the color formation such as data-scaling or saturation can greatly change the balance between the bias and the variance of the estimation error bound.

\section{Numerical Experiments}
\label{Numerical_Experiments}

We first investigate the effects of the variance regularization using the synthetic dataset in Section~\ref{sec:Effect_of_Data_Scaling_and_NoiseInjection_in_GANs}. 
The results are shown in Section~\ref{appendix:Scale-wise_Variance_Reg}. 
Overall, the regularized Scale-GAN with an appropriate $\lambda$, such as $\lambda=0.1$ to $0.5$, outperforms non-regularized Scale-GAN with $\lambda=0$. 
However, the recall for the learning with strong regularization, such as $\lambda=5, 10$, is degraded, meaning that the generator misses some modes. 

Next, in Sections~\ref{subsec:Image_Generation} and
\ref{subsec:Additional_Numerical_Studies}, 
we examine the effectiveness of the proposed method using some benchmark datasets. 
Based on the above result, the regularization parameter $\lambda$ is set to around $0.1$. 
Though the benchmark data is much larger than the above synthetic data, 
we show that the Scale-GAN with the regularization parameter in the above properly works.

\subsection{Image Generation}
\label{subsec:Image_Generation}
The effectiveness of the proposed method is confirmed in image generation, which is the standard task in GAN.
The three datasets, {\tt CIFAR-10}, {\tt STL-10}, and {\tt LSUN-Bedroom} are used. 
{\tt CIFAR-10}~\citep{krizhevsky2009learning} consists of 
50k images of the size $32\times32\times3$ in 10 classes. 
{\tt STL-10}~\citep{coates2011analysis} consists of 100k images of the size $64\times 64\times3$ in 10 classes.  
While the original resolution of {\tt STL-10} is $96\times96\times3$, we resized it to compare with past studies. 
{\tt LSUN-Bedroom}~\citep{yu16:_lsun} includes 200k images of bedrooms of the size $256\times 256\times3$. 
We assess the image quality in terms of sample fidelity (FID) and sample diversity (recall), 
which are used in~\cite{wang2022diffusion}. 

We compare our method with StyleGAN2~\citep{karras2020analyzing}, 
StyleGAN2+DiffAug~\citep{zhao2020differentiable}, 
StyleGAN2+ADA~\citep{karras2020training}, and Diffusion-GAN with StyleGAN2~\citep{karras2020analyzing}. 
Also we use StyleGAN2 in Scale-GAN. 

Hyperparameters for existing methods are defined as the same as those in~\cite{wang2022diffusion}. 
Also, our method uses almost the same hyperparameters, while some of them are adjusted based on preliminary experiments.  
The detailed hyperparameters are summarized in Section~\ref{appendix:Hyperparameters_Image_Generation}. 
The distribution of the scaling intensity, $\pi_0$, on $\{1,\ldots,T\}$ has two choices, ``uniform'' and ``priority''.
The ``uniform'' distribution is defined by $\pi_0(t)=1/T$ used in Section~\ref{subsec:Scaling_Strategy}. 
Note that the end point $T$ can change from iteration to iteration according to the scaling strategy. 
The ``priority'' distribution is defined by $\pi_0(t)\propto t$, which is recommended in~\cite{wang2022diffusion}. 
Hence, the priority distribution is used in Diffusion-GAN. 
On the other hand, in our method, the uniform distribution is used for {\tt CIFAR-10} and {\tt STL-10}, 
and the priority distribution is used for {\tt LSUN-Bedroom}. 
The hyperparameter $d_{\mathrm{target}}$ in the adaptive strategy of $T$ is determined according to~\cite{wang2022diffusion}. 
Furthermore, $T$ is clipped to the interval $[T_{\min}, T_{\max}]$. 
The scaling function $s_t$ is determined by $\beta_0$ and $\beta_T$; see Section~\ref{subsec:Scaling_Strategy}. 
In Diffusion-GAN, $\sigma_{\mathrm{noise}}=0.05$, $\beta_0=0.0001$ and $\beta_T=0.02$ are used. 
In our method, a larger $\beta_T$ properly works compared to Diffusion-GAN. 

Table~\ref{numerical_result_all} shows the results of image generation on each benchmark dataset. 
The upper four methods are cited from the numerical results in \cite{wang2022diffusion},
and the lower two methods show the numerical results conducted using our computation environment. 
In terms of the FID score, our method outperforms the other methods by a clear margin.  
Also, the recall of our method attains higher values than the others. 

The computational cost per iteration for Scale-GAN is approximately equivalent to that of Diffusion-GAN. For the same computation time, Scale-GAN outperforms Diffusion-GAN in data-generation quality. 
For instance, on the LUSN-Bedroom dataset, Scale-GAN achieves the optimal FID score for Diffusion-GAN in roughly one-fourth of the computational time.

\begin{table*}[t!]
 \caption{Image generation results on benchmark datasets: CIFAR-10, STL-10, and LSUN-Bedroom. 
 We highlight the best and second best results in each column with bold and underline, respectively.
 Lower FIDs indicate better fidelity, while higher Recalls indicate better diversity. 
 } \label{numerical_result_all} \vspace*{-1mm}
 \centering\hspace*{-7mm} 
  \begin{tabular}{p{18em}cccccccc}
                     & \multicolumn{2}{c}{CIFAR-10}         && \multicolumn{2}{c}{STL-10}            && \multicolumn{2}{c}{LSUN-Bedroom}\\
                     & \multicolumn{2}{c}{(32 $\times$ 32)} && \multicolumn{2}{c}{(64 $\times$ 64)}  && \multicolumn{2}{c}{(256 $\times$ 256)}\\
                      \cline{2-3}  \cline{5-6}  \cline{8-9} 
   \multicolumn{1}{c}{{\bf Methods}} & FID $\downarrow$ &Recall $\uparrow$ && FID $\downarrow$ &Recall $\uparrow$  && FID $\downarrow$ &Recall $\uparrow$  \\
    \hline 
 StyleGAN2
           & 8.32 & 0.41   && 11.70 & \underline{0.44}   && 3.98 & \underline{0.32} \\
 StyleGAN2+DiffAug
   & 5.79 & 0.42   && 12.97 & 0.39   && 4.25 & 0.19 \\
 StyleGAN2+ADA
   &\underline{2.92}  & 0.49	 && 13.72 & 0.36   && 7.89 & 0.05 \\
 Diffusion-GAN
   & 3.19 & \underline{0.58}	 && \underline{11.43} & {\bf 0.45}   && 3.65 & \underline{0.32} \\
    \hline							 				         
 Diffusion-GAN(our re-experiment)                    & 3.39 & 0.57   && 11.53 & {\bf 0.45}   && \underline{3.53} & 0.25 \\
 Scale-GAN(proposal)                            & {\bf 2.87} & {\bf 0.60}   && {\bf 11.37} & {\bf 0.45}   &&  {\bf 1.90} & {\bf 0.36} \\  
   \hline
  \end{tabular}
\end{table*}

\subsection{Additional Experiments on CIFAR-10}
\label{subsec:Additional_Numerical_Studies}

\paragraph{Gradient Direction.}
As shown in Section~\ref{subsec:Theoretical_Analysis}, 
if the optimal discriminator is obtained at each learning iteration, 
the gradient direction passed to the generator is independent of the scaling. 
This property holds when the expectation in the objective function is exactly calculated. 
In the beginning, let us compare the gradient direction of the proposed method with that of Diffusion-GAN 
in the empirical situation. 
The effect of data augmentation, $\x\mapsto\widetilde{\x}$, at the scaling intensity $t$ 
on the gradient direction is evaluated by the averaged cosine similarity
between $\nabla{}\widetilde{D}(\widetilde{\x},t)$ and $\nabla{}\widetilde{D}(\x,0)$ for the distribution of $(\x, \widetilde{\x}, t)$. 
Fig~\ref{mean_cosine_similarity_cifar10} shows the result on CIFAR-10. 
For Diffusion-GAN, the intensity of diffusion is set to $\sigma_{\mathrm{noise}}=0.05$. 
In both learning methods, $\pi_0$ is the uniform distribution on $\{1,\ldots,T\}$, 
and the adaptive strategy is used for the scaling. 
Overall, Scale-GAN provides a gradient closer to the gradient of the original data than Diffusion-GAN does. 
This indicates that the discriminator provides a gradient direction in which the generator's learning will likely progress even under data augmentation. 
Therefore, Scale-GAN is expected to mitigate the imbalance of the convergence speed between the discriminator and generator
more efficiently than Diffusion-GAN. 

\paragraph{Setting: Effect of Each Component.} 
Table~\ref{ablation_cifar10_result} shows the result of numerical experiments for some modified learning methods; 
  i) StyleGAN2 with noise injection by $\widetilde{\x}=\x+\sqrt{1 - s_t^2}\sigma_{\mathrm{noise}} \bm{\epsilon}$, 
 ii) Diffusion-GAN with $\beta_T=0.028$ that is larger than $\beta_T$ used in Table~\ref{numerical_result_all}, 
iii) Diffusion-GAN with the variance regularization ($\beta_T=0.02, \lambda=0.1$), 
 iv) Scale-GAN with a modified variance regularization ($\beta_T=0.028, \lambda=0.1$), and 
  v) Scale-GAN without variance-based regularization ($\beta_T=0.028, \lambda=0$). 
The modified variance regularization is the regularization using second order moment with a fixed mean, $\lambda \mathbb{E}[(\widetilde{D}-1/2)^2]$. 
In the vanilla GAN, 
the optimal solution of the discriminator is the constant function $1/2$ when the generator is correctly specified. 
This fact motivates the usage of the modified variance regularization. 
Even using the modified variance regularization, the same theoretical properties in Section~\ref{subsec:Theoretical_Analysis} hold for Scale-GAN. 
Examined methods i), ii), and iii), include noise injection.

\paragraph{Noise Injection and Data Scaling.} 
For StyleGAN2, the noise injection efficiently works. 
Indeed, the usefulness of the noise injection has been intensively studied in~\cite{feng2021understanding} and other works. 
However, its performance is not as high as 
Diffusion-GAN and Scale-GAN. 
Diffusion-GAN with a stronger scaling yields no improvement in accuracy. 

\paragraph{Regularization.} 
When Diffusion-GAN with the variance regularization and the original Diffusion-GAN are compared, 
the regularization does not improve the accuracy as much as Scale-GAN. 
As shown in v) of Table~\ref{ablation_cifar10_result}, 
Scale-GAN without regularization efficiently works compared to the existing methods. 
Furthermore, we see that the effect of the regularization on the FID score is significant 
when Scale-GAN w/o Var-reg and Scale-GAN are compared. 

\paragraph{Modified Regularization.} 
The modified regularization's effect is expected to be similar to the variance regularization in Scale-GAN. 
Table~\ref{ablation_cifar10_result} shows that the accuracy of Scale-GAN with modified Var-reg is close to that of Scale-GAN. 
However, the modified regularization requires a longer training time to achieve high accuracy; 
see Section~\ref{appendix:subsec:modified_reg}. 
The slow learning is thought to be due to the effect of forcing the discriminator's output to be close to $1/2$. 
This fact especially affects the learning behavior when the adaptive strategy 
is employed for data scaling. 
In this case, $r_d$ tends to become smaller even in overfitting, and $T$, which determines the scaling magnitude, does not become larger. 
As a result, the modified regularization does not sufficiently address the overfitting in the middle stage of learning, which is interpreted as a lack of good gradients and progress in learning.

\begin{figure}[t!]
 \centering 
 \includegraphics[height=70mm]{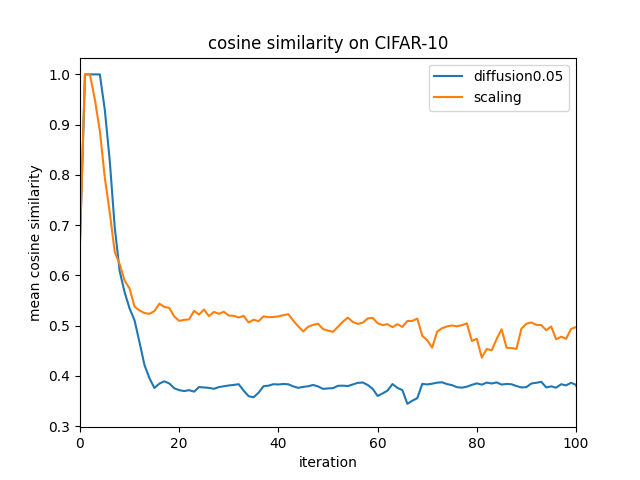}\vspace*{-2mm}
   \caption{Averaged cosine similarity between $\nabla{D}(\widetilde{\x},t)$ and $\nabla{D}(\x,0)$ at each iteration on CIFAR-10. 
 Blue solid line is Diffusion-GAN with $\sigma_{\mathrm{noise}}=0.05$, and orange solid line is Scale-GAN. }
   \label{mean_cosine_similarity_cifar10}
\end{figure}

  \hfill
\begin{table}
   \caption{
 Image generation results on CIFAR-10 for various learning methods. 
 The FID and recall values for StyleGAN2 and Diffusion-GAN in the lower part are identical to those in 
 Table~\ref{numerical_result_all}, as cited from the numerical results in \cite{wang2022diffusion}. 
 The result for Scale-GAN from Table~\ref{numerical_result_all} is also included. ``Var-reg'' denotes variance
 regularization, while ``modified Var-reg'' refers to its modified version.
  } \label{ablation_cifar10_result}
  \centering\vspace*{2mm}
  \begin{tabular}{clcc}
   \multicolumn{2}{c}{{\bf Methods}}    & FID $\downarrow$ & Recall $\uparrow$\\ \hline 
 i)  & StyleGAN2 w/ noise injection     & 3.87 & 0.56 \\
 ii) & Diffusion-GAN: $\beta_T=0.028$   & 3.39 & 0.57 \\
 iii)& Diffusion-GAN w/ Var-reg.        & 3.18 & 0.57 \\ 
 iv) & Scale-GAN w/  modified Var-reg.  & 2.89 & 0.57 \\ 
 v)  & Scale-GAN w/o Var-reg.           & 3.11 & 0.59 \\ \hline 
 &StyleGAN2
	   & 8.32 & 0.41 \\
 &Diffusion-GAN:
       $\beta_T=0.02$ & 3.19 & 0.58 \\
 &Scale-GAN                                              & 2.87 & 0.60 \\ \hline
  \end{tabular}
\end{table}

\section{Conclusions}
\label{Conclusions}
This paper demonstrates, both theoretically and experimentally, that data scaling enhances learning stability in generative models. We show that data scaling regulates the bias-variance trade-off, improving estimation accuracy. Our findings offer valuable insights for designing effective scaling strategies to train generative models.

\appendix




\section{Related Works}
\label{appendix:sec:Related_Works}

\renewcommand{\theequation}{A.\arabic{equation}}
\renewcommand{\thefigure}{A.\arabic{figure}}
\renewcommand{\thetable}{A.\arabic{table}}
\renewcommand{\thetheorem}{A.\arabic{theorem}}
\setcounter{equation}{0}
\setcounter{figure}{0}
\setcounter{table}{0}

While GAN can generate high-quality images, learning instability and mode collapse have long been problems. 
Numerous methods have been proposed to solve these problems.

\subsection{Loss functions, Regularization and Data Augmentation for GANs}
\label{appendix:relatedworks_loss}
Modifying loss functions is a possible approach to overcoming the instability problems. 
The loss function of the vanilla GAN~\citep{goodfellow2014generative} is the Jensen-Shannon (JS) divergence, which often fails to learn a generator if the training data and generated data are easily separated. 
Toward the remedy of the instability issue, discrepancy measures such as Wasserstein distance~\citep{arjovsky2017wasserstein}, squared loss~\citep{mao2017least}, hinge loss~\citep{miyato2018spectral,brock2018large,zhang2019self} and maximum mean discrepancy~\citep{li2017mmd} have been studied and shown to contribute to stabilization. 
As shown in Wasserstein-GAN~\citep{arjovsky2017wasserstein}, 
Lipschitz continuity of the discriminator plays a central role in stabilizing the learning process. 
By adding a penalty constraining the gradient of the discriminator to the objective function, Lipschitz continuity is prone to be satisfied~\citep{gulrajani2017improved,kodali2017convergence,adler2018banach,petzka2018regularization,xu2021towards,mescheder2018training,zhou2019lipschitz,thanh2018improving}. 
In addition, \cite{miyato2018spectral} proposed spectral normalization, 
which guarantees Lipschitz continuity by constraining the spectral norm of the weights. 
As an other approach, Adaptive Instance Normalization~\citep{huang2017arbitrary}, proposed in the context of style transformation, is used in styleGAN~\citep{karras2019style} and attracted much attention for its significant contribution to improving image generation performance. 
The instability of GANs in the early stages of learning is considered to be partly due to the lack of overlap between the support of the generated samples and that of the training data. To address the problem, \cite{roth2017stabilizing} proposed a method for extending the support of the distribution by adding noise. Noise injection as a data augmentation is expected to bring support closer together and suppress the overfitting of the discriminator. 
However, suitable noise distribution heavily depends on the data domain, and thus, the noise injection is hard to implement in practice, as pointed out by~\cite{arjovsky2017towards}.

\subsection{Fusion of GAN and Diffusion Models}
\label{appendix:relatedworks:fusion}
In recent years, the diffusion model has also attracted considerable attention as a model that generates natural high-resolution images~\citep{ho2020denoising,song19:_gener_model_estim_gradien_data_distr}. 
The diffusion model approximates the inverse process of the diffusion process, 
in which noise is added to the data, with a neural network, and generates data by repeated denoising. 
Since the learning of the diffusion model is formulated as the minimization problem of the loss function, such as the denoising score matching loss~\citep{song19:_gener_model_estim_gradien_data_distr}, it can be trained more stably than GANs, which makes it easier to apply to large data sets. 
However, in the sampling phase, hundreds to thousands of DNN's feedforward passes are required for a single image. 
Recent developments have enabled us to reduce 
the number of DNN's computations to about ten 
 iterations by using distillation~\citep{salimans2021progressive} and higher-order differential equation approximation methods~\citep{lu2022dpm, zheng2023dpm}. 
On the other hand, GANs usually need only one feed-forward pass for data generation. 
To combine the advantages of GANs and diffusion models, some frameworks have been recently developed. 
The Diffusion-GAN~\citep{wang2022diffusion}, inspired by the diffusion model, injects noise similar to the DDPM~\citep{ho2020denoising} into GAN's training process to achieve stable computation and high-quality image generation. 
\cite{xiao2021tackling} proposed Denoising Diffusion GAN that uses GAN's discriminator in the diffusion model to achieve both high accuracy and fast sampling. 
Besides the diffusion-GAN, the fusion of GANs and diffusion models have been considered by
\cite{zheng2022truncated,yin2024one,sauer2023adversarial,kim2023consistency,yin2024improved} to improve the
efficiency, speed, and quality of generative models, mainly focusing on diffusion models and distillation
techniques to reduce computational costs. 
and distillation techniques to reduce computational costs.


\section{Additional Numerical Studies to Section~\ref{sec:Effect_of_Data_Scaling_and_NoiseInjection_in_GANs}}
\label{appendix_DataScaling_NoiseInjection_GAN}

\renewcommand{\theequation}{B.\arabic{equation}}
\renewcommand{\thefigure}{B.\arabic{figure}}
\renewcommand{\thetable}{B.\arabic{table}}
\renewcommand{\thetheorem}{B.\arabic{theorem}}
\setcounter{equation}{0}
\setcounter{figure}{0}
\setcounter{table}{0}

Numerical experiments in Section~\ref{sec:Effect_of_Data_Scaling_and_NoiseInjection_in_GANs} are conducted with more seeds. 
The results are shown in Fig~\ref{appendix:seed-wise-simulations}. Each row corresponds to each seed. 
The panels (a), (b), and (c) show the results for four types of data scaling, and 
panels (d) and (e) show the results for three scaling strategies. 

For the scaling, $s=0.25, 0.5, 1, 1.5$, 
the discriminator $D_\phi$ and the scaled generator $s G_\theta$ are trained by the original GAN~\citep{goodfellow2014generative} 
using $\{\y_i\}_{i=1}^{80},\,\y_i=s\x_i$ as true data. 
In Fig~\ref{appendix:seed-wise-simulations}, the panels (a) and (b) show precision and recall at each scaling for different seeds. 
The learning with $s=0.25$ shows that if the distribution is well covered (high recall) in the early stages of learning, 
the precision gradually increases, and the learning is successful.
However, it has not escaped from the initial learning failure. 
These results can be explained by the size of the gradient, as shown in Fig~\ref{appendix:seed-wise-simulations} (c). 
The gradient's norm for $s=0.25$ is smaller than the other cases. 
The result indicates that abrupt changes in the discriminator are considered difficult to occur. 
Hence, the learning result will be significantly affected by the early stages of the learning process. 
On the other hand, for $s=1, 1.5$, high recall is achieved in the early stages of learning. 
However, it is observed that the recall drops significantly 
from the middle to the latter half of the learning period, causing mode collapse. 
Also as shown in Fig~\ref{appendix:seed-wise-simulations}~(c), 
instability is manifested as mode collapse due to large gradients after the middle of the training phase. 

Panels (d) and (e) in Fig~\ref{appendix:seed-wise-simulations} show the precision and recall for each scaling strategy. 
The ``fix'' strategy has low precision and does not learn well compared to the others. 
Both the ``linear const'' and ``adaptive'' strategies can estimate rough distributions from an early stage and learn well. 
It is found that learning data at multiple scales simultaneously is stable. 
The ``adaptive'' strategy is preferable 
since it is less dependent on the hyperparameters and can be used more universally~\citep{wang2022diffusion}.

Fig~\ref{appendix:noise_scale_adding_noise_supp} shows supplementary numerical results in Section~\ref{sec:Noise_Injection_Data_Scaling}. 
We consider the noise injection, 
$\tilde{\x} = \x + \bm{\epsilon}, \bm{\epsilon} \sim \mathcal{N}(\bm{0}, \sigma_{\text{noise}}^2I)$ for 
$\sigma_{\mathrm{noise}}\in\{0, 0.05, 0.1, 0.15, 0.2\}$. 
Numerical experiments conducted with several seeds are depicted. 
Each gray box corresponding to each seed includes two panels, the precision and recall to the learning iteration. 
The noise of any magnitude examined in the experiment can cause instability and may not contribute significantly to stabilization. 

In the diffusion-GAN, both the data scaling and noise injection are incorporated, i.e., 
the training data $\boldsymbol{x}$ is transformed to 
$\tilde{\x} = s_t \x + \sqrt{1-s_t^2}\bm{\epsilon},\, \bm{\epsilon}\sim\mathcal{N}(\bm{0}, \sigma_{\text{noise}}^2I)$. 
The scaling function $s_t$ is defined in the same way as those in Section~\ref{subsec:Scaling_Strategy}, and 
the ``adaptive'' strategy is employed. 
The diffusion-GAN is compared with the GAN with the scaled generator 
$s_t G_\theta({\z})$ trained by the scaled data $\tilde{\x} = s_t \x$ without diffusion. 
Fig~\ref{appendix:diff_noise_scale_diffusion_supp} shows 
the precision and recall for the learning with several  $\sigma_{\text{noise}}$.
Each gray box includes two panels, the precision, and recall to the learning iteration, for each seed when the noise level 
$\sigma_{\mathrm{noise}}$ varies $0, 0.05, 0.1, 0.15$, and $0.2$. 
We see that overall the learning process is stable, while the diffusion-GAN with $\sigma_{\mathrm{sigma}}=0.05$ seems unstable. 
The results in Fig~\ref{appendix:diff_noise_scale_diffusion_supp} 
suggests that scaling contributes significantly to stabilization and that the effect of noise imposition is limited. 
\begin{figure}[tp]
\centering
\begin{tabular}{p{6.5em}p{6.5em}p{6.5em}p{6.5em}p{6.5em}}
 \includegraphics[height=25mm]{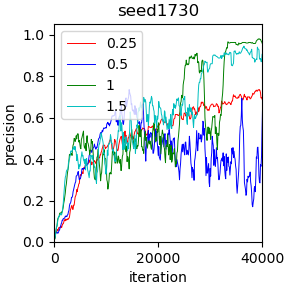} &
 \includegraphics[height=25mm]{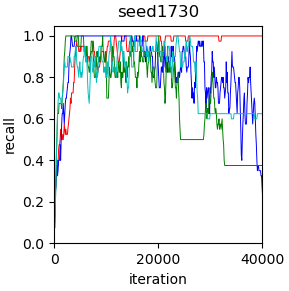}    &
 \includegraphics[height=25mm]{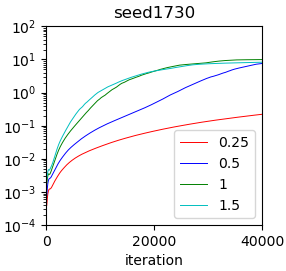}        &
 \includegraphics[height=25mm]{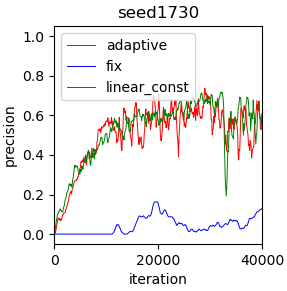}     &
 \includegraphics[height=25mm]{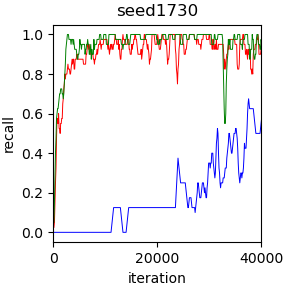} \\
 \includegraphics[height=25mm]{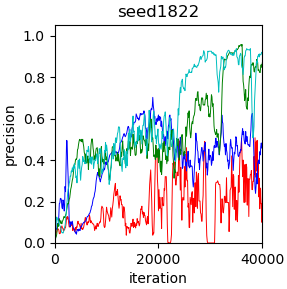} &
 \includegraphics[height=25mm]{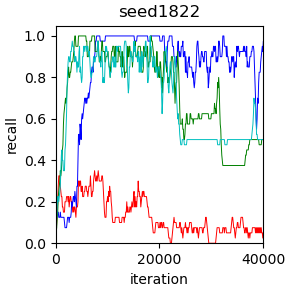}    &
 \includegraphics[height=25mm]{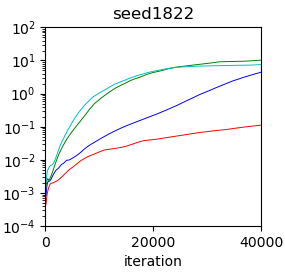}        &
 \includegraphics[height=25mm]{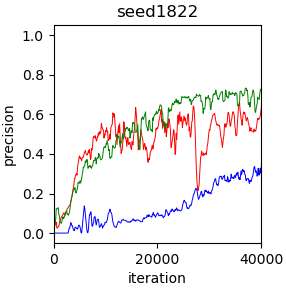}     &
 \includegraphics[height=25mm]{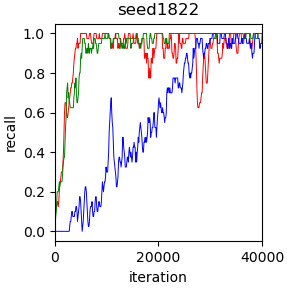} \\
 \includegraphics[height=25mm]{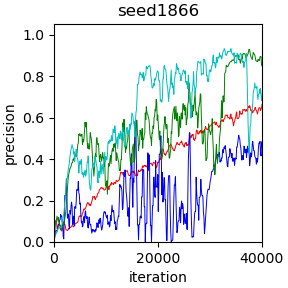} &
 \includegraphics[height=25mm]{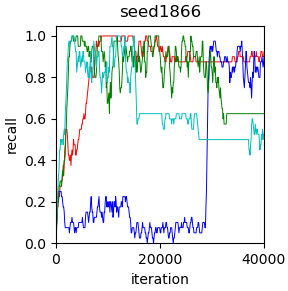}    &
 \includegraphics[height=25mm]{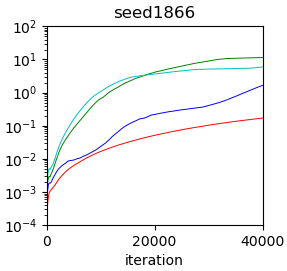}        &
 \includegraphics[height=25mm]{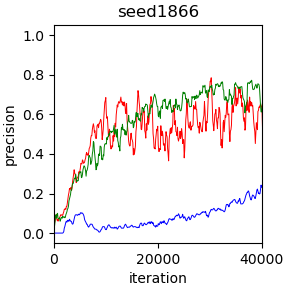}     &
 \includegraphics[height=25mm]{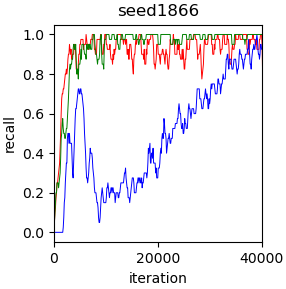} \\
 \includegraphics[height=25mm]{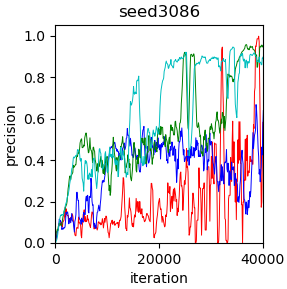} &
 \includegraphics[height=25mm]{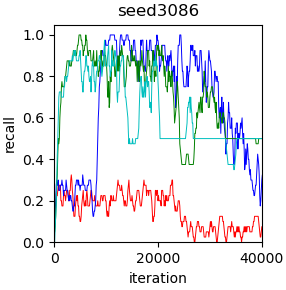}    &
 \includegraphics[height=25mm]{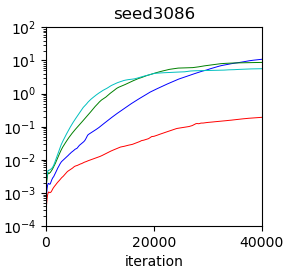}        &
 \includegraphics[height=25mm]{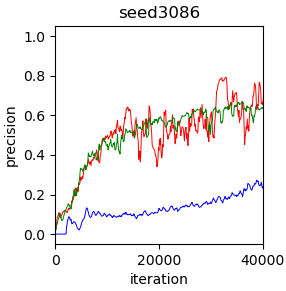}     &
 \includegraphics[height=25mm]{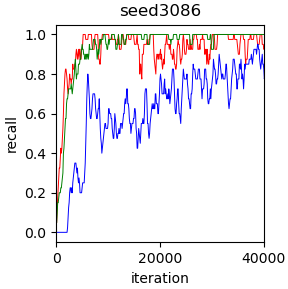} \\
 \includegraphics[height=25mm]{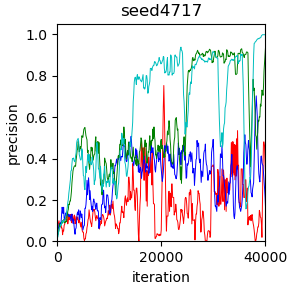} &
 \includegraphics[height=25mm]{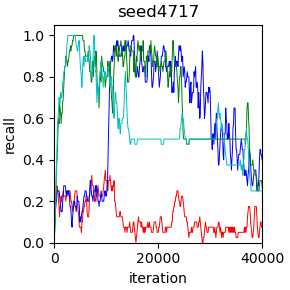}    &
 \includegraphics[height=25mm]{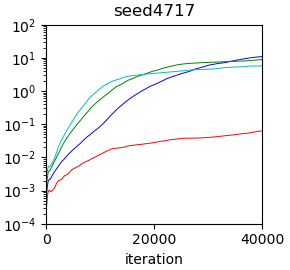}        &
 \includegraphics[height=25mm]{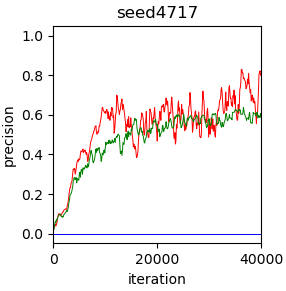}     &
 \includegraphics[height=25mm]{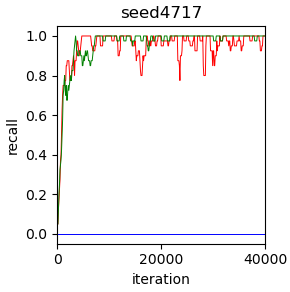} \\
 \includegraphics[height=25mm]{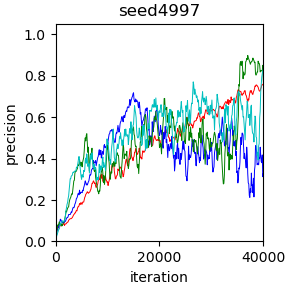} &
 \includegraphics[height=25mm]{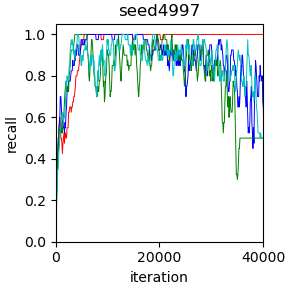}    &
 \includegraphics[height=25mm]{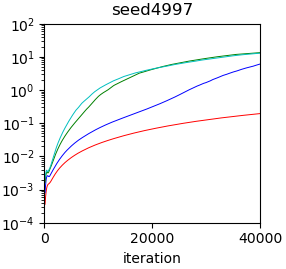}        &
 \includegraphics[height=25mm]{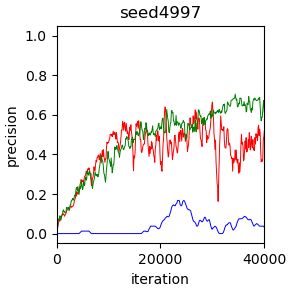}     &
 \includegraphics[height=25mm]{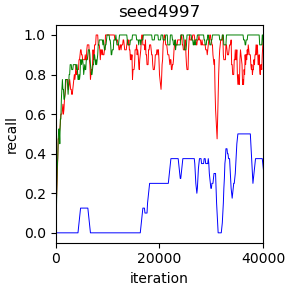} \\
 \includegraphics[height=25mm]{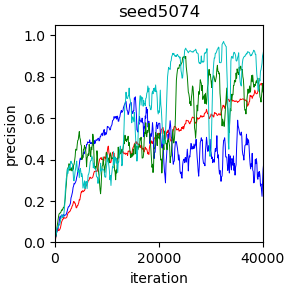} &
 \includegraphics[height=25mm]{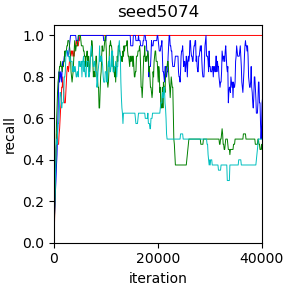}    &
 \includegraphics[height=25mm]{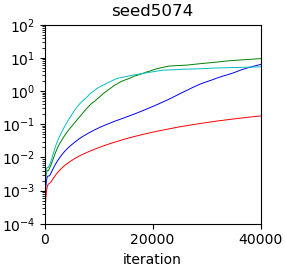}        &
 \includegraphics[height=25mm]{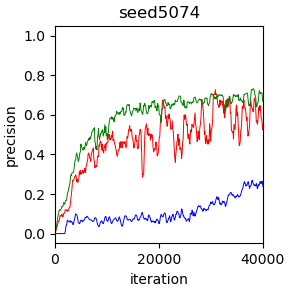}     &
 \includegraphics[height=25mm]{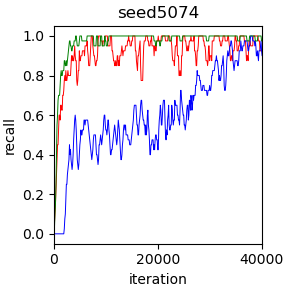} \\
 \includegraphics[height=25mm]{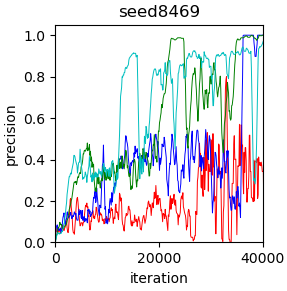} &
 \includegraphics[height=25mm]{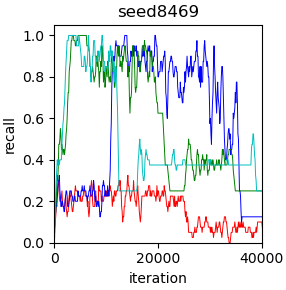}    &
 \includegraphics[height=25mm]{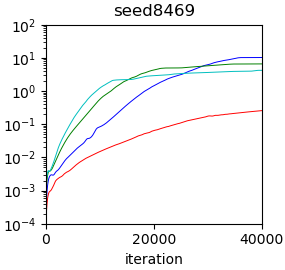}        &
 \includegraphics[height=25mm]{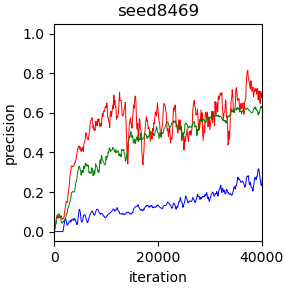}     &
 \includegraphics[height=25mm]{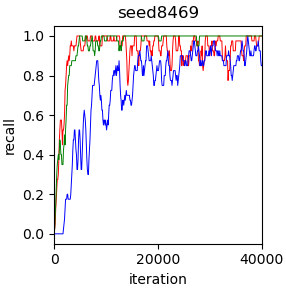} \\ 
{\small\begin{tabular}{l}(a) data scaling:\\ \phantom{(a) }precision \end{tabular}} & 
{\small\begin{tabular}{l}(b) data scaling:\\ \phantom{(b) }recall \end{tabular}} &  
{\small\begin{tabular}{l}(c) data scaling:\\ \phantom{(c) }norm of gradient \end{tabular}} &
{\small\begin{tabular}{l}(d) scaling strategy:\\ \phantom{(d)} precision \end{tabular}} &
{\small\begin{tabular}{l}(e) scaling strategy:\\ \phantom{(e)} recall \end{tabular}}
 \end{tabular}
\caption{
 (a)\,precision for each scale, (b)\,recall for each scale, 
 (c)\,averaged norm of discriminator's gradient, 
 (d)\,precision for each scaling strategy, and (e)\,recall for each scaling strategy 
 are depicted.  Each row corresponds to each seed. 
 The scale $s$ is set to $0.25, 0.5, 1,$ and $1.5$ for (a), (b), and (c). 
 As a scaling strategy,  ``fix'', ``linear const'', or ``adaptive'' is used in (d) and (e). 
 }
\label{appendix:seed-wise-simulations}
\end{figure}

\begin{figure}[tp]
 \centering
 \includegraphics[width=160mm]{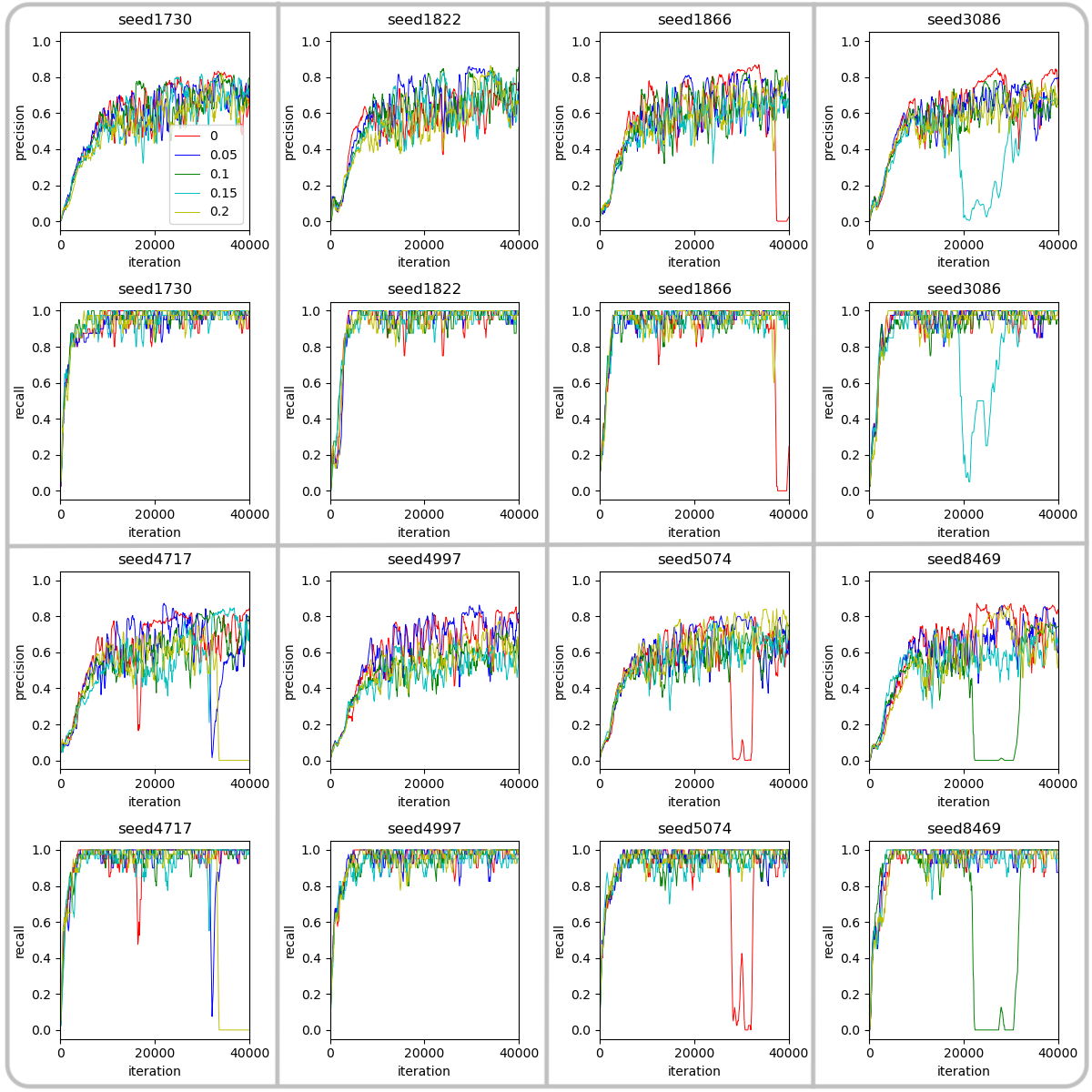}
 \caption{Each gray box corresponds to each seed. 
 Precision and recall for the learning with noise injection, 
 $\widetilde{\x}=\x+\sigma_{\mathrm{noise}}\bm{\epsilon}, \bm{\epsilon} \sim \mathcal{N}(\bm{0}, \sigma_{\text{noise}}^2I)$, 
 are depicted at each learning iteration. 
 The noise level $\sigma_{\mathrm{noise}}$ is set to $0, 0.05, 0.1, 0.15$, and $0.2$. 
 }
 \label{appendix:noise_scale_adding_noise_supp}
\end{figure}

\begin{figure}[tp]
 \centering
 \includegraphics[width=160mm]{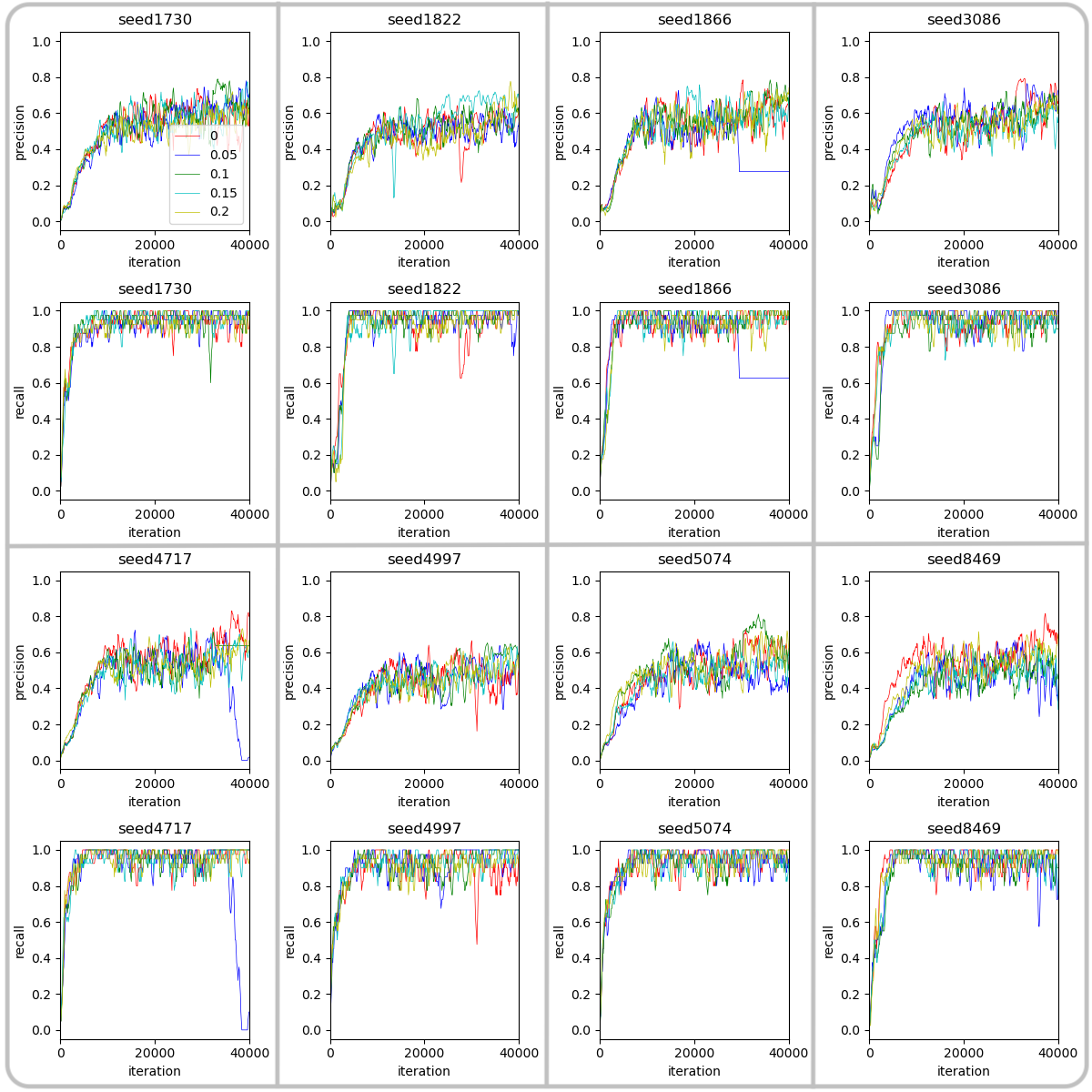}
 \caption{Each gray box corresponds to each seed. 
 Precision and recall for the learning with data scaling and noise injection,  
 $\widetilde{\x}=s_t\x+\sqrt{1-s_t^2}\sigma_{\mathrm{noise}}\bm{\epsilon}, \bm{\epsilon} \sim \mathcal{N}(\bm{0}, \sigma_{\text{noise}}^2I)$ 
 are depicted at each learning iteration
 The noise level $\sigma_{\mathrm{noise}}$ is set to $0, 0.05, 0.1, 0.15$, and $0.2$. 
 The scaling function $s_t$ is determined by the uniform distribution with adaptive strategy. }
 \label{appendix:diff_noise_scale_diffusion_supp}
\end{figure}

\section{Additional Numerical Studies to Section~\ref{Numerical_Experiments}}

\renewcommand{\theequation}{C.\arabic{equation}}
\renewcommand{\thefigure}{C.\arabic{figure}}
\renewcommand{\thetable}{C.\arabic{table}}
\renewcommand{\thetheorem}{C.\arabic{theorem}}
\setcounter{equation}{0}
\setcounter{figure}{0}
\setcounter{table}{0}


All experiments in this paper are conducted on one or two of the following GPUs: NVIDIA RTX A5000(24GB memory), RTX A6000(48GB memory), and Tesla V100S(32GB memory). 
When two GPUs are used in parallel, the same type of GPU is used. The training time is almost the same as Diffusion-GAN, 
as the computational cost of the scale-wise variance regularization has little impact. 
As for the libraries, 
the environment was built according to Diffusion-GAN's {\tt environment.yml.}

\subsection{Stability of Scale-wise Variance Regularization}
\label{appendix:Scale-wise_Variance_Reg}
We investigate the effects of scale-wise variance regularization using the same synthetic dataset as 
in Section~\ref{sec:Effect_of_Data_Scaling_and_NoiseInjection_in_GANs}. 
Fig~\ref{appendix:fig:pr_rc_diff_reg_original} 
shows the precision and recall when the scale-wise variance regularization is used for Scale-GAN. 
We observe that overall the regularized Scale-GAN with a positive $\lambda$ 
outperforms non-regularized Scale-GAN, i.e., $\lambda=0$ when the regularization parameter is appropriately
determined, such as $\lambda=0.1$ or $0.5$. 
However, too strong regularization like $\lambda=5, 10$ leads to instability. 
Indeed, the recall for the learning with $\lambda=5, 10$ is degraded, meaning that the generator misses some modes. 
\begin{figure}[tp]
 \centering
 \includegraphics[width=160mm]{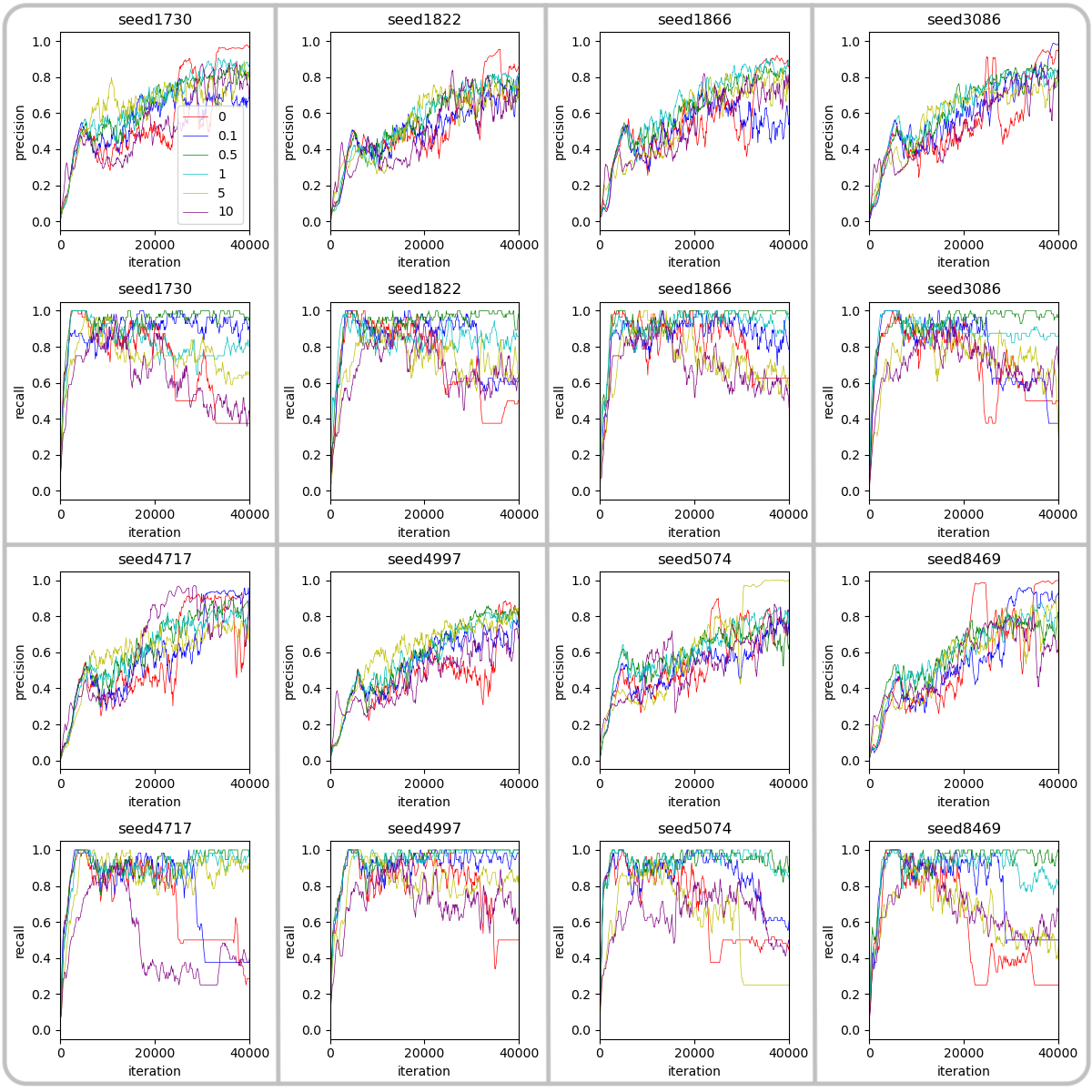}
 \caption{Each gray box corresponds to each seed. 
 Precision and recall for Scale-GAN with several regularization parameters are depicted at each learning iteration. 
 The regularization parameter $\lambda$ is set to $0, 0.1, 0.5, 1, 5$, and $10$. 
 The scaling function $s_t$ is determined by the uniform distribution with adaptive strategy. }
 \label{appendix:fig:pr_rc_diff_reg_original}
\end{figure}

\subsection{Hyperparameters in Section~\ref{subsec:Image_Generation}}
\label{appendix:Hyperparameters_Image_Generation}

Some hyper-parameters in Diffusion-GAN are defined as follows. 
\begin{itemize}
 \item $T$: The number of scales, $s_1,\ldots,s_T$, used in the learning process of Diffusion-GAN. 
 \item $T_{\max}, T_{\min}$: 
       In Diffusion-GAN, $T$ can vary during the learning process, and its range is from  
       $T_{\min}$ to $T_{\max}$ to  which are predefined constants. 
 \item $I$: 
       For the strategy of "linear const", $T$ at $i$-th step is determined by 
       $\\min\{i\ast 2T_{\max}/I, T_{\max}\}$. 
       The parameter $I$ controls the increasing speed of $T$. 
 \item $r_d$: An estimate of how much the discriminator overfits the data. 
 \item $d_{\mathrm{target}}$: The threshold to determine when the parameter $T$ increases in the "adaptive"
       strategy. 
\end{itemize}

Hyperparameters used in Diffusion-GAN and Scale-GAN are summarized in the following. 
\begin{itemize}
 \item Diffusion-GAN: for all three datasets, the following parameters are used: 
       \begin{itemize}
	\item priority distribution $\pi_0$ with adaptive strategy
	\item $\beta_0=0.0001, \beta_T=0.02, T_{\min}=5, T_{\max}=1000, \sigma_{\mathrm{noise}}=0.05$. 
       \end{itemize}
 \item Scale-GAN(proposed)
       \begin{itemize}
	\item CIFAR-10: 
	      \begin{itemize}
	       \item uniform distribution $\pi_0$ with adaptive strategy
	       \item $\beta_0=0.0001, \beta_T=0.028, T_{\min}=5, T_{\max}=1000, \lambda=0.1$. 
	      \end{itemize}

	\item STL-10 
	      \begin{itemize}
	       \item uniform distribution $\pi_0$ with adaptive strategy
	       \item $\beta_0=0.0001, \beta_T=0.03, T_{\min}=5, T_{\max}=1500, \lambda=0.05$. 
	      \end{itemize}

	\item LSUN-Bedroom 
	      \begin{itemize}
	       \item priority distribution $\pi_0$ with adaptive strategy
	       \item $\beta_0=0.0001, \beta_T=0.028, T_{\min}=5, T_{\max}=1000, \lambda=0.1$. 
	      \end{itemize}

       \end{itemize}
\end{itemize}
In Diffusion-GAN, the parameters are the same as those in~\citep{wang2022diffusion}. 
The parameter $d_{\mathrm{target}}$ in the adaptive strategy is set to $0.6$. 

The STL-10 dataset is known as a dataset with high variance, 
which is thought to be a reason that the larger scaling $\beta_T=0.03, T_{\text{max}}=1500$ works effectively. 

Fig.~\ref{CIFAR10_generated_images}, and \ref{STL10_generated_images} shows the generated sample images by Scale-GAN.

\begin{figure}[t]
  \begin{center}
    \includegraphics[width=120mm]{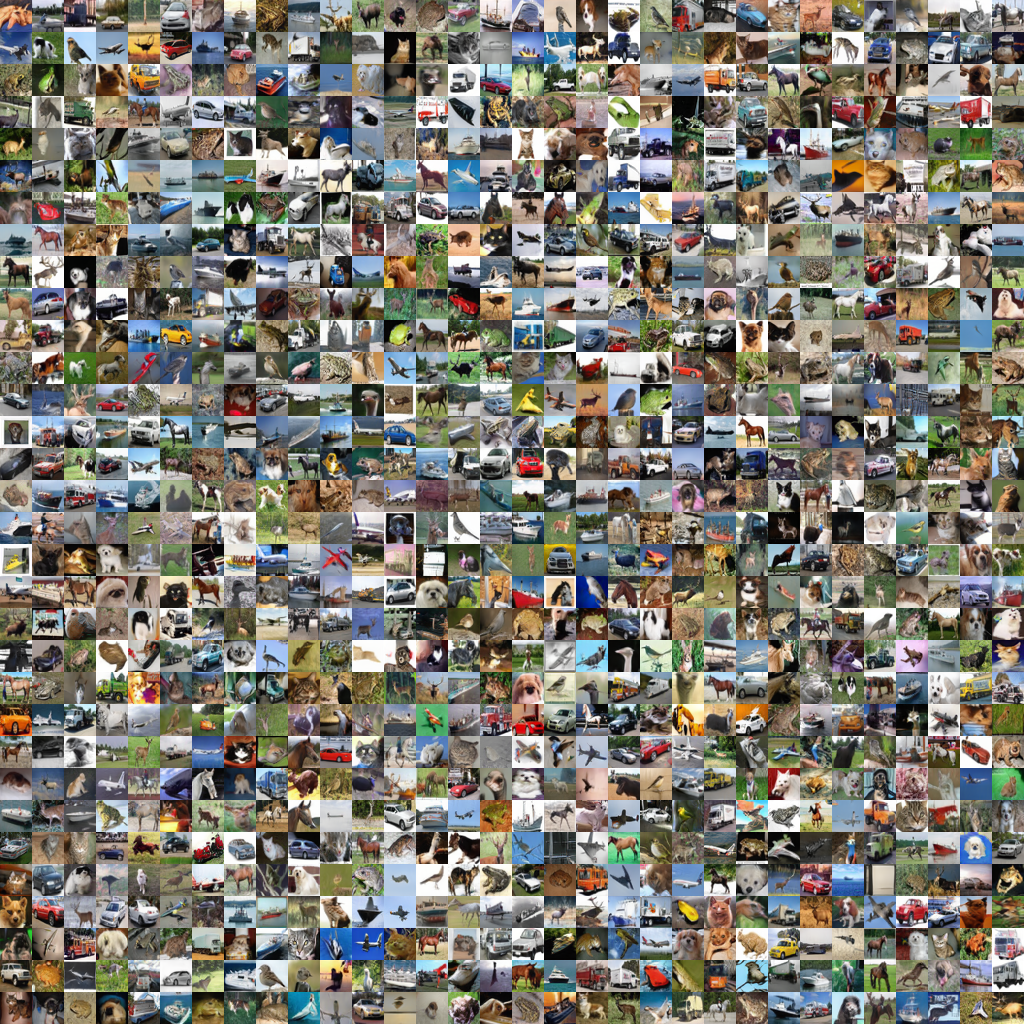}
    \caption{Generated sample images by Scale-GAN trained using CIFAR-10. }
   \label{CIFAR10_generated_images}
  \end{center}
\end{figure}

\begin{figure}[t]
  \begin{center}
    \includegraphics[width=120mm]{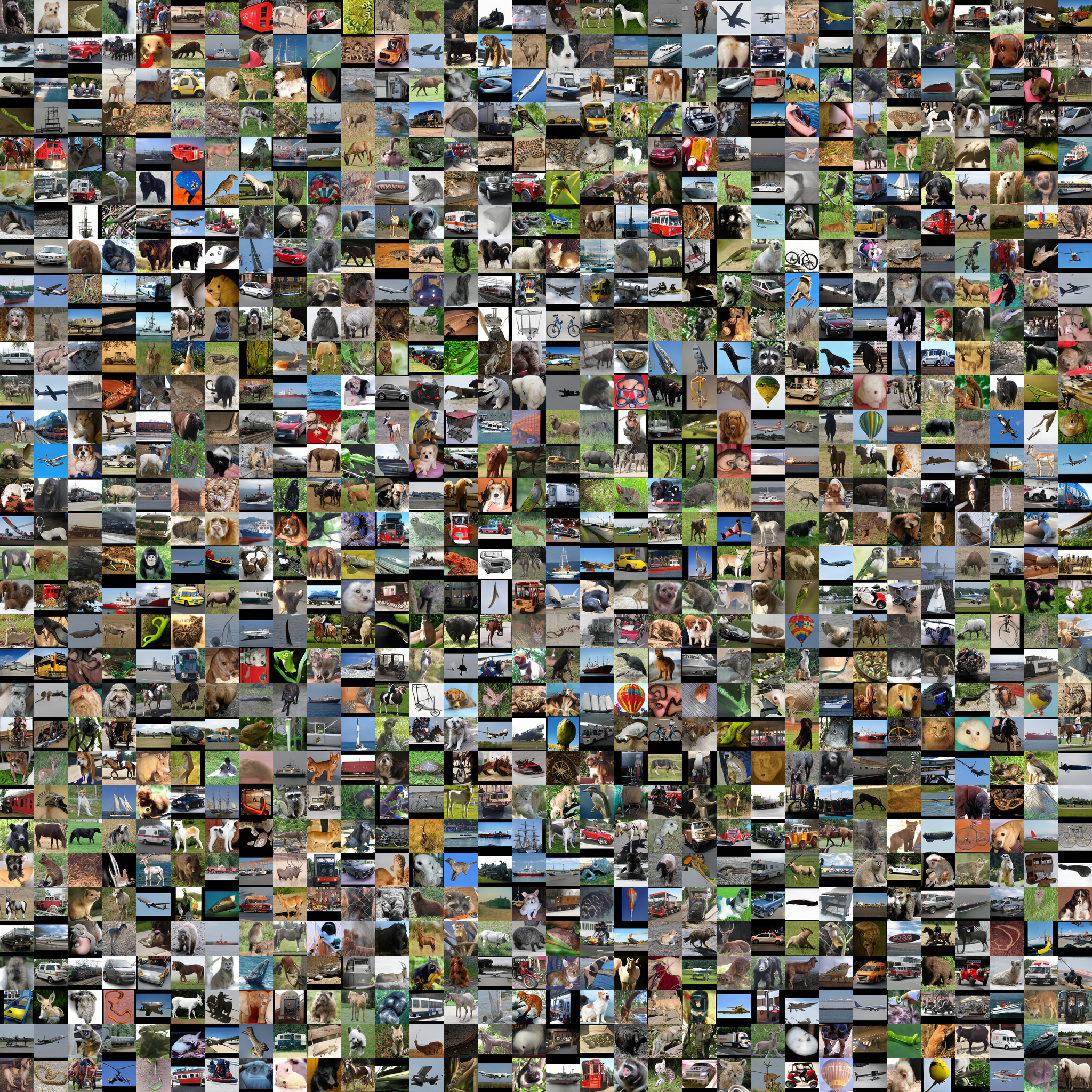}
   \caption{Generated sample images by Scale-GAN trained using STL-10. }
   \label{STL10_generated_images}
  \end{center}
\end{figure}

\subsection{Scale-GAN with Modified Regularization on CIFAR-10}
\label{appendix:subsec:modified_reg}

Let us investigate the effect of several regularization terms for Scale-GAN. 
The uniform distribution $\pi_0$ with the adaptive strategy is employed to determine the scaling function $s_t$. 

We compare 
Scale-GAN with scale-wise variance regularization ($\beta_T=0.02, \lambda=0.1$), i.e., the proposed method, 
Scale-GAN without regularization ($\beta_T=0.02, \lambda=0$), and 
Scale-GAN with modified regularization ($\beta_T=0.02, \lambda=0.1$). 
The scale-wise variance regularization is defined by the empirical approximation of $\lambda\mathbb{V}[D(\widetilde{\x},t)]$
for the joint distribution of the augmented data $\widetilde{\x}$ and the scale intensity $t$. 
On the other hand, the modified regularization is defined by the empirical approximation of $\lambda\mathbb{E}[(D(\widetilde{\x},t)-1/2)^2]$. 

Fig~\ref{appendix:fig:training_speed} shows the FID score at each learning iteration. 
The result indicates that the regularization efficiently attains a good FID score after a sufficient iteration. 
However, the convergence speed of Scale-GAN with modified regularization is slower than 
Scale-GAN with scale-wise variance regularization. 
Modified regularization is thought to make the adaptive strategy inefficient. 
More precisely, the expectation in $r_d=\mathbb{E}[\operatorname{sign}(\widetilde{D}(\y, t)-0.5)]$ 
for the adaptive strategy
\begin{align}
\label{eqn:adaptive_strategy} 
 T\leftarrow T+\operatorname{sign}(r_d-d_{\text {target }})
\end{align}
tends to be small, 
and the parameter $T$ does not take a large number, resulting in the scale intensity $t$ taking a small number.

\begin{figure}[tp]
 \centering 
 \includegraphics[height=80mm]{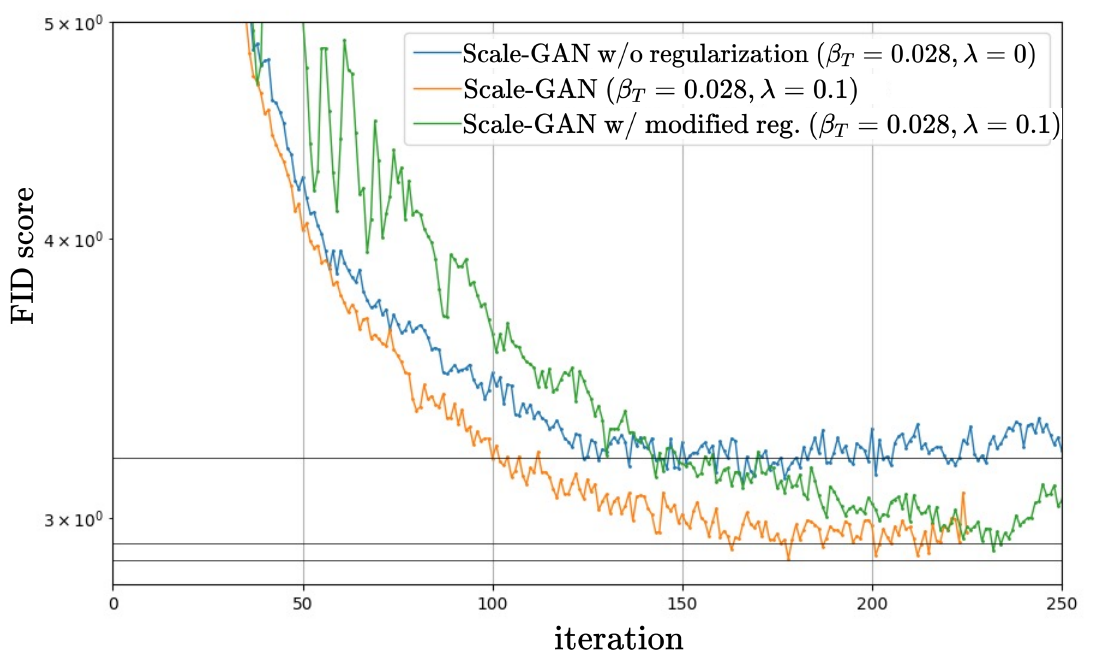}
 \caption{FID scores for Scale-GAN w/o regularization, Scale-GAN, and Scale-GAN with modified regularization 
 are depicted at each iteration for CIFAR-10 dataset. }
\label{appendix:fig:training_speed}
\end{figure}


\section{Supplementary of Proposed Framework}
\label{appendix:Proposed_Framework}

\renewcommand{\theequation}{D.\arabic{equation}}
\renewcommand{\thefigure}{D.\arabic{figure}}
\renewcommand{\thetable}{D.\arabic{table}}
\renewcommand{\thetheorem}{D.\arabic{theorem}}
\setcounter{equation}{0}
\setcounter{figure}{0}
\setcounter{table}{0}

For theoretical analysis, let us prepare the following notations. 
For $0\leq\delta<1/2$, let us define $\lambda_\delta=\delta^3/10$, and 
\begin{align*}
 \mathcal{U}_\delta&=\{f\in L^\infty\,|\,\delta<\inf f, \sup{f}<1-\delta\},\\
 \Lambda_\delta&=[0,\lambda_\delta]\subset\Rbb,\\
 \mathcal{Q}_{\delta,C}
 &=\left\{
 q\in L^1\cap L^\infty\,\bigg|\,\frac{p_0}{p_0+q}\in \mathcal{U}_\delta,\,
 \|q\|_{\mathrm{Lip}}\leq C,\, \int q\rmd{\mu}=1
 \right\}, \\
 \mathcal{D}_\delta
 &=\left\{D\in L^\infty\,\big|\,
 \delta-5\lambda_\delta \bigg(\frac{1-\delta}{\delta}\bigg)^2
 < D < 1-\delta+5\lambda_\delta \bigg(\frac{1-\delta}{\delta}\bigg)^2 \right\}. 
\end{align*}
For the probability density $p_0(\x)$, the function 
$q$ in $\mathcal{Q}_{\delta,C}$ should be non-negative, and thus 
$q$ is a probability density. 
When $p_0\in L^\infty$ and $\|p_0\|_{\mathrm{Lip}}\leq C$, we have $p_0\in \mathcal{Q}_{\delta,C}$. 
By the definition of $\lambda_\delta$, we see that $0<D<1$ holds for $D\in\mathcal{D}_\delta, 0\leq \delta<1/2$. 
Throughout the paper, all probability densities are strictly positive on their domain.

\subsection{Learning Algorithm: Scale-GAN}
\label{appendix:Learning_Algorithm_Scale-GAN}

The learning algorithm of Scale-GAN is similar to Diffusion-GAN~\citep{wang2022diffusion}. 
The main difference is the data augmentation. In Scale-GAN, only the data scaling is employed, while
Diffusion-GAN used the noise injection in addition to data scaling. 
For i.i.d. training data $\x_1,\ldots,\x_n\sim p_0$ and i.i.d. scales $t_1,\ldots,t_n\sim \pi$, 
the empirical approximation of \eqref{eqn:Scale-GAN-loss} is given by the min-max optimization problem, 
\begin{align}
 \min_{G\in\mathcal{G}}\max_{\widetilde{D}\in\widetilde{\mathcal{D}}}
 \frac{1}{n}\sum_{i}\log \widetilde{D}(s_{t_i}\x_i, t_i)
 +
 \frac{1}{n}\sum_{i}\log(1-\widetilde{D}(s_{t_i}G(\z_i), t_i))
 -\lambda \widetilde{\mathbb{V}}[\widetilde{D}]
 \label{appendix_eqn:empirical-Scale-GAN}, 
\end{align}
where 
\begin{align*}
\widetilde{\mathbb{V}}[\widetilde{D}]=\frac{1}{n}\sum_{j=1}^{n}
\left\{
 \frac{1}{n}\sum_{i}(\widetilde{D}(s_{t_j}\x_i,t_j)-\frac{1}{n}\sum_{i'}\widetilde{D}(s_{t_{j}}\x_{i'},t_{j}))^2
\right\}
\end{align*}
is an empirical approximation of $\mathbb{E}_{\pi}[\mathbb{V}_{p_0}[\widetilde{D}]]$. 
The pseudo code of Scale-GAN is shown in Algorithm~\ref{appendix:alg:Scale-GAN}. 

\begin{figure}[!t]
  \begin{algorithm}[H]
    \caption{{\bf :} Scale-GAN}
   \label{appendix:alg:Scale-GAN}
    \begin{algorithmic}
     \State {\bf Initialization}: DNNs $G_u, D_\theta$, distribution of latent variable $p_z$, 
     scaling function $s_t$ and \\
     $\phantom{Initialization: }$ intensity distribution $\pi(t)$ with 
     hyper-parameters $\beta_0, \beta_T, T_{\min}, T_{\max}, d_{\text{target}}$, etc. 
     \While{$i\leq $ number of training iterations}
     \State \hspace*{-3mm}$\#$\emph{Step I: Update Discriminator}
     \begin{itemize}
      \item Sample minibatch of $m$ noise samples $\z_1,\ldots,\z_m\sim_{i.i.d.} p_z$.
      \item Obtain generated sample $\y_i=G_u(\z_i),\, i=1,\ldots,m$. 
      \item Sample minibatch of $m$ data $\x_1,\ldots,\x_m$. 
      \item Sample scale intensity $t_1,\ldots,t_m\sim \pi$. 
      \item Update discriminator $D_\theta$ by maximizing 
	    the objective function of~\eqref{appendix_eqn:empirical-Scale-GAN} for the fixed $G_u$. 
     \end{itemize}
     \vspace*{3mm}
     \State \hspace*{-3mm}$\#$\emph{Step II: Update Generator}
     \begin{itemize}
      \item Sample minibatch of $m$ noise samples $\z_1,\ldots,\z_m\sim_{i.i.d.} p_z$.
      \item Obtain generated sample $\y_i=G_u(\z_i),\, i=1,\ldots,m$. 
      \item Sample scale intensity $t_1,\ldots,t_m\sim \pi$.       
      \item Update discriminator by minimizing the objective function of~\eqref{appendix_eqn:empirical-Scale-GAN} 
	    for the fixed $D_\theta$.
     \end{itemize}
     \vspace*{3mm}
     \State \hspace*{-3mm}$\#$\emph{Step III: Update scaling strategy}
       \If{ $i\mod 4==0$}
       \State \quad Update $T$ by \eqref{eqn:adaptive_strategy}. 
       \State \quad Update the distribution of the scale intensity $\pi(t)$ using updated $T$. 
       \EndIf
     \EndWhile 
    \end{algorithmic}
  \end{algorithm}
\end{figure}

%
%

\subsection{Proof of Theorem~\ref{thm:opt-discriminator}}
\label{appendix:Theorem_scale_invariance}

\subsubsection*{Problem Setting}
For the probability density $q$ on $\mathcal{X}$ and 
the discriminator $D:\mathcal{X}\rightarrow(0,1)$, let us define 
the regularized loss by 
\begin{align*}
 F_{p_0}(q,D;\lambda):=L_{p_0}(q,D)-\lambda \mathbb{V}_{p_0}[D], \ \ 
 L_{p_0}(q,D):=\mathbb{E}_{p_0}[\log D]+\mathbb{E}_{q}[\log(1-D)]. 
\end{align*}
For the probability density $p_0(\x),\, \x\in\mathcal{X}$, 
let $p_t(\y)$ be the probability density of the scaled variable $\y = s_t \x$ on 
$s_t\mathcal{X}=\{s_t\x\,|\,\x\in\mathcal{X}\}$, 
where $s_t\in(0,1]$ is the scaling function with $s_0=1$. 
More concretely, $p_t(\y)=p_0(\y/s_t)/s_t^d$ holds. 
Likewise, for the probability density $q$ of the generator, 
the probability density $q_t$ is defined by $q_t(\y)=q(\y/s_t)/s_t^d$. 
Note that $q_0=q$ holds. 
Using a pre-specified probability density $\pi(t)>0$ for $t\in[0,T]$, 
Scale-GAN with the variance regularization is formulated as the min-max problem, 
\begin{align}
 \label{appendix:reg-scale-GAN}
 \min_{\{q_t\}_{t\in[0,T]}}\max_{\{D_t\}_{t\in[0,T]}} \mathbb{E}_{t\sim\pi}
\big[ L_{p_t}(q_t,D_t) - \lambda\mathbb{V}_{p_t}[D_t]\big],
\end{align}
for $D_t(\cdot) = \widetilde{D}(\cdot,t)$. 
The objective function of \eqref{appendix:reg-scale-GAN} is expressed as
$\mathbb{E}_{t\sim\pi}[F_{p_t}(q_t,D_t;\lambda)]$.


\subsubsection*{Theoretical Analysis of Optimal Discriminator}
We consider the optimal solution of \eqref{appendix:reg-scale-GAN} under the condition 
$\|p_0\|_\infty<\infty, q\in \mathcal{Q}_{\delta,\infty}$ and $D\in \mathcal{U}_0$. 
Note that the assumption $\frac{p_0}{p_0+q}\in \mathcal{U}_\delta$ ensures 
$\frac{p_t}{p_t+q_t}\in \mathcal{U}_\delta$ for $t\geq0$. 
Under the assumption, $p_0$ and $q$ are positive almost everywhere with respect to $\mu$. 
Note that the set $\mathcal{U}_\delta$ is convex for $0\leq\delta<1/2$. 


We confirm the strict concavity of the functional $D\mapsto F_{p_0}(q,D;\lambda)$ for $D\in\mathcal{U}_0$. 
The concavity of the functional $D\rightarrow L_{p_0}(q,D)$ and 
the convexity of the variance $D\mapsto\mathbb{V}_{p_0}[D]$ are clear. 
For $D_1, D_2\in\mathcal{U}_0$ such that $\mu(D_1\neq D_2)>0$, 
the set 
\begin{align*}
\{\x\in\mathcal{X}\,|\,\log(\alpha D_1(\x)+(1-\alpha) D_2(\x)) > \alpha\log D_1(\x)+(1-\alpha)\log D_2(\x)\}
\end{align*}
has a positive measure for $0<\alpha<1$. The same property holds for the function $D\mapsto\log(1-D)$. 
Therefore, the strict concavity of $D\mapsto F_{p_0}(q,D;\lambda)$ holds when 
we ignore the difference of $D$ on a set of measure zero. 

Let $\rmd{F}_{p_0}(q,D;\lambda)(u)$ be the G\^{a}teaux differential~\citep{kurdila2005convex}
of $F_{p_0}(q,\cdot;\lambda)$ 
at $D$ to the direction $u\in L^\infty$, i.e., 
\begin{align*}
 \rmd{F}_{p_0}(q,D;\lambda)(u)
 &:=\lim_{\epsilon\rightarrow+0}\frac{F_{p_0}(q,D+\epsilon u;\lambda)-F_{p_0}(q,D;\lambda)}{\epsilon}. 
\end{align*}
Note that $D+\epsilon u\in \mathcal{U}_0$ holds for sufficiently small $\epsilon>0$. 
Then, Lebesgue's dominated convergence theorem ensures that 
\begin{align*}
 \rmd{F}_{p_0}(q,D;\lambda)(u)
 &\phantom{:}=
  \int u
 \left\{\left(  \frac{p_0}{D}-\frac{q}{1-D}  \right)-2\lambda p_0 (D-\mathbb{E}_{p_0}[D])  \right\}\rmd{\mu}. 
\end{align*}
If $D\in \mathcal{U}_0$ attains the maximum value of $\max_{D\in \mathcal{U}_0}F_{p_0}(q,D;\lambda)$, 
the equality $\rmd{F}_{p_0}(q,D_0;\lambda)(u)=0$ should hold for any $u\in L^\infty$. 
The assumption on $p_0$ and $q$ leads that the inside of $\{\cdots\}$ 
in the G\^{a}teaux differential is included in $L^\infty\cap L^1$. 
By setting $u=(\frac{p_0}{D}-\frac{q}{1-D})-2\lambda p_0(D-\mathbb{E}_{p_0}[D])\in L^\infty\cap L^1$, 
we see that the optimal $D$ should satisfy 
\begin{align}
&\phantom{\Longleftrightarrow}
 \left(\frac{p_0}{D}-\frac{q}{1-D}  \right)-2\lambda p_0 (D-\mathbb{E}_{p_0}[D])=0 \nonumber\\
\label{eqn:extremal-cond}
&\Longleftrightarrow\ 
2 \lambda D^3 -2\lambda  (\mathbb{E}_{p_0}[D]+1)D^2
 + \left(2\lambda\mathbb{E}_{p_0}[D]-1-\frac{q}{p_0}\right)D+1=0
\end{align}
almost everywhere. Due to the concavity, we see that the discriminator $D\in \mathcal{U}_0$ satisfying
\eqref{eqn:extremal-cond} attains the global maximum value of $\max_{D\in \mathcal{U}_0}F_{p_0}(q,D;\lambda)$. 
Furthermore, the strict concavity ensures the uniqueness of the optimal solution up to the difference on a set of measure zero. 
The same conclusion holds for the optimal solution of $\max_{D\in \mathcal{U}_0}F_{p_t}(q_t,D;\lambda)$. 

Let us prove the existence of the optimal solution. 
Since $L^\infty$ is not reflexive, the general existence theorem discussed by~\cite{kurdila2005convex} does not apply. 
Our proof utilizes some specific properties of the functional $F_{p_0}(q,D;\lambda)$. 
\begin{lemma}[Existence and uniqueness of optimal solution]
\label{appendix:lemma:Existence_Uniqueness_OptSol}
Assume $\|p_0\|_\infty<\infty$ and $q\in\mathcal{Q}_{\delta,\infty}$. 
 Then, the optimal solution of the optimization problem, 
$\max_{D\in \mathcal{U}_0}F_{p_0}(q,D;\lambda)$, uniquely exists for $\lambda\geq0$. 
\end{lemma}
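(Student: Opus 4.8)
The plan is to establish existence by a direct argument: construct a candidate optimal discriminator pointwise from the extremal condition~\eqref{eqn:extremal-cond}, show it lands in $\mathcal{U}_0$, and then invoke concavity to conclude it is the (unique) maximizer. Uniqueness is already settled by the strict concavity argument sketched above, so the real work is existence. The subtlety, as noted, is that $L^\infty$ is not reflexive, so we cannot simply extract a weak-$\ast$ convergent maximizing sequence and pass to the limit; instead we exploit the explicit cubic structure.

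First I would treat $m := \mathbb{E}_{p_0}[D]$ as an unknown parameter ranging over a compact interval, say $[\delta',1-\delta']$ for a suitable $\delta'$ determined by $\delta$, and for each fixed value $m$ analyze the cubic $2\lambda D^3 - 2\lambda(m+1)D^2 + (2\lambda m - 1 - q/p_0)D + 1 = 0$ at each point $\x$. The key pointwise claim is that for $\lambda \in [0,\lambda_\delta]$ (or more generally $\lambda\ge 0$ small, which is the regime of interest; for $\lambda=0$ the root is the classical $p_0/(p_0+q)$), this cubic has a unique root $D_m(\x)$ lying in $(0,1)$, and in fact in a slightly shrunk interval $\mathcal{U}_\delta$-type set, uniformly in $\x$. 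This follows by checking signs of the cubic polynomial at $D=0$ (value $+1>0$) and at $D=1$ (value $2\lambda - 2\lambda(m+1) + 2\lambda m - 1 - q/p_0 + 1 = -q/p_0 < 0$), which already forces a root in $(0,1)$; a derivative/monotonicity estimate using the smallness of $\lambda$ relative to $\delta$ (this is precisely what $\lambda_\delta = \delta^3/10$ buys us, controlling the other two roots away from $(0,1)$) gives uniqueness of that root and continuity of $\x\mapsto D_m(\x)$ in the data $q(\x)/p_0(\x)$. Because $q\in\mathcal{Q}_{\delta,\infty}$ the ratio $q/p_0$ is bounded away from $0$ and $\infty$, so $D_m$ is bounded away from $0$ and $1$, i.e. $D_m\in\mathcal{U}_0$, and measurability of $D_m$ follows from measurable dependence on $q/p_0$.

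Next I would close the loop on the self-consistency $m = \mathbb{E}_{p_0}[D_m]$. Define $\Phi(m) := \mathbb{E}_{p_0}[D_m]$; by dominated convergence (the integrand is bounded by $1$) and the continuous, in fact monotone, dependence of the root $D_m(\x)$ on $m$, the map $\Phi:[\delta',1-\delta']\to[\delta',1-\delta']$ is continuous and maps the interval into itself, so Brouwer (here just the intermediate value theorem, since the domain is one-dimensional) gives a fixed point $m^\ast$. Setting $D^\ast := D_{m^\ast}$ yields a function in $\mathcal{U}_0$ that satisfies~\eqref{eqn:extremal-cond} almost everywhere with $\mathbb{E}_{p_0}[D^\ast]=m^\ast$. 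Finally, since $D\mapsto F_{p_0}(q,D;\lambda)$ is concave on the convex set $\mathcal{U}_0$ and $\mathrm{d}F_{p_0}(q,D^\ast;\lambda)(u)=0$ for all $u\in L^\infty$ (the Gâteaux derivative vanishes because its integrand is exactly the left side of~\eqref{eqn:extremal-cond}, which is zero a.e.), $D^\ast$ is a global maximizer; strict concavity gives uniqueness up to null sets. I would also remark that the case $\lambda=0$ is immediate, recovering $D^\ast = p_0/(p_0+q)$.

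The main obstacle I expect is the uniform pointwise root analysis of the cubic: one must show that exactly one of the three roots lies in $(0,1)$ and that it stays uniformly bounded away from the endpoints, with the bound depending only on $\delta$ and the $L^\infty$ bounds, and that this root depends continuously (and monotonically) on $m$. This is where the specific constant in $\lambda_\delta=\delta^3/10$ must be used carefully — estimating the discriminant or, more cleanly, locating the other two roots near $\mathbb{E}_{p_0}[D]$ and near $1/(2\lambda\,\cdot\,)$ (hence outside $(0,1)$ for small $\lambda$) — and it is the one genuinely computational step. Everything downstream (measurability, the fixed-point argument, and the concavity conclusion) is routine once that pointwise picture is in place.
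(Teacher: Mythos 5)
Your overall architecture matches the paper's proof exactly: handle $\lambda=0$ separately, for $\lambda>0$ freeze $c=\mathbb{E}_{p_0}[D]$ and solve the cubic pointwise, close the self-consistency $c=\mathbb{E}_{p_0}[z_c]$ by the intermediate value theorem, and then invoke the vanishing G\^{a}teaux derivative plus (strict) concavity for global optimality and uniqueness. The fixed-point step, the measurability remark, and the uniqueness-via-strict-concavity argument are all the same as in the paper.

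The one place you go wrong is the step you yourself flag as the ``genuinely computational'' one: isolating a unique root of the cubic in $(0,1)$. You propose to do this by a smallness-of-$\lambda$ argument, restricting to $\lambda\in[0,\lambda_\delta]$ and pushing the other two roots out of $(0,1)$ by a discriminant or derivative estimate. This is both unnecessary and insufficient for the statement: the lemma asserts existence and uniqueness for \emph{all} $\lambda\geq0$, so a proof that only works for small $\lambda$ does not establish it. The paper's argument is an elementary sign count that needs no restriction on $\lambda$: with leading coefficient $2\lambda>0$ one has $f(z,c)\to-\infty$ as $z\to-\infty$ and $f(z,c)\to+\infty$ as $z\to+\infty$, and together with $f(0,c)=1>0$ and $f(1,c)=-r<0$ the sign pattern $(-,+,-,+)$ at $-\infty,0,1,+\infty$ forces exactly one real root in each of $(-\infty,0)$, $(0,1)$, $(1,\infty)$ for every $\lambda>0$, since a cubic has at most three real roots. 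The constant $\lambda_\delta=\delta^3/10$ plays no role in this lemma; it is used later (Lemma~\ref{lemma:region_optD}) to locate the optimal discriminator inside $\mathcal{D}_\delta$, not to guarantee its existence. Two smaller points: your claimed monotonicity of $m\mapsto D_m(\x)$ is never needed (continuity plus the endpoint signs $h(0)>0$, $h(1)<0$ suffice for the IVT), and it would need justification if you relied on it; and the uniform separation of the root from $0$ and $1$ does follow, as you say, from $q/p_0$ being bounded away from $0$ and $\infty$ under $q\in\mathcal{Q}_{\delta,\infty}$, which is all that is needed to land in $\mathcal{U}_0$.
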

\begin{proof}
For $\lambda=0$, the optimality condition \eqref{eqn:extremal-cond} leads that 
$D=\frac{p_0}{p_0+q}\in \mathcal{U}_\delta$. 
In the below, we assume $\lambda>0$. 
Define the real-valued function $f(z,c)$ for $z,c\in[0,1]$ by
\begin{align}
 \label{eqn:f(z,c)}
 f(z,c)=2\lambda z^3- 2 \lambda (c+1) z^2+(2\lambda c-1-r)z+1. 
\end{align}
Equation~\eqref{eqn:extremal-cond} is expressed as 
$f(D(\x),\mathbb{E}_{p_0}[D])=0$ with $r=q(\x)/p_0(\x)$ almost everywhere. 
For any fixed $\x$, we have $\frac{1}{1+r}\in (\delta,1-\delta)$. 
For a fixed constant $c\in[0,1]$, let us define $z_c\in[0,1]$ as a real number satisfying $f(z_c,c)=0$. 
As $f(0,c)=1$ and $f(1,c)=-r<0$, the existence of $z_c$ in $(0,1)$ is guaranteed by the intermediate value theorem. 
Since $\lim_{z\rightarrow-\infty}f(z,c)=-\infty$ and 
$\lim_{z\rightarrow\infty}f(z,c)=\infty$, we see that 
each solution of the cubic equation $f(z,c)=0$ for $z$ lies on each interval, $(-\infty,0), (0,1)$ and $(1,\infty)$. 
Hence, $z_c$ is the unique solution of $f(z,c)=0$ on the interval $(0,1)$. 
Suppose that $f(z_c(\x),c)=0$ holds for $r=q(\x)/p_0(\x)$. 
Since $z_c$ continuously depends on $r$, $z_c(\x)$ is a measurable function. 

Next, let us prove that there exists $c$ such that $c=\mathbb{E}_{p_0}[z_c(\x)]$ holds. 
If this equality holds, we see that there exists $D$ such that $f(D(\x),\mathbb{E}_{p_0}[D])=0$ holds 
for $r=q(\x)/p_0(\x)$. 
For the function $h(c)=\mathbb{E}_{p_0}[z_c(\x)]-c$, 
the continuity and differentiability of $h(c)$ follow Lebesgue's dominated convergence theorem. 
Indeed, the continuity of $\mathbb{E}_{p_0}[z_c]$ 
follows the continuity of $z_c$ for $c$ and the boundedness $0<z_c(\x)<1$. 
Hence, we have $h(0)>0$ and $h(1)<0$. 
The above argument guarantees the existence of the optimal $D\in\mathcal{U}_0$. 
If there are multiple roots for the equation $h(c)=0, c\in[0,1]$, 
there are multiple $D$s with different expectations
satisfying the optimality condition, $f(D(\x),\mathbb{E}_{p_0}[D])=0$ with $r=q(\x)/p_0(\x)$ (a.e.). 
This contradicts the strict concavity of $F_{p_0}(q,D;\lambda)$ in $D$. 
Therefore, the uniqueness is guaranteed. 
\end{proof}

Next, let us consider the optimal solution of $\min_{D\in \mathcal{U}_0}F_{p_t}(q_t,D;\lambda)$ for $t>0$. 
\begin{lemma}
\label{appendix:lemma:opt_Dt}
Assume $\|p_0\|_\infty<\infty$ and $q\in\mathcal{Q}_{\delta,\infty}$. 
Suppose that $D_t$ is the optimal solution of $\min_{D\in \mathcal{U}_0}F_{p_t}(q_t,D;\lambda)$
for $t\geq0$. Then, $D_t(s_t\x) = D_0(\x)$ (a.e.) holds. 
\end{lemma}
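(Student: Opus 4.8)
The plan is to transport the cubic optimality condition \eqref{eqn:extremal-cond} from scale $t$ back to scale $0$ by a change of variables, and then to close the argument with the uniqueness already proved in Lemma~\ref{appendix:lemma:Existence_Uniqueness_OptSol}. First I would note that the hypotheses survive scaling: since $p_t(\y)=p_0(\y/s_t)/s_t^d$ and $q_t(\y)=q(\y/s_t)/s_t^d$, we have $\|p_t\|_\infty=\|p_0\|_\infty/s_t^d<\infty$, $\int q_t\,\rmd\mu=1$, and $\frac{p_t}{p_t+q_t}=\frac{p_0}{p_0+q}\circ(\cdot/s_t)\in\mathcal{U}_\delta$ as already remarked; hence $q_t\in\mathcal{Q}_{\delta,\infty}$, so Lemma~\ref{appendix:lemma:Existence_Uniqueness_OptSol} applies to $(p_t,q_t)$ and the optimal discriminator $D_t$ for the inner problem $\max_{D\in\mathcal{U}_0}F_{p_t}(q_t,D;\lambda)$ exists, is unique, and satisfies \eqref{eqn:extremal-cond} with $(p_0,q)$ replaced by $(p_t,q_t)$, almost everywhere on the support $s_t\mathcal{X}$ of $p_t$.

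The core computation is twofold. First, the density ratio is scale-free at a scaled point: $\frac{q_t(s_t\x)}{p_t(s_t\x)}=\frac{q(\x)}{p_0(\x)}$, because the Jacobian factors $s_t^{-d}$ cancel. Second, the mean is invariant: by the substitution $\y=s_t\x$, with $\rmd\mu(\y)=s_t^d\,\rmd\mu(\x)$, one gets $\mathbb{E}_{p_t}[D_t]=\int_{s_t\mathcal{X}}D_t(\y)p_t(\y)\,\rmd\mu(\y)=\int_{\mathcal{X}}D_t(s_t\x)p_0(\x)\,\rmd\mu(\x)=\mathbb{E}_{p_0}[D_t(s_t\cdot)]$. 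I would then define $\widehat{D}_0(\x):=D_t(s_t\x)$ for $\x\in\mathcal{X}$; it lies in $\mathcal{U}_0$ because composition with the bijection $\x\mapsto s_t\x$ preserves the essential infimum and supremum. Substituting these two identities into the pulled-back cubic equation shows that $\widehat{D}_0$ satisfies exactly \eqref{eqn:extremal-cond} for the pair $(p_0,q)$, almost everywhere on $\mathcal{X}$.

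Finally, by the strict concavity of $D\mapsto F_{p_0}(q,D;\lambda)$, condition \eqref{eqn:extremal-cond} characterizes the \emph{unique} global maximizer $D_0$ of $\max_{D\in\mathcal{U}_0}F_{p_0}(q,D;\lambda)$ (this is precisely the content of Lemma~\ref{appendix:lemma:Existence_Uniqueness_OptSol}); hence $\widehat{D}_0=D_0$ a.e., i.e. $D_t(s_t\x)=D_0(\x)$ almost everywhere, which is the claim.

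I expect the only delicate points to be bookkeeping rather than substance: one must read the optimality condition and all the integrals on the correct domain, namely $s_t\mathcal{X}$, which is a proper subset of $\mathcal{X}=[0,1]^d$ when $s_t<1$ (for $\mathcal{X}=\Rbb^d$ this issue disappears), and one must check that the ``almost everywhere'' qualifiers are preserved under $\x\mapsto s_t\x$ — which holds because this linear map and its inverse send $\mu$-null sets to $\mu$-null sets. No nontrivial estimate is required; the lemma is essentially a symmetry of the fixed-point equation \eqref{eqn:extremal-cond} under data scaling.
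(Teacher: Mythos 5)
Your proposal is correct and follows essentially the same route as the paper: both pull the cubic optimality condition at scale $t$ back to scale $0$ via the change of variables $\y=s_t\x$, using the invariance of the ratio $q_t/p_t$ and of the expectation $\mathbb{E}_{p_t}[D_t]=\mathbb{E}_{\x\sim p_0}[D_t(s_t\x)]$, and then conclude by the uniqueness from Lemma~\ref{appendix:lemma:Existence_Uniqueness_OptSol}. Your extra remarks on the domain $s_t\mathcal{X}$ and the preservation of null sets are sound bookkeeping that the paper leaves implicit.
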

\begin{proof}
 The optimality condition is $\rmd{F}_{p_t}(q_t,D_t;\lambda)(u)=0$ for any $u\in L^\infty$, which leads to 
 \begin{align*}
  2\lambda D_t^3-2\lambda(\mathbb{E}_{p_t}[D_t]+1) D_t^2
  +
  \bigg(  2\lambda\mathbb{E}_{p_t}[D_t]-1-\frac{q_t}{p_t}  \bigg)D_t+1=0. 
 \end{align*}
 Note that $\frac{p_0}{p_0+q}\in \mathcal{U}_\delta$ leads to $\frac{p_t}{p_t+q_t}\in \mathcal{U}_\delta$. 
 The existence and uniqueness of the optimal solution $D_t\in \mathcal{U}_0$ is guaranteed 
 by the same argument in Lemma~\ref{appendix:lemma:Existence_Uniqueness_OptSol}. 
 Since $q_t(\y)/p_t(\y)=q(\x)/p_0(\x)$ holds for $\y=s_t\x$, 
 the above optimality condition is equivalent with 
\begin{align*}
 &\phantom{+} 2\lambda D_t(s_t\x)^3-2\lambda(\mathbb{E}_{\x\sim p_0}[D_t(s_t\x)]+1) D_t(s_t\x)^2\\
 &\quad +\bigg(  2\lambda\mathbb{E}_{\x\sim p_0}[D_t(s_t\x)]-1-\frac{q(\x)}{p_0(\x)}  \bigg)D_t(s_t\x)+1=0. 
\end{align*}
 Therefore, $D_t(s_t\x) = D_0(\x)$ holds almost everywhere. This means $\widetilde{D}(s_t\x,t)=\widetilde{D}(\x,0)$~(a.e.). 
\end{proof}

\subsection{Proof of Theorem~\ref{thm:bias_Scale-GAN}}
\label{appendix:Theorem_bias_Scale-GAN}

The formal statement of Theorem~\ref{thm:bias_Scale-GAN} is the following. 
The probability density $q_t$ defined from $q$ and the set $\mathcal{Q}_{\delta,C}$ are introduced 
in Section~\ref{appendix:Theorem_scale_invariance} and Section~\ref{appendix:Proposed_Framework}, respectively. 
\begin{theorem}
 \label{appendix:thm:bias_Scale-GAN}
 Suppose $\|p_0\|_\infty<\infty$ and $\|p_0\|_{\mathrm{Lip}}<C$. 
 Let $q_\lambda$ be the optimal solution of 
 \begin{align*}
  \min_{q\in\mathcal{Q}_{\delta,C}} \max_{\{D_t\}\subset\mathcal{U}_0} \mathbb{E}_{t\sim \pi}[F_{p_t}(q_t,D_t;\lambda)]
 \end{align*}
 for a fixed $\delta\in[0,1/2)$. Then, $\|q_\lambda-p_0\|_\infty=O(\lambda^{\frac{1}{d+3}})$ holds. 
\end{theorem}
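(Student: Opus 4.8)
The plan is to bound the bias by comparing the regularized minimax value at the optimal generator $q_\lambda$ with the value at the target $p_0$, and then to convert the resulting integral discrepancy into a uniform bound using the regularity of the two densities. The engine is that $q_\lambda$ minimizes the value function $q\mapsto\max_{\{D_t\}\subset\mathcal{U}_0}\mathbb{E}_{t\sim\pi}[F_{p_t}(q_t,D_t;\lambda)]$, so it cannot do worse than the particular feasible choice $q=p_0$.

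First I would evaluate the value at $q=p_0$. When $q_t=p_t$ the unregularized maximizer of the inner problem is the constant discriminator $D_t\equiv 1/2$, whose variance $\mathbb{V}_{p_t}[D_t]$ vanishes; since $F_{p_t}=L_{p_t}-\lambda\mathbb{V}_{p_t}[D_t]\le L_{p_t}\le\max_D L_{p_t}=L_{p_t}(p_t,1/2)=-2\log 2$ with equality at $D_t\equiv 1/2$, the value at $p_0$ equals exactly $-2\log 2$. Because $\|p_0\|_{\mathrm{Lip}}\le C$ and $1/2\in\mathcal{U}_\delta$ give $p_0\in\mathcal{Q}_{\delta,C}$, the point $p_0$ is feasible for the outer minimization, so optimality of $q_\lambda$ yields $\mathbb{E}_{t\sim\pi}[F_{p_t}(q_{\lambda,t},D^\lambda_t;\lambda)]\le -2\log 2$.

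Next I would produce a matching lower bound on the value at $q_\lambda$. Plugging the unregularized optimal discriminator $D^0_t=p_t/(p_t+q_{\lambda,t})$ into the inner maximum (a feasible choice, hence a lower bound on the max), and using that every $f$-divergence is invariant under the common invertible scaling $\x\mapsto s_t\x$, the likelihood terms collapse to $L_{p_t}(q_{\lambda,t},D^0_t)=-2\log 2+2\,\mathrm{JS}(p_0,q_\lambda)$ for every $t$. Since $D^0_t\in(\delta,1-\delta)$, its variance is at most $1/4$, so the penalty costs at most $\lambda/4$. Combining with the upper bound from the previous paragraph gives $2\,\mathrm{JS}(p_0,q_\lambda)\le\lambda/4$, i.e. $\mathrm{JS}(p_0,q_\lambda)=O(\lambda)$. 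I would then upgrade this to a uniform bound: a Pinsker/$\chi^2$-type lower bound (the assumption $p_0/(p_0+q_\lambda)\in\mathcal{U}_\delta$ makes $p_0$ and $q_\lambda$ mutually comparable, so $\mathrm{JS}(p_0,q_\lambda)\gtrsim\|p_0-q_\lambda\|_2^2$, and always $\gtrsim\|p_0-q_\lambda\|_1^2$) turns this into an $L^1$/$L^2$ discrepancy of order $\lambda^{1/2}$. Since $g:=p_0-q_\lambda$ is Lipschitz with constant $\le 2C$, a concentration argument (if $|g|$ reaches height $M$ at a point, then $|g|\ge M/2$ on a ball of radius $\asymp M/C$, forcing $\|g\|_1\gtrsim M^{d+1}/C^{d}$ and $\|g\|_2^2\gtrsim M^{d+2}/C^{d}$) converts the integral bound into a bound on $\|g\|_\infty$; tracking the exponents through this interpolation yields $\|p_0-q_\lambda\|_\infty=O(\lambda^{1/(d+3)})$.

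The main obstacle is this last conversion, since the dimension $d$ enters only here, through the Lipschitz-to-uniform interpolation, and pinning down the precise exponent requires careful bookkeeping of how the $\sqrt{\lambda}$-scale $L^1/L^2$ discrepancy is inflated by the $d$-dependent interpolation constant. A secondary technical point is controlling the regularization term robustly on both $\mathcal{X}=[0,1]^d$ and $\mathcal{X}=\Rbb^d$: the crude bound $\mathbb{V}_{p_t}[D^0_t]\le 1/4$ already suffices for the stated rate, and I would remark that the refined estimate $\mathbb{V}_{p_t}[D^0_t]\lesssim\|p_0-q_\lambda\|_2^2$ would only make the bias smaller, so the displayed rate is a safe (non-tight) upper bound. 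I would also invoke existence of $q_\lambda$ and the characterization of the inner optimum from Theorem~\ref{thm:opt-discriminator} to guarantee the value function is well defined throughout.
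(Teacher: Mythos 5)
Your proposal is correct, but it takes a genuinely different and more direct route than the paper's proof. The paper first reduces the scaled problem to $t=0$ via scale-equivariance of the inner optimizer (Lemma~\ref{appendix:lemma:opt_Dt}, itself proved through G\^{a}teaux differentials and a cubic equation), then establishes convexity and Lipschitz continuity of the value function $G(q,\lambda)=\max_{D}F_{p_0}(q,D;\lambda)$, proves the separation estimate $\bar{G}_\epsilon-G(q_0,0)\gtrsim\epsilon^{d+3}$ (Lemma~\ref{lemma:G_ineq}: the per-mass excess of the GAN loss is $O(\epsilon^2)$ while the $(p_0+q)$-mass of the small ball is only bounded below by $\tfrac{\epsilon}{2}\cdot O(\epsilon^d)$, since $p_0$ need not be bounded away from zero), and finishes with a perturbation argument in $\lambda$. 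You instead sandwich the minimax value directly: feasibility of $p_0\in\mathcal{Q}_{\delta,C}$ pins the optimal value at most $-2\log 2$, while plugging the scale-equivariant test discriminator $p_t/(p_t+q_{\lambda,t})\in\mathcal{U}_\delta\subset\mathcal{U}_0$ gives the lower bound $-2\log2+2\,\mathrm{JS}(p_0,q_\lambda)-\lambda/4$, whence $\mathrm{JS}(p_0,q_\lambda)=O(\lambda)$ --- with no need for the optimal-discriminator machinery, and with the $t$-average collapsing trivially, so you bypass Lemma~\ref{appendix:lemma:opt_Dt} entirely. Your $L^2$ lower bound on the JS divergence is legitimate because $q_\lambda\in\mathcal{Q}_{\delta,C}$ forces $\|p_0+q_\lambda\|_\infty\leq\|p_0\|_\infty/\delta$ (note that this constant, exactly like the paper's Lemmas~\ref{lemma:region_optD} and~\ref{lemma:G_ineq}, genuinely requires $\delta>0$ even though the statement allows $\delta=0$), and combined with your interpolation $\|g\|_2^2\gtrsim M^{d+2}/C^d$ it in fact yields the \emph{sharper} rate $\|p_0-q_\lambda\|_\infty=O(\lambda^{1/(d+2)})$, which implies the stated $O(\lambda^{1/(d+3)})$; the gain over the paper comes precisely from using the pointwise bound $\mathrm{JS}\geq\int\frac{(p_0-q_\lambda)^2}{4(p_0+q_\lambda)}\,\rmd\mu$ with the denominator bounded \emph{above} by a constant, rather than the paper's looser product of the $O(\epsilon^2)$ per-mass excess with the $O(\epsilon^{d+1})$ ball mass. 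One warning: your fallback $\mathrm{JS}\gtrsim\|g\|_1^2$ alone would only give $O(\lambda^{1/(2d+2)})$, which is insufficient for $d\geq2$, so the $L^2$ branch is essential; the remaining points (taking a near-maximizer of $|g|$ when the supremum is not attained on $\Rbb^d$, and retaining a $2^{-d}$ volume fraction of the ball near the boundary of $[0,1]^d$) are routine and are handled the same way in the paper's own proof.
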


Let us consider the solution of the generator. Lemma~\ref{appendix:lemma:opt_Dt} yields that 
\begin{align*}
 \max_{\{{{D}_t}\}_{t\geq0}\subset \mathcal{U}_0} 
 \mathbb{E}_{t\sim \pi}[L_{p_t}(q_t,{D}_t)-\lambda \mathbb{V}_{p_t}[{D}_t]\mid t ]
 = 
 \max_{D\in \mathcal{U}_0} L_{p_0}(q,D)-\lambda \mathbb{V}_{p_0}[D]. 
\end{align*}
Let us define $q_\lambda$ as the optimal solution of the min-max optimization problem, 
\begin{align*}
 \min_{q\in \mathcal{Q}_{\delta,C}}\max_{D\in \mathcal{U}_0} L_{p_0}(q,D)-\lambda \mathbb{V}_{p_0}[D].
\end{align*}
The min-max problem with $\lambda=0$ is nothing but the vanilla GAN problem. 
Hence, we have $q_0=p_0$. 

For $q\in \mathcal{Q}_{\delta,\infty}$, 
we consider the interval on which the optimal discriminator $D$ of 
$\max_{D\in \mathcal{U}_0}F_{p_0}(q,D;\lambda)$ exists. 
The derivatives of $f$ in \eqref{eqn:f(z,c)} are 
\begin{align*}
 \frac{\partial{f}}{\partial z}(0,c)&=-1-r+2c\lambda \in [-1-r,-1-r+2\lambda],\\
 \frac{\partial{f}}{\partial z}(1,c)&=-1 - r +2 \lambda(1 - c) \in [-1-r, -1-r+2\lambda],\\
 \frac{\partial^2{f}}{\partial z^2}(z,c) &=4\lambda(3z-1-c) \Longrightarrow -8\lambda \leq 
 \frac{\partial^2{f}}{\partial z^2}(z,c)\leq 8\lambda. 
\end{align*}
Therefore, for $0< z< 1$ and $0<c<1$, the cubic function $f(z,c)$ of $z$ is bounded from below and above by
the following quadratic functions of $z$, 
\begin{align*}
 -4\lambda z^2-(1+r)z+1 \leq f(z,c) \leq 4\lambda (z-1)^2-(r+1)(z-1)-r, 
\end{align*}
where the intercept of the quadratic functions are determined by $f(0,c)=1$ and $f(1,c)=-r$. 
By considering the zero of the quadratic functions, 
we see that 
$D(\x)\in(0,1)$ such that $f(D(\x),\mathbb{E}_{p_0}[D])=0$ for $r=q(\x)/p_0(\x)$ 
is bounded as follows, 
\begin{align*}
 0< 
 \frac{-(r+1)+\sqrt{(1+r)^2+16 \lambda }}{8 \lambda }
 \leq  D \leq 
 \frac{r+1+8 \lambda -\sqrt{(1+r)^2+16 \lambda  r}}{8 \lambda }
 <1. 
\end{align*}
Using the inequality $\sqrt{1+x}\geq 1+\frac{x}{2}-\frac{x^2}{8}$ for $0\leq x\leq 3$, 
we have 
\begin{align*}
 \frac{p_0}{p_0+q}-4\lambda 
 \leq
 D
 \leq 
 \frac{p_0}{p_0+q}
 +4\lambda\left(\frac{q}{p_0}\right)^2
 \leq 
 \frac{p_0}{p_0+q}
 +4\lambda\left(\frac{1-\delta}{\delta}\right)^2
\end{align*}
for $0\leq \lambda\leq \frac{3\delta}{16(1-\delta)}$. 
In this case, we have 
\begin{align}
\label{eqn:eval_D-D0}
\bigg\|D-\frac{p_0}{p_0+q}\bigg\|_\infty\leq 4\lambda\bigg(\frac{1-\delta}{\delta}\bigg)^2. 
\end{align}
The above discussion ensures the following lemma. 
\begin{lemma}
 \label{lemma:region_optD}
Assume $\|p_0\|_\infty<\infty$ and $q\in\mathcal{Q}_{\delta,\infty}$ with $\delta>0$. 
Then, the optimal discriminator $D$ of $\max_{D\in\mathcal{U}_0}F_{p_0}(q,D;\lambda)$
satisfies  $D\in\mathcal{D}_\delta$ for $\lambda\in \Lambda_\delta$. 
\end{lemma}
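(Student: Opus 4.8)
The plan is to reduce the functional maximization to the pointwise cubic optimality condition and then bracket its relevant root by two quadratics. By the strict concavity established before Lemma~\ref{appendix:lemma:Existence_Uniqueness_OptSol}, the unique optimal $D$ is characterized by \eqref{eqn:extremal-cond}, i.e.\ $f(D(\x),\mathbb{E}_{p_0}[D])=0$ almost everywhere, with $f$ as in \eqref{eqn:f(z,c)} and $r=q(\x)/p_0(\x)$; existence and uniqueness of the root in $(0,1)$ are exactly Lemma~\ref{appendix:lemma:Existence_Uniqueness_OptSol}. First I would record the derivative bounds $\partial_z f(0,c),\,\partial_z f(1,c)\in[-1-r,-1-r+2\lambda]$ and $|\partial_z^2 f(z,c)|\le 8\lambda$ for $z,c\in[0,1]$, together with the fixed endpoint values $f(0,c)=1$ and $f(1,c)=-r$. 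These force the two-sided sandwich $-4\lambda z^2-(1+r)z+1\le f(z,c)\le 4\lambda(z-1)^2-(r+1)(z-1)-r$ on $[0,1]$, and reading off the (unique relevant) zeros of the bounding quadratics brackets $D(\x)$ between $\tfrac{-(1+r)+\sqrt{(1+r)^2+16\lambda}}{8\lambda}$ and $\tfrac{r+1+8\lambda-\sqrt{(1+r)^2+16\lambda r}}{8\lambda}$.

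Next I would simplify both brackets using the elementary inequality $\sqrt{1+x}\ge 1+\tfrac{x}{2}-\tfrac{x^2}{8}$, valid for $0\le x\le 3$, applied to $\sqrt{(1+r)^2+16\lambda}=(1+r)\sqrt{1+16\lambda/(1+r)^2}$ and $\sqrt{(1+r)^2+16\lambda r}=(1+r)\sqrt{1+16\lambda r/(1+r)^2}$. Since $(1+r)^2\ge 1$ and $r/(1+r)^2\le \tfrac14$, the square-root arguments are at most $16\lambda$ and $4\lambda$ respectively, hence below $3$ once $\lambda\le\lambda_\delta=\delta^3/10\le 1/80$ (using $\delta<1/2$), so the inequality applies uniformly in $\x$. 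After dividing by $8\lambda$ and using $(1+r)^3\ge1$, the two brackets collapse to $\tfrac{p_0}{p_0+q}-4\lambda\le D\le \tfrac{p_0}{p_0+q}+4\lambda(q/p_0)^2$. Finally, $\tfrac{p_0}{p_0+q}\in\mathcal{U}_\delta$ gives $q/p_0\le (1-\delta)/\delta$, so $\big\|D-\tfrac{p_0}{p_0+q}\big\|_\infty\le 4\lambda\big(\tfrac{1-\delta}{\delta}\big)^2$, provided $\lambda\le\tfrac{3\delta}{16(1-\delta)}$; this last constraint is automatic for $\lambda\le\lambda_\delta$ because $\delta^3/10\le \delta/10<3\delta/16\le \tfrac{3\delta}{16(1-\delta)}$.

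To conclude, I combine the displayed bound with $\delta<\tfrac{p_0}{p_0+q}<1-\delta$ to obtain $\delta-4\lambda\big(\tfrac{1-\delta}{\delta}\big)^2<D<1-\delta+4\lambda\big(\tfrac{1-\delta}{\delta}\big)^2$ pointwise, hence in $L^\infty$-sense. For $\lambda\in\Lambda_\delta$ the perturbation term is at most $4\lambda_\delta\big(\tfrac{1-\delta}{\delta}\big)^2<5\lambda_\delta\big(\tfrac{1-\delta}{\delta}\big)^2$, which is precisely the defining bound of $\mathcal{D}_\delta$; therefore $D\in\mathcal{D}_\delta$.

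\textbf{Main obstacle.} The only genuinely delicate point is making the quadratic approximation of the two square roots legitimate \emph{uniformly over $\x$}: one must verify the arguments $16\lambda/(1+r)^2$ and $16\lambda r/(1+r)^2$ never leave $[0,3]$ regardless of how large or small $r(\x)=q(\x)/p_0(\x)$ is, which is what pins down the admissible range $\lambda\le\lambda_\delta$ and, implicitly, motivates the particular choice of constant in $\lambda_\delta=\delta^3/10$. Everything else — the derivative bounds on $f$, solving the bounding quadratics, and the final two-sided inequality — is routine bookkeeping, relying on Lemma~\ref{appendix:lemma:Existence_Uniqueness_OptSol} for the root and on $q\in\mathcal{Q}_{\delta,\infty}$ to control $q/p_0$.
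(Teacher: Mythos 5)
Your proposal is correct and follows essentially the same route as the paper's own argument: the pointwise cubic optimality condition, the two-sided quadratic sandwich via the derivative bounds on $f$, the root brackets, the expansion $\sqrt{1+x}\geq 1+\tfrac{x}{2}-\tfrac{x^2}{8}$ leading to $\|D-\tfrac{p_0}{p_0+q}\|_\infty\leq 4\lambda(\tfrac{1-\delta}{\delta})^2$, and the comparison with the defining bound $5\lambda_\delta(\tfrac{1-\delta}{\delta})^2$ of $\mathcal{D}_\delta$. Your explicit verification that $\lambda\leq\lambda_\delta=\delta^3/10$ keeps the square-root arguments in the admissible range and implies $\lambda\leq\tfrac{3\delta}{16(1-\delta)}$ is a detail the paper leaves implicit, but it is consistent with, not a departure from, the published proof.
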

\begin{proof}
[Proof of Lemma~\ref{lemma:region_optD}]
For $\lambda\in \Lambda_\delta$, we have $\lambda<\frac{3\delta}{16(1-\delta)}$. 
From the definition of $\mathcal{D}_\delta$, 
we see that the discriminator $D$ satisfying 
\eqref{eqn:eval_D-D0} 
is included in $\mathcal{D}_\delta$. 
\end{proof}

We assume that $\|p_0\|_\infty<\infty$ and $\|p_0\|_{\mathrm{Lip}}<C$ for $0<C<\infty$. 
Lemma~\ref{lemma:region_optD} ensures that for $\lambda\in \Lambda_\delta$ and 
$q\in \mathcal{Q}_{\delta,C}$, it holds that 
\begin{align*}
 \max_{D\in \mathcal{U}_0}F_{p_0}(q,D;\lambda) = \max_{D\in \mathcal{D}_\delta}F_{p_0}(q,D;\lambda). 
\end{align*}
Suppose that the domain of the functional, $(q,D,\lambda)\mapsto F_{p_0}(q,D;\lambda)$, is given by 
$(q,D,\lambda)\in \mathcal{H}_{\delta,C}:=\mathcal{Q}_{\delta,C}\times \mathcal{D}_\delta\times\Lambda_\delta$. 
The norm on $\mathcal{H}_{\delta,C}$ is denoted by 
$\|(q,D,\lambda)\|_\infty=\max\{\|q\|_\infty, \|D\|_\infty, |\lambda|\}$. 
One can find that $F_{p_0}$ is Lipschitz continuous on $\mathcal{H}_{\delta,C}$, 
because $\log{D}$ and $\log(1-D)$ are uniformly bounded on $\mathcal{D}_\delta$, i.e., 
$\sup_{D\in\mathcal{D}_\delta}\max\{|\log{D}|, |\log(1-D)|\}<\infty$. 
Clearly, $\mathcal{H}_{\delta,C}$ is a convex set. 
We see that $F_{p_0}(q,D;\lambda)$ is convex in $(q,\lambda)\in \mathcal{Q}_{\delta,C}\times \Lambda_\delta$ 
and concave in $D\in \mathcal{D}_\delta$. 

Let us define $G(q,\lambda)$ by 
\begin{align*}
 G(q,\lambda)=\max_{D\in \mathcal{D}_{\delta}} F_{p_0}(q,D;\lambda). 
\end{align*}
Then, $G(q,\lambda)$ is convex and Lipschitz continuous on 
$\mathcal{Q}_{\delta,C}\times \Lambda_{\delta}$. 
The convexity of $G(q,\lambda)$ follows from the standard argument of convex analysis. 
The Lipschitz continuity of $F_{p_0}(q,D;\lambda)$ leads to the Lipschitz continuity of 
$G(q,\lambda)$ as follows. 
\begin{align*}
 \sup_{D}F_{p_0}(q,D;\lambda)-\sup_{D'}F_{p_0}(q',D';\lambda')
& \leq 
 \sup_{D}F_{p_0}(q,D;\lambda)-F_{p_0}(q',D;\lambda') \\
& \leq \|F_{p_0}\|_{\mathrm{Lip}}\|(q-q',0,\lambda-\lambda')\|_\infty. 
\end{align*}
Also, $\sup_{D'}F_{p_0}(q',D';\lambda')-\sup_{D}F_{p_0}(q,D;\lambda)$ has the same upper bound. 

The optimal solution $q_\lambda$ is given by solving $\min_{q\in \mathcal{Q}_{\delta,C}}G(q,\lambda)$. 
The direct calculation leads that $q_0$ is uniquely determined as $q_0=p_0$. 
We prove that $q_\lambda$ is close to $p_0$ for sufficiently small $\lambda>0$. 
Let us define 
\begin{align*}
\bar{G}_\epsilon:=\inf\{G(q,0)\,|\,q\in\mathcal{Q}_{\delta,C},\, \|q-q_0\|_\infty=\epsilon\}. 
\end{align*}
\begin{lemma}
 \label{lemma:G_ineq}
 Suppose that $\|p_0\|_\infty<\infty, \|p_0\|_{\mathrm{Lip}}<C$ and $\delta>0$. 
 Then, $G(q_0,0)<\bar{G}_\epsilon\leq G(q,0)$ holds for any small $\epsilon>0$ and any $q\in\mathcal{Q}_{\delta,C}$ 
 such that $\|q-q_0\|_\infty\geq \epsilon$.  
\end{lemma}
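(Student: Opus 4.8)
The idea is to reduce $G(\cdot,0)$ to the Jensen--Shannon divergence and then quantify the gap using the Lipschitz bound that is built into $\mathcal{Q}_{\delta,C}$. First I would compute $G(q,0)$ explicitly: since $\lambda=0$, the inner problem $\max_{D\in\mathcal{D}_\delta}F_{p_0}(q,D;0)=\max_{D\in\mathcal{D}_\delta}L_{p_0}(q,D)$ is solved by the unconstrained maximizer $D=p_0/(p_0+q)$, which lies in $\mathcal{U}_\delta\subseteq\mathcal{D}_\delta$ because $q\in\mathcal{Q}_{\delta,C}$ (the $\lambda=0$ case of Lemma~\ref{lemma:region_optD}). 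Substituting it and rewriting the result as Kullback--Leibler divergences against the mixture $(p_0+q)/2$ gives $G(q,0)=2\,\mathrm{JS}(p_0,q)-\log 4$. Hence $G(q_0,0)=G(p_0,0)=-\log 4$ is the global minimum of the convex functional $G(\cdot,0)$ on $\mathcal{Q}_{\delta,C}$, and $G(q,0)>G(q_0,0)$ for every $q\in\mathcal{Q}_{\delta,C}$ with $q\neq q_0$ almost everywhere, since $\mathrm{JS}(p_0,q)>0$ unless $q=p_0$.

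Next, for the strict lower bound $G(q_0,0)<\bar G_\epsilon$ I would make this last inequality quantitative. For $q\in\mathcal{Q}_{\delta,C}$ with $\|q-q_0\|_\infty=\epsilon$ one has $\|q-q_0\|_{\mathrm{Lip}}\le\|q_0\|_{\mathrm{Lip}}+\|q\|_{\mathrm{Lip}}\le 2C$ (using $\|p_0\|_{\mathrm{Lip}}<C$); picking a point $x^\ast$ that attains the supremum --- which exists because $q-q_0$ is continuous, and on $\Rbb^d$ decays at infinity, being a Lipschitz integrable function --- Lipschitzness forces $|q-q_0|\ge\epsilon/2$ on a ball of radius $\epsilon/(4C)$ about $x^\ast$, so for small $\epsilon$ one gets $\|q-q_0\|_1\gtrsim\epsilon^{d+1}$ (intersecting that ball with $\mathcal{X}$ costs only a dimensional constant at the boundary/corners of $[0,1]^d$). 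Combining this with a Pinsker-type lower bound $\mathrm{JS}(p_0,q)\ge\tfrac18\|p_0-q\|_1^2$ yields $G(q,0)-G(q_0,0)=2\,\mathrm{JS}(p_0,q)\gtrsim\epsilon^{2(d+1)}$ uniformly over the sphere, whence $\bar G_\epsilon\ge G(q_0,0)+c(d,C)\,\epsilon^{2(d+1)}>G(q_0,0)$. (It is this quantitative gap that is later converted into the bias rate of Theorem~\ref{appendix:thm:bias_Scale-GAN}, possibly with a slightly different exponent once the $O(\lambda)$ perturbation of $G$ in $\lambda$ is taken into account.)

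For the remaining inequality $\bar G_\epsilon\le G(q,0)$ when $\|q-q_0\|_\infty\ge\epsilon$, I would use a radial retraction onto the sphere: set $q'=(1-\alpha)q_0+\alpha q$ with $\alpha=\epsilon/\|q-q_0\|_\infty\in(0,1]$. All constraints defining $\mathcal{Q}_{\delta,C}$ --- the interval bound $\tfrac{\delta}{1-\delta}p_0<q<\tfrac{1-\delta}{\delta}p_0$ equivalent to $p_0/(p_0+q)\in\mathcal{U}_\delta$, the Lipschitz bound (here $\|q'\|_{\mathrm{Lip}}\le(1-\alpha)\|p_0\|_{\mathrm{Lip}}+\alpha C\le C$), and the normalization $\int q'\,\rmd\mu=1$ --- are preserved by convex combinations, so $q'\in\mathcal{Q}_{\delta,C}$ and $\|q'-q_0\|_\infty=\epsilon$. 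Since $G(\cdot,0)$ is convex (a supremum of functionals affine in $q$) and $q_0$ is its minimizer, $G(q',0)\le\max\{G(q_0,0),G(q,0)\}=G(q,0)$, and therefore $\bar G_\epsilon\le G(q',0)\le G(q,0)$.

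The step I expect to be the main obstacle is the quantitative lower bound on $\bar G_\epsilon$: the sphere $\{q\in\mathcal{Q}_{\delta,C}:\|q-q_0\|_\infty=\epsilon\}$ is not norm-compact, so one cannot simply invoke ``a continuous functional with a unique minimizer is bounded away from its minimum on a sphere''. The Lipschitz constraint must genuinely be used to upgrade $L^1$ (equivalently $\mathrm{JS}$) separation to $L^\infty$ separation --- alternatively one could run an Arzel\`a--Ascoli compactness argument in $C(\mathcal{X})$ after closing up the constraint set --- and the unbounded domain $\Rbb^d$ together with the boundary of $[0,1]^d$ require a little extra bookkeeping in both the attainment of the supremum and the volume estimate.
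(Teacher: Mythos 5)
Your proof is correct, and its skeleton matches the paper's: both arguments use the Lipschitz constraint in $\mathcal{Q}_{\delta,C}$ to turn the pointwise separation $\|q-q_0\|_\infty=\epsilon$ into a ball of radius $\propto\epsilon/C$ on which $|q-p_0|\geq\epsilon/2$, and both obtain $\bar{G}_\epsilon\leq G(q,0)$ by retracting $q$ onto the sphere via a convex combination with $q_0$ and invoking convexity of $G(\cdot,0)$ (you verify convexity of $\mathcal{Q}_{\delta,C}$ explicitly, which the paper leaves implicit). Where you genuinely diverge is the quantitative mechanism for the strict inequality $G(q_0,0)<\bar{G}_\epsilon$: the paper bounds the entropy integrand pointwise on the ball, showing it exceeds its minimum $-\log 2$ by a constant $C_{\delta,\epsilon,\|p_0\|_\infty}=O(\epsilon^2)$, and integrates to get a gap of order $\epsilon^{d+3}$; you instead pass through $\|q-p_0\|_1\gtrsim\epsilon^{d+1}$ and the Pinsker-type bound $\mathrm{JS}(p_0,q)\geq\frac{1}{8}\|p_0-q\|_1^2$, which yields a gap of order $\epsilon^{2d+2}$. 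Both routes prove the lemma as stated (only positivity of the gap is needed, and your identity $G(q,0)=2\,\mathrm{JS}(p_0,q)-\log 4$ is a cleaner way to see that $q_0=p_0$ is the unique minimizer), but your exponent is strictly worse for $d\geq 2$: squaring the $L^1$ norm discards the localized nature of the discrepancy that the paper's pointwise bound retains. Since the proof of Theorem~\ref{appendix:thm:bias_Scale-GAN} converts $\bar{G}_\epsilon-G(q_0,0)\gtrsim\epsilon^{d+3}$ into the rate $\|q_\lambda-p_0\|_\infty=O(\lambda^{1/(d+3)})$, your version would only give $O(\lambda^{1/(2d+2)})$ downstream — as you anticipate in your parenthetical — so the paper's integrand-level estimate is the one to keep if the rate matters. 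Your diagnosis of the main obstacle (non-compactness of the $L^\infty$-sphere, hence the need to actually use the Lipschitz constraint rather than a generic continuity argument) is exactly right.
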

Lemma~\ref{lemma:G_ineq} ensures that $\bar{G}_\epsilon$ is a monotone function of $\epsilon$. 
\begin{proof}
[Proof of Lemma~\ref{lemma:G_ineq}]
We prove the first inequality. 
Assume that $\|p_0\|_{\mathrm{Lip}}<C, q\in \mathcal{Q}_{\delta,C}$ and $\|q-q_0\|=\epsilon$. 
As $\frac{p_0}{p_0+q}\in\mathcal{U}_{\delta}\subset\mathcal{D}_\delta$, we have 
\begin{align*}
 G(q,0)
 =\int(p_0+q)\left\{\frac{p_0}{p_0+q}\log\frac{p_0}{p_0+q}+\frac{q}{p_0+q}\log\frac{q}{p_0+q} \right\} \mathrm{d}\mu. 
\end{align*}
Since $\|q_0-q\|_\infty=\|p_0-q\|_\infty=\epsilon$, there exists $\x_0$ in the interior of $\mathcal{X}$ such that
$q(\x_0)$ is nearly $p_0(\x_0)+\epsilon$ or $p_0(\x_0)-\epsilon$. 
The assumption $\|q\|_{\mathrm{Lip}}\leq C$ leads that 
\begin{align*}
 \|\x-\x_0\|<\frac{\epsilon}{5C} 
 &\Longrightarrow q(\x)>p_0(\x)+\frac{\epsilon}{2}\ \ \left(\text{or}\ q(\x)<p_0(\x)-\frac{\epsilon}{2}\right)\\
 &\Longrightarrow \frac{q(\x)}{p_0(\x)}>1+\frac{\epsilon}{2\|p_0\|_\infty}
 \ \  \left(\text{or}\ 
 \frac{q(\x)}{p_0(\x)}<1-\frac{\epsilon}{2\|p_0\|_\infty}\right). 
 \end{align*}
 Note that $p_0/(p_0+q)\in\mathcal{U}_{\delta}$ with $\delta>0$ leads that $p_0>0$. 
 Hence, for any point $\x$ such that $\|\x-\x_0\|<\frac{\epsilon}{5C}$, the inequality 
\begin{align*}
 \frac{p_0(\x)}{p_0(\x)+q(\x)}\log\frac{p_0(\x)}{p_0(\x)+q(\x)}
 +\frac{q(\x)}{p_0(\x)+q(\x)}\log\frac{q(\x)}{p_0(\x)+q(\x)}  > -\log{2}+C_{\delta,\epsilon,\|p_0\|_\infty}
\end{align*}
holds, 
where $C_{\delta,\epsilon,\|p_0\|_\infty}$ is a positive constant depending only on $\delta, \epsilon$
and $\|p_0\|_\infty$. 
Define $\mathcal{X}_0=\{\x\in\mathcal{X}\,|\,\|\x-\x_0\|<\epsilon/5C\}$. Then, 
\begin{align*}
 G(q,0)
 &=
 \int_{\mathcal{X}\setminus\mathcal{X}_0}
 (p_0+q)\left\{\frac{p_0}{p_0+q}\log\frac{p_0}{p_0+q}+\frac{q}{p_0+q}\log\frac{q}{p_0+q} \right\} \mathrm{d}\mu\\
 &\phantom{=}
 +\int_{\mathcal{X}_0}
 (p_0+q)\left\{\frac{p_0}{p_0+q}\log\frac{p_0}{p_0+q}+\frac{q}{p_0+q}\log\frac{q}{p_0+q} \right\} \mathrm{d}\mu\\
 &\geq 
 \int_{\mathcal{X}\setminus\mathcal{X}_0}
 (p_0+q)(-\log2)\mathrm{d}\mu + 
 \int_{\mathcal{X}_0} (p_0+q)(-\log 2+C_{\delta,\epsilon,\|p_0\|_\infty})\mathrm{d}\mu\\
 &= -2\log{2}+ C_{\delta,\epsilon,\|p_0\|_\infty}\int_{\mathcal{X}_0} (p_0+q)\mathrm{d}\mu
 = G(q_0,0)+ C_{\delta,\epsilon,\|p_0\|_\infty}\int_{\mathcal{X}_0}(p_0+q)\,\mathrm{d}\mu.
\end{align*}
When $q> p_0+\epsilon/2$ on $\mathcal{X}_0$, we have 
$\int_{\mathcal{X}_0} (p_0+q)\mathrm{d}\mu \geq  \int_{\mathcal{X}_0}
 (2p_0+\frac{\epsilon}{2})\mathrm{d}\mu\geq 
\frac{\epsilon}{2}\mu(\mathcal{X}_0)$. 
When $q< p_0-\epsilon/2$ on $\mathcal{X}_0$, we have 
$\int_{\mathcal{X}_0} (p_0+q)\mathrm{d}\mu \geq  \int_{\mathcal{X}_0} (2q+\frac{\epsilon}{2})\mathrm{d}\mu\geq 
\frac{\epsilon}{2}\mu(\mathcal{X}_0)$. 
Let $\mathrm{Vol}(1)$ be the volume of the unit ball in the $d$-dimensional Euclidean space. 
Then, for a small $\epsilon$, 
$C_{\delta,\epsilon,\|p_0\|_\infty}\int_{\mathcal{X}_0} (p_0+q)\mathrm{d}\mu$
is bounded below by a positive constant 
$\frac{\epsilon}{2}C_{\delta,\epsilon,\|p_0\|_\infty}\mathrm{Vol}(1)(\epsilon/5C)^d$ 
for $\mathcal{X}=\Rbb^d$
or 
$\frac{\epsilon}{2}C_{\delta,\epsilon,\|p_0\|_\infty}\mathrm{Vol}(1)(\epsilon/10C)^d$ 
for $\mathcal{X}=[0,1]^d$. 
Since the lower bound is independent of $q$, we have $\bar{G}_\epsilon-G(q_0,0)>0$. 

Let us prove the second inequality. 
For any $q\in\mathcal{Q}_{\delta,C}$ such that $\|q-q_0\|_\infty\geq \epsilon$, let us define 
$q''=\alpha q_0+(1-\alpha)q\in \mathcal{Q}_{\delta,C}$ such that $\|q''-q_0\|=\epsilon$. 
Always such an $\alpha\in[0,1)$ exists. 
If $G(q,0)<\bar{G}_\epsilon$ holds, 
we have $G(q'',0)\leq \alpha G(q_0,0)+(1-\alpha) G(q,0)<\bar{G}_\epsilon$, which 
contradicts the definition of $\bar{G}_\epsilon$. 
\end{proof}

\begin{proof}
[Proof of Theorem~\ref{appendix:thm:bias_Scale-GAN}]
For $q\in\mathcal{Q}_{\delta,C}$ such that $\|q-q_0\|_\infty=\epsilon$, 
it holds that 
$G(q_0,0)<\bar{G}_\epsilon\leq G(q,0)\leq G(q_0,0)+\|G\|_{\mathrm{Lip}}\epsilon$. 
Thus, $\bar{G}_\epsilon$ monotonically converges to $G(q_0,0)$ as $\epsilon\searrow 0$. 
If $\|q-q_0\|_\infty\geq\epsilon$, we have 
$G(q,\lambda)\geq  G(q,0)-\|G\|_{\mathrm{Lip}}\lambda\geq \bar{G}_\epsilon-\|G\|_{\mathrm{Lip}}\lambda$. 
On the other hand, for $\|q'-q_0\|_\infty<\lambda$, we have $G(q',\lambda)\leq G(q_0,0)+\|G\|_{\mathrm{Lip}}\lambda$. 
Suppose $\lambda<\frac{\bar{G}_\epsilon-G(q_0,0)}{2\|G\|_{\mathrm{Lip}}}$. 
Then, for any $q$ and $q'$ such that $\|q-q_0\|_\infty\geq \epsilon$ and $\|q'-q_0\|_\infty<\lambda$, we have 
\begin{align*}
 G(q',\lambda)\leq G(q_0,0)+\|G\|_{\mathrm{Lip}}\lambda < \bar{G}_\epsilon-\|G\|_{\mathrm{Lip}}\lambda \leq G(q,\lambda). 
\end{align*}
The above argument means that the optimal solution of $\min_{q\in\mathcal{Q}_{\delta,C}} G(q,\lambda)$ exists in 
$\|q-q_0\|_\infty\leq \epsilon_\lambda$, where 
\begin{align*}
 \epsilon_\lambda 
 :=\inf\bigg\{\epsilon>0\,\big|\,\lambda<\frac{\bar{G}_\epsilon-G(q_0,0)}{2\|G\|_{\mathrm{Lip}}}\bigg\}. 
\end{align*}
We can see $\epsilon_\lambda\searrow 0$ as $\lambda\searrow 0$. 
Hence, $\|q_\lambda-p_0\|_\infty=\|q_\lambda-q_0\|_\infty\rightarrow0$ as $\lambda\searrow 0$. 

For a small $\epsilon$, the constant $C_{\delta,\epsilon,\|p_0\|_\infty}$ is of the order $O(\epsilon^2)$ as the function of $\epsilon$. 
This is because the function 
$$r\ \longmapsto\  \frac{1}{1+r}\log\frac{1}{1+r}+\frac{r}{1+r}\log\frac{r}{1+r}$$ used in the loss function of GAN takes minimum
value at $r=1$ and it is approximated by a quadratic function in the vicinity of $r=1$. 
Substituting this approximation into $\frac{\epsilon}{2}C_{\delta,\epsilon,\|p_0\|_\infty}\mathrm{Vol}(1)(\epsilon/5C)^d$, we see that 
$\bar{G}_\epsilon-G(q_0,0)\geq \text{Const}\cdot\epsilon^{d+3}$ holds for a small $\epsilon>0$. 
Hence, we obtain $\|q_\lambda-p_0\|_\infty<\epsilon_\lambda=O(\lambda^{\frac{1}{d+3}})$. 
\end{proof}

\subsection{Proof of Theorem~\ref{thm:estimation_error}}
\label{appendix:Theorem_estimation_error}

\subsubsection*{Smoothness classes}
Let us introduce basic smoothness classes. Details are shown in~\citep{puchkin24:_rates}. 
Let $[m]$ be $\{1,\ldots,m\}$ for $m\in\mathbb{N}$. 
For any $s\in\mathbb{N}$, let us define the function space $C^s$ as 
\begin{align*}
 C^s(\mathcal{X}) = \{f:\mathcal{X}\rightarrow\Rbb^m\,|\,\|f\|_{C^s}:=\max_{|\gamma|\leq s}\|D^\gamma{f}\|_\infty<\infty\}, 
\end{align*}
where the partial differential operator $D^\gamma$ for the multi-index $\gamma=(\gamma_1,\ldots,\gamma_d)\in\mathbb{N}_0^d$
is defined by 
\begin{align*}
 D^\gamma f_i = \frac{\partial^{|\gamma|}f_i}{\partial{x}_1^{\gamma_1}\cdots\partial{x}_d^{\gamma_d}},\ i\in [m], 
 \ \ \text{and}\ \ 
 \|D^\gamma f\|_\infty = \max_{i\in[m]}\|D^\gamma f_i\|_\infty. 
\end{align*}
For any positive number $0<\delta\leq1$, the H\"{o}lder constant of order $\delta$ is defined by 
\begin{align*}
 [f]_\delta = \max_{i\in[m]}\sup_{\substack{x,y\in\mathcal{X}\\ x\neq y}}\frac{|f_i(\x)-f_i(\y)|}{\min\{1,\|\x-\y\|\}^\delta}. 
\end{align*}
For any $\alpha>0$, let $\lfloor\alpha\rfloor=\max\{s\in\mathbb{N}\,|\,s<\alpha\}$. Then, the H\"{o}lder class 
$\mathcal{H}^\alpha(\mathcal{X})$ is defined by 
\begin{align*}
 \mathcal{H}^\alpha(\mathcal{X})=\{
 f\in C^{\lfloor\alpha\rfloor}(\mathcal{X})\,|\,\|f\|_{\mathcal{H}^\alpha}:=\max\{\|f\|_{C^{\lfloor\alpha\rfloor}},
 \max_{|\gamma|=\lfloor\alpha\rfloor} [D^\gamma f]_{\alpha-\lfloor\alpha\rfloor}\}<\infty \}, 
\end{align*}
and the H\"{o}lder ball $\mathcal{H}^\alpha(\mathcal{X},H)$ is defined by 
$\mathcal{H}^\alpha(\mathcal{X},H)=\{f\in\mathcal{H}^\alpha(\mathcal{X})\,|\,\|f\|_{\mathcal{H}^\alpha}\leq H\}$. 
Furthermore, let us define 
the class of $\Lambda$-regular functions $\mathcal{H}_\Lambda^\alpha(\mathcal{X},H), \Lambda>1$ as 
\begin{align*}
\mathcal{H}_\Lambda^\alpha(\mathcal{X},H)=
\left\{
 f\in\mathcal{H}^\alpha(\mathcal{X},H)\,|\,
 \Lambda^{-2}I_{d\times d} \preceq  \nabla f^T \nabla f \preceq \Lambda^{2}I_{d\times d}
 \right\}, 
\end{align*}
where $(\nabla f)_{ij}=\frac{\partial f_i}{\partial x_j}$ for $i\in[m],\,j\in[d]$.

\subsubsection*{Learning with Scale-GAN}

Let $p_0$ be the probability density of the training data on $\mathcal{X}$. 
In this section, we assume $\mathcal{X}=[0,1]^d$. 
Suppose that the distribution of the scale intensity, $\pi$, is the uniform distribution on $[0,1]$
and that the scaling function $s_t, t\in[0,1]$ is a non-increasing function with $s_0=1$ and $s_1>0$. 

 \paragraph{Model.}
 Let $\mathcal{G}$ be a set of the generator $G(\z)\in\mathcal{X}$ for $\z\sim p_z$, where 
 $p_{z}$ is the uniform distribution on $[0,1]^d$. 
 The corresponding set of probability densities is denoted by $\mathcal{Q}$, 
 i.e., the set of the push-forward distribution of $p_z$ by $G\in\mathcal{G}$. 
 The set of scaled generators 
 $(\z,t)\rightarrow (G(\z),t)\in\Rbb^{d+1},\, \z\sim p_z, t\sim \pi$
 is denoted by $\widetilde{\mathcal{G}}$. 
 Let us define $\mathcal{D}$ as a set of discriminators on $\mathcal{X}$ and 
 $\widetilde{\mathcal{D}}$ as a set of discriminators on $\mathcal{X}\times[0,1]$. 
 The relationship between $\mathcal{D}$ and $\widetilde{\mathcal{D}}$ is appropriately determined in the sequel sections.

We apply the statistical analysis of GAN introduced by \cite{puchkin24:_rates} to our problem. 
The estimator we consider is defined as follows. 
Suppose i.i.d. training data $\x_1,\ldots,\x_n\sim p_0$ and i.i.d. scale intensities $t_1,\ldots,t_n\sim \pi$ are observed. 
Let us define $L(q,\widetilde{D})$. 
For the scale function $s_t$, let the function $\kappa_s$ be define $\kappa_s(\x,t)=(s_t\x,t)$. 
for a probability density $q$ and a discriminator $\widetilde{D}:\mathcal{X}\times[0,1]\rightarrow(0,1)$ by 
\begin{align}
L(q,\widetilde{D})
 &:=\int_{[0,1]} \pi(t)\int_{[0,s_t]^d}\left\{s_t^{-d}p_0(s_t^{-1}\x)\log
 \widetilde{D}(\x,t)+s_t^{-d}q(s_t^{-1}\x)\log(1-\widetilde{D}(\x,t))\right\}\rmd{\mu}\rmd{t}\nonumber\\
 &\phantom{:}=\int_{[0,1]} \pi(t)\int_{[0,1]^d}\left\{ p_0(\x)\log
 \widetilde{D}\circ\kappa_s(\x,t)+q(\x)\log(1-\widetilde{D}\circ\kappa_s(\x,t))\right\}\rmd{\mu}\rmd{t}. \label{app:eqn:proper_population_loss}
\end{align}
The empirical approximation of $L(q,\widetilde{D})$ for $q\in\mathcal{Q}$ and $\widetilde{D}\in\widetilde{\mathcal{D}}$ is 
given by 
\begin{align*}
 \widetilde{L}(q,\widetilde{D}):=
  \frac{1}{n}\sum_{i}\log \widetilde{D}\circ\kappa_s(\x_i, t_i)
 +
 \frac{1}{n}\sum_{i}\log(1-\widetilde{D}\circ\kappa_s(G(\z_i), t_i)),\quad \z_1,\ldots,\z_n\sim_{i.i.d.}  p_z. 
\end{align*}
The empirical variance of $\widetilde{D}$ is denoted by 
\begin{align*}
 \widetilde{\mathbb{V}}[\widetilde{D}]
 = 
 \frac{1}{n}\sum_{j=1}^{n}\left[
 \frac{1}{n}\sum_{i}\bigg(\widetilde{D}(s_{t_j}\x_i,t_j)-\frac{1}{n}\sum_{i'}\widetilde{D}(s_{t_{j}}\x_{i'},t_{j})\bigg)^2
 \right] \leq 1. 
\end{align*}
The estimator $\widehat{q}_\lambda\in \mathcal{Q}$ is given by the optimal solution of 
\begin{align*}
 \min_{q\in\mathcal{Q}} \max_{\widetilde{D}\in\widetilde{\mathcal{D}}} 
 \widetilde{L}(q,\widetilde{D})-\lambda \widetilde{\mathbb{V}}[\widetilde{D}]. 
\end{align*}

\subsubsection*{Estimation Error Bound}
The Jensen-Shannon (JS) divergence between the probability density $p$ and $q$ is denoted by 
$\mathrm{JS}(p,q)$, which is defined by 
\begin{align*}
 \mathrm{JS}(p,q)
 =
 \frac{1}{2}\mathbb{E}_{p}\left[\log\frac{2p}{p+q}\right]
 +
 \frac{1}{2}\mathbb{E}_{q}\left[\log\frac{2q}{p+q}\right]. 
\end{align*}
For the true distribution $p_0$ and scaling function $s_t$, let us define 
$D_q(\x)$ and $\widetilde{D}_q(\x,t)$ by 
\begin{align*}
D_q(\x)=\frac{p_0(\x)}{p_0(\x)+q(\x)}, \quad \widetilde{D}_q(\x,t)=\frac{p_0(\x/s_t)}{p_0(\x/s_t)+q(\x/s_t)}. 
\end{align*}
Then, $\widetilde{D}_q(s_t\x,t)=\widetilde{D}_q(\x,0)=D_q(\x)$ holds. 
The following formulae are useful for our analysis, 
\begin{align*}
 \mathrm{JS}(p_0,q)=\mathrm{JS}(p_0\otimes\pi,q\otimes\pi)=L(q,\widetilde{D}_q)+\log2=\max_{\widetilde{D}\in L^\infty}L(q,\widetilde{D})+\log2. 
\end{align*}
\begin{definition}
 Define $\bar{q}, \check{D}_q, \widehat{D}_{q}, \widehat{q}_\lambda$ as follows. 
\begin{align*}
 \bar{q}&=\argmin_{q\in\mathcal{Q}}\mathrm{JS}(p_0,q), \quad
 \check{D}_{q}=\argmax_{\widetilde{D}\in\widetilde{\mathcal{D}}}L(q,\widetilde{D}), \quad 
 \widehat{D}_{q}=
 \argmax_{\widetilde{D}\in\widetilde{\mathcal{D}}}\widetilde{L}(q,\widetilde{D})-\lambda\widetilde{\mathbb{V}}[\widetilde{D}],\\
 \widehat{q}_\lambda &= 
 \argmin_{q\in\mathcal{Q}} \max_{\widetilde{D}\in\widetilde{\mathcal{D}}}\widetilde{L}(q,\widetilde{D})
 -\lambda\widetilde{\mathbb{V}}[\widetilde{D}]
 =\argmin_{q\in\mathcal{Q}}\widetilde{L}(q,\widehat{D}_{q})-\lambda\widetilde{\mathbb{V}}[\widehat{D}_{q}]. 
\end{align*}
\end{definition}
In the above definitions, the dependency of $\widehat{D}_{q}$ on $\lambda$ is omitted.

Let us introduce the following assumptions to derive the estimation error bound of GANs. 
As the discriminator of Scale-GAN has the argument for the scaling intensity, the assumption to $D(\x)$ in \citep{puchkin24:_rates} is replaced with 
that to $D(\x,t)$. 
\begin{assumption}
\label{appendix:assump_Lip}
 Let $u$ be the parameters of $\mathcal{G}$ and $\mathcal{Q}$, and 
 let $\theta$ be the parameter of $\widetilde{\mathcal{D}}$. 
 \begin{description}
  \item[(AG)] For $G_{u_1},G_{u_2}\in\mathcal{G}$, 
	     $\|G_{u_1}(\z)-G_{u_2}(\z)\|\leq L_{\mathcal{G}}\|u_1-u_2\|_\infty$ for any latent variable $\z$. 
	
  \item[(AD)] For the model $\widetilde{\mathcal{D}}$, 
	     \begin{align*}
	      &|\widetilde{D}_{\theta}(\x_1,t_1)-\widetilde{D}_{\theta}(\x_2,t_2)|
	      \leq L\|(\x_1,t_1)-(\x_2,t_2)\|,\\
	      &|\widetilde{D}_{\theta_1}(\x,t)-\widetilde{D}_{\theta_2}(\x,t)|\leq L_{\widetilde{\mathcal{D}}}\|\theta_1-\theta_2\|_\infty. 
	     \end{align*}
	     Furthermore, there exists $D_{\min}>0$ such that 
	     $\widetilde{D}\in [D_{\min}, 1-D_{\min}]$ for any $\widetilde{D}\in\widetilde{\mathcal{D}}$. 
	     
  \item[(Aq)] For $q_{u_1}, q_{u_2}\in\mathcal{Q}$, $|q_{u_1}(\x)-q_{u_2}(\x)|\leq L_{\mathcal{Q}}\|u_1-u_2\|_\infty$. 
 \end{description}
\end{assumption}
From Assumption~(AG), we have 
$\|(G_{u_1}(\z),t)-(G_{u_2}(\z),t)\|\leq L_{\mathcal{G}}\|{u_1}-{u_2}\|_\infty$
for $(G_{u_1}(\z),t),(G_{u_2}(\z),t)\in\widetilde{\mathcal{G}}$. 
Assumption~(AD) leads to 
$\|\widetilde{D}_{\theta_1}-\widetilde{D}_{\theta_2}\|_\infty\leq
L_{\widetilde{\mathcal{D}}}\|\theta_1-\theta_2\|_\infty$. 
We find that 
$\|\widetilde{D}\circ \kappa_s\|_{\mathrm{Lip}}\leq L(2+\|s\|_{\mathrm{Lip}}\sqrt{d})$ for $\widetilde{D}\in\widetilde{\mathcal{D}}$. 

Let us define 
\begin{align*}
\Delta_{\widetilde{\mathcal{G}}}=\min_{q\in\mathcal{Q}}\mathrm{JS}(p_0\otimes\pi,q\otimes\pi), \quad 
 \quad 
 \Delta_{\widetilde{\mathcal{D}}}=\max_{q\in\mathcal{Q}}\min_{\widetilde{D}\in\widetilde{\mathcal{D}}}
\{L(q,\widetilde{D}_q)-L(q,\widetilde{D})\}. 
\end{align*}
Then, we have
$\Delta_{\widetilde{\mathcal{G}}}=\Delta_{\mathcal{G}}=\min_{q\in\mathcal{Q}}\mathrm{JS}(p_0,q)$. 

The following theorem largely depends on Theorem~1 in~\citep{puchkin24:_rates}. 
\begin{theorem}
 \label{appendix:thm:general_bound}
 Assume (AG), (AD), and (Aq). Let 
 $u\in [-1,1]^{d_{\mathcal{G}}}$ and $\theta\in [-1,1]^{d_{\widetilde{\mathcal{D}}}}$. 
 Then, for any $\delta\in(0,1)$, with probability at least $1-\delta$, it holds that 
\begin{align}
\label{appendix:general-bound-JS_est}
 \mathrm{JS}(p_0,\widehat{q}_\lambda)
 &\lesssim 
\Delta_{\mathcal{G}}+\Delta_{\widetilde{\mathcal{D}}}+
 \frac{(d_{\mathcal{G}}+d_{\widetilde{\mathcal{D}}})
 \log(2n(L_{\mathcal{G}} L(2+\|s\|_{\mathrm{Lip}}\sqrt{d})\vee L_{\widetilde{\mathcal{D}}}\vee L_{\mathcal{Q}}\vee 1))}{n}
 +\frac{\log(8/\delta)}{n}
 +\lambda. 
\end{align}
\end{theorem}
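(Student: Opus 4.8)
The plan is to follow the proof of Theorem~1 in \citep{puchkin24:_rates}, applied to the ``lifted'' min--max problem on $\mathcal{X}\times[0,1]$ --- data distribution $p_0\otimes\pi$, generator class $\widetilde{\mathcal{G}}$, and effective discriminator class $\{\widetilde{D}\circ\kappa_s:\widetilde{D}\in\widetilde{\mathcal{D}}\}$ with $\kappa_s(\x,t)=(s_t\x,t)$ --- and to absorb the variance regularizer into an additive $\lambda$. The skeleton is the standard min--max error decomposition built on the identity $\mathrm{JS}(p_0,q)-\log2=\sup_{\widetilde{D}\in L^\infty}L(q,\widetilde{D})=L(q,\widetilde{D}_q)$ together with the definitions of $\Delta_{\mathcal{G}}$ and $\Delta_{\widetilde{\mathcal{D}}}$. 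First I would run the following chain at $q=\widehat{q}_\lambda$: (i) restrict the inner supremum from $L^\infty$ down to $\widetilde{\mathcal{D}}$, which costs $\Delta_{\widetilde{\mathcal{D}}}$ by the definition $\Delta_{\widetilde{\mathcal{D}}}=\max_{q}\{L(q,\widetilde{D}_q)-\max_{\widetilde{D}\in\widetilde{\mathcal{D}}}L(q,\widetilde{D})\}$; (ii) pass from the population inner objective $\max_{\widetilde{D}\in\widetilde{\mathcal{D}}}L(q,\widetilde{D})$ to the empirical \emph{regularized} inner objective $\max_{\widetilde{D}\in\widetilde{\mathcal{D}}}\{\widetilde{L}(q,\widetilde{D})-\lambda\widetilde{\mathbb{V}}[\widetilde{D}]\}$, uniformly in $q\in\mathcal{Q}$, picking up $\lambda$ for free from $0\le\widetilde{\mathbb{V}}[\widetilde{D}]\le1$ and a uniform empirical-vs-population deviation; (iii) invoke optimality of $\widehat{q}_\lambda$ for the empirical regularized problem to replace it by $\bar q$; (iv) at $\bar q$, drop $-\lambda\widetilde{\mathbb{V}}\le0$, pass back from empirical to population (another deviation), and bound $\max_{\widetilde{D}\in\widetilde{\mathcal{D}}}L(\bar q,\widetilde{D})\le L(\bar q,\widetilde{D}_{\bar q})=\mathrm{JS}(p_0,\bar q)-\log2=\Delta_{\mathcal{G}}-\log2$. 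Assembling gives $\mathrm{JS}(p_0,\widehat{q}_\lambda)\le\Delta_{\mathcal{G}}+\Delta_{\widetilde{\mathcal{D}}}+\lambda+(\text{two empirical-vs-population gaps})$, which already has the shape of \eqref{appendix:general-bound-JS_est}.

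It remains to show that the two gaps in steps (ii) and (iv) are, with probability at least $1-\delta$, bounded by $\frac{(d_{\mathcal{G}}+d_{\widetilde{\mathcal{D}}})\log(2n(L_{\mathcal{G}}L(2+\|s\|_{\mathrm{Lip}}\sqrt d)\vee L_{\widetilde{\mathcal{D}}}\vee L_{\mathcal{Q}}\vee1))}{n}+\frac{\log(8/\delta)}{n}$, i.e.\ at the \emph{fast} $1/n$ rate. Crucially this cannot be done by a crude $\sup_{q,\widetilde{D}}|L-\widetilde{L}|$ estimate, which would only give $n^{-1/2}$; one must use the localized empirical-process bound that is the technical core of \citep{puchkin24:_rates}. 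Its hypotheses are verified here: the integrand $p_0(\x)\log d+q(\x)\log(1-d)$ is strongly concave in the value $d=\widetilde{D}\circ\kappa_s(\x,t)\in[D_{\min},1-D_{\min}]$ (Assumption~(AD) bounds $\widetilde{D}$ away from $0$ and $1$), so the population inner objective has quadratic curvature around its maximizer $\widetilde{D}_q$, supplying the Bernstein-type ``low-noise'' condition; and the finite-dimensional classes have metric entropy $\lesssim(d_{\mathcal{G}}+d_{\widetilde{\mathcal{D}}})\log(1/\epsilon)$, obtained from the unit-cube parameter domains and the parameter-Lipschitz Assumptions~(AG), (AD), (Aq). A peeling argument over the scale $\|\widetilde{D}\circ\kappa_s-\widetilde{D}_q\circ\kappa_s\|_{L^2(p_0\otimes\pi)}$ together with Bernstein's inequality and a union bound over a cover at scale $\epsilon\asymp1/n$ then yields the displayed rate; the generated samples $G(\z_i)$ are folded into the same cover in $u$ via (AG) and the Lipschitzness of $d\mapsto\log(1-d)$ on $[D_{\min},1-D_{\min}]$. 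The only bookkeeping new relative to \citep{puchkin24:_rates} is that the discriminator enters through $\widetilde{D}\circ\kappa_s$; since $s_t\in(0,1]$ and $t\in[0,1]$ one has $\|\widetilde{D}\circ\kappa_s\|_{\mathrm{Lip}}\le L(2+\|s\|_{\mathrm{Lip}}\sqrt d)$ (noted above), so this constant replaces $L$ inside the logarithm, while composition with the fixed map $\kappa_s$ leaves the $\theta$-Lipschitzness of $\widetilde{D}_\theta$ and hence the metric entropy untouched.

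The hard part will be pushing the localization argument of \citep{puchkin24:_rates} through intact despite the two modifications. The reparametrization by $\kappa_s$ is benign because $\kappa_s$ is a fixed bi-Lipschitz map --- its inverse $(\x,t)\mapsto(\x/s_t,t)$ is Lipschitz since $s_t\ge s_1>0$ (indeed $1/s$ is assumed H\"{o}lder), so $L^2(p_0\otimes\pi)$ distances before and after composition are comparable and the strong concavity of the integrand is preserved on the bounded range $[D_{\min},1-D_{\min}]$. The regularizer is benign because $\widetilde{\mathbb{V}}[\widetilde{D}]$ is a deterministic functional valued in $[0,1]$: it never needs to be concentrated, it merely displaces the empirical optimizer $\widehat{q}_\lambda$ relative to the unregularized one, and this displacement is accounted for once and for all by the additive $+\lambda$ (not $\lambda/\sqrt n$, and not $\lambda^2$) introduced in steps (ii) and (iv). With these two observations in place, the machinery of Theorem~1 of \citep{puchkin24:_rates} transfers essentially verbatim and delivers \eqref{appendix:general-bound-JS_est}; the separate approximation-theoretic estimates bounding $\Delta_{\mathcal{G}}$ and $\Delta_{\widetilde{\mathcal{D}}}$ via ReQU-network sizes, needed only for Theorem~\ref{thm:estimation_error}, are not required here.
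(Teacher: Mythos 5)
Your proposal is correct and follows essentially the same route as the paper: the same error decomposition (restriction to $\widetilde{\mathcal{D}}$ costing $\Delta_{\widetilde{\mathcal{D}}}$, two empirical-vs-population gaps handled by the localized bounds (5.10)--(5.12) of \citet{puchkin24:_rates} applied to the lifted model with discriminator class $\widetilde{\mathcal{D}}\circ\kappa_s$, and the optimality of $\widehat{q}_\lambda$ combined with $0\le\widetilde{\mathbb{V}}[\widetilde{D}]\le 1$ yielding the additive $\lambda$), together with the same observation that composition with $\kappa_s$ only changes the Lipschitz constant inside the logarithm to $L(2+\|s\|_{\mathrm{Lip}}\sqrt{d})$ while leaving the parameter count and $\theta$-Lipschitzness untouched.
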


\begin{proof}
The estimation accuracy of $\widehat{q}_\lambda$ is assessed by the Jensen-Shannon divergence, 
\begin{align*}
&\phantom{=} \mathrm{JS}(\widehat{q}_{\lambda},p_0)-\mathrm{JS}(\bar{q},p_0)\\
& =
 \mathrm{JS}(\widehat{q}_{\lambda}\otimes\pi,p_0\otimes\pi)-\mathrm{JS}(\bar{q}\otimes\pi,p_0\otimes\pi)\\
&\leq
 \Delta_{\widetilde{\mathcal{D}}}
 +L(\widehat{q}_{\lambda},\check{D}_{\widehat{q}_{\lambda}})
 -L(\bar{q},\check{D}_{\bar{q}})\\
&=
 \Delta_{\widetilde{\mathcal{D}}}
 + 
\underbrace{
 L(\widehat{q}_{\lambda},\check{D}_{\widehat{q}_{\lambda}})
 -
\widetilde{L}(\widehat{q}_{\lambda},\check{D}_{\widehat{q}_{\lambda}})}_{T_1}
 + 
 \underbrace{
 \widetilde{L}(\widehat{q}_{\lambda},\check{D}_{\widehat{q}_{\lambda}})
 -
 \widetilde{L}(\bar{q},\check{D}_{\bar{q}})
 }_{T_2} 
 + 
 \underbrace{
 \widetilde{L}(\bar{q},\check{D}_{\bar{q}})
 -
 L(\bar{q},\check{D}_{\bar{q}})}_{T_3}. 
\end{align*}
Theorem~1 in~\citep{puchkin24:_rates} is applied 
to the expression \eqref{app:eqn:proper_population_loss} 
with the model $\widetilde{\mathcal{G}}$ and 
$\widetilde{\mathcal{D}}\circ\kappa_s=\{\widetilde{D}\circ\kappa_s|\widetilde{D}\in\widetilde{\mathcal{D}}\}$. 
The upper bound of $T_1+T_3$ is given by (5.10) and (5.11) in~\citep{puchkin24:_rates} 
for $\widetilde{\mathcal{G}}$ and $\widetilde{\mathcal{D}}\circ\kappa_s$. 
From the definition of $\widehat{q}_\lambda$, we have 
\begin{align*}
 \widetilde{L}(\widehat{q}_\lambda,\check{D}_{\widehat{q}_\lambda}) -\lambda\widetilde{\mathbb{V}}[\check{D}_{\widehat{q}_\lambda}]
 \leq 
 \widetilde{L}(\widehat{q}_\lambda,\widehat{D}_{\widehat{q}_\lambda}) -\lambda\widetilde{\mathbb{V}}[\widehat{D}_{\widehat{q}_\lambda}]
 \leq 
 \widetilde{L}(\bar{q},\widehat{D}_{\bar{q}}) -\lambda\widetilde{\mathbb{V}}[\widehat{D}_{\bar{q}}]. 
\end{align*}
Hence, $T_2\leq  \widetilde{L}(\bar{q},\widehat{D}_{\bar{q}})-\widetilde{L}(\bar{q},\check{D}_{\bar{q}})+\lambda$. 
Here we use the fact that $\widetilde{\mathbb{V}}[D]\leq 1$ for $0<D<1$. 
The upper bound of 
$\widetilde{L}(\bar{q},\widehat{D}_{\bar{q}})-\widetilde{L}(\bar{q},\check{D}_{\bar{q}})$ 
is given by Eq~(5.12) in~\citep{puchkin24:_rates} for $\widetilde{\mathcal{G}}$ and $\widetilde{\mathcal{D}}\circ\kappa_s$. 
From the definition of the extended models, we have
$d_{\widetilde{\mathcal{G}}}=d_{\mathcal{G}}$ and $d_{\widetilde{\mathcal{D}}\circ\kappa_s}=d_{\widetilde{\mathcal{D}}}$. 
Also, the Lipschitz constants in Assumption~\ref{appendix:assump_Lip} are maintained for the extended models
except $L$. The Lipschitz constant $L$ is changed to $L(2+\sqrt{d}\|s\|_{\mathrm{Lip}})$. 
From the above argument, we obtain the upper bound of $\mathrm{JS}(p_0, \widehat{q}_\lambda)$.  
\end{proof}

The formal statement of Theorem~\ref{thm:estimation_error} is the following. 
\begin{theorem}
 \label{appendix:thm:estimation_error_bound}
 Suppose that Assumption~\ref{appendix:assump_Lip} holds. 
 Let $G_0:[0,1]^d\rightarrow\mathcal{X}$ 
 be the generator of the data distribution $p_0\in\mathcal{H}^{\beta}(\mathcal{X})$ for $\beta>2$. 
 Suppose $G_0\in\mathcal{H}_{\Lambda}^{1+\beta}([0,1]^d, H_0)$ for constants $H_0>0, \Lambda>1$. 
 For the scaling function $s_t$, we assume $1/s\in\mathcal{H}^\alpha([0,1])$, i.e., $\|1/s\|_{\mathcal{H}^\alpha}<\infty$ 
 for an $\alpha>2$. 
 Suppose that $L\|s\|_{\mathrm{Lip}}\lesssim n^{c'}$. 
 Then, there exist DNNs with ReQU activation function such that 
\begin{align*}
 \text{Generator}:\ &\mathcal{G}\subset \mathcal{H}_{\Lambda_{\mathcal{G}}}^{1+\beta}([0,1]^d,H_{\mathcal{G}}),\ 
 \ \text{with}\ H_{\mathcal{G}}\geq 2H_0,\ \Lambda_{\mathcal{G}}\geq 2\Lambda, \ \text{and}\\
\text{Discriminator}:\ 
&\widetilde{\mathcal{D}}\subset \mathcal{H}^{1}(\mathcal{X}\times[0,1],L)\cap\mathcal{U}_{D_{\min}}(\mathcal{X}\times[0,1]), \ 
 0<D_{\min}\leq \Lambda^{-d}/(\Lambda^{-d}+\Lambda^{d}), 
\end{align*}
such that for any $\delta\in(0,1)$, with probability at least $1-\delta$, $\widehat{q}_\lambda$ satisfies the inequality 
\begin{align*}
 \mathrm{JS}(p_0,\widehat{q}_\lambda)
 &\lesssim
 \left[ \max_{q\in\mathcal{Q}}\big\|\frac{p_0}{p_0+q}\big\|_{\mathcal{H}^\beta}-\frac{L}{8\sqrt{d}\|1/s\|_{\mathcal{H}^\alpha}} \right]_+^2
 + \left(\frac{L^2}{\|1/s\|_{\mathcal{H}^\alpha}^2}+c\right)
 \left(\frac{\log{n}}{n}\right)^{\frac{2\beta}{2\beta+d}}\\
 &\phantom{\lesssim} + \frac{\log(1/\delta)}{n} +\lambda, 
\end{align*}
 where 
 $c$ and $c'$ are positive constants depending on the constants $d,\beta,H_0$ and $\Lambda$. 
\end{theorem}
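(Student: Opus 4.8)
The plan is to obtain the bound from the general estimation inequality of Theorem~\ref{appendix:thm:general_bound}, which, once ReQU-network classes $\mathcal{G}$ and $\widetilde{\mathcal{D}}$ are fixed, reduces the problem to controlling the generator approximation error $\Delta_{\mathcal{G}}=\min_{q\in\mathcal{Q}}\mathrm{JS}(p_0,q)$, the discriminator approximation error $\Delta_{\widetilde{\mathcal{D}}}=\max_{q\in\mathcal{Q}}\min_{\widetilde{D}\in\widetilde{\mathcal{D}}}\{L(q,\widetilde{D}_q)-L(q,\widetilde{D})\}$, and the complexity term $(d_{\mathcal{G}}+d_{\widetilde{\mathcal{D}}})\log(\cdot)/n$. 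For the generator I would invoke the ReQU approximation theory of \cite{puchkin24:_rates}: since $G_0\in\mathcal{H}^{1+\beta}_{\Lambda}([0,1]^d,H_0)$, there is a ReQU DNN class $\mathcal{G}\subset\mathcal{H}^{1+\beta}_{\Lambda_{\mathcal{G}}}([0,1]^d,H_{\mathcal{G}})$ with $H_{\mathcal{G}}\geq 2H_0$, $\Lambda_{\mathcal{G}}\geq 2\Lambda$, and $O(\varepsilon^{-d/\beta}\log(1/\varepsilon))$ parameters whose members are $\varepsilon$-close to $G_0$ in $C^1$; since the induced density and hence $p_0/(p_0+q)$ depends smoothly on $G$, this yields $\Delta_{\mathcal{G}}\lesssim\varepsilon^2$.

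\textbf{Discriminator analysis (the core).} For a fixed $q$ the optimal discriminator is $\widetilde{D}_q(\y,t)=D_q(\y/s_t)$ with $D_q=p_0/(p_0+q)$, so I would proceed in three steps. (i) A composition estimate for H\"{o}lder norms (valid since $\alpha,\beta>2$, so $\widetilde{D}_q\in C^1$) gives $\|\widetilde{D}_q\|_{\mathcal{H}^{\min\{\alpha,\beta\}}(\mathcal{X}\times[0,1])}\lesssim\|D_q\|_{\mathcal{H}^\beta}\,\|1/s\|_{\mathcal{H}^\alpha}$; in particular $\|\widetilde{D}_q\|_{\mathrm{Lip}}\leq 8\sqrt{d}\,\|D_q\|_{\mathcal{H}^\beta}\,\|1/s\|_{\mathcal{H}^\alpha}$, the factor $8\sqrt{d}$ coming from bounding the operator norm of the Jacobian of $(\y,t)\mapsto\y/s_t$ by coordinate-wise derivatives on the $(d{+}1)$-dimensional domain. (ii) Because $\widetilde{D}\mapsto L(q,\widetilde{D})$ is maximized at $\widetilde{D}_q$ and smooth there on $\mathcal{U}_{D_{\min}}$, a second-order Taylor expansion gives $L(q,\widetilde{D}_q)-L(q,\widetilde{D})\asymp\|\widetilde{D}\circ\kappa_s-\widetilde{D}_q\circ\kappa_s\|^2_{L^2(p_0\otimes\pi)}$ up to constants depending on $D_{\min}$. (iii) It therefore suffices to approximate $\widetilde{D}_q$ within the $L$-Lipschitz ReQU class $\widetilde{\mathcal{D}}\subset\mathcal{H}^1(\mathcal{X}\times[0,1],L)\cap\mathcal{U}_{D_{\min}}$: when $8\sqrt{d}\,\|D_q\|_{\mathcal{H}^\beta}\|1/s\|_{\mathcal{H}^\alpha}\leq L$ the target $\widetilde{D}_q$ itself lies in the class and the ReQU approximation theorem of \cite{puchkin24:_rates}, applied to a function whose $\mathcal{H}^\beta$-norm is $\lesssim L/\|1/s\|_{\mathcal{H}^\alpha}$, yields error $\lesssim (L/\|1/s\|_{\mathcal{H}^\alpha})^2\varepsilon^2$; otherwise one approximates the best $L$-Lipschitz surrogate of $\widetilde{D}_q$, whose distance to $\widetilde{D}_q$ is $\lesssim[\|D_q\|_{\mathcal{H}^\beta}-L/(8\sqrt{d}\,\|1/s\|_{\mathcal{H}^\alpha})]_+$, and combining with (ii) produces the squared bias term. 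Taking the supremum over $q\in\mathcal{Q}$ gives the first two summands of the claimed bound, with the prefactor $(L/\|1/s\|_{\mathcal{H}^\alpha})^2$ on the variance term.

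\textbf{Balancing and assembly.} Choosing $\varepsilon\asymp(\log n/n)^{\beta/(2\beta+d)}$ makes $d_{\mathcal{G}},d_{\widetilde{\mathcal{D}}}\lesssim n^{d/(2\beta+d)}\log n$; the logarithm in the complexity term of Theorem~\ref{appendix:thm:general_bound} contains $L_{\mathcal{G}}L(2+\sqrt{d}\,\|s\|_{\mathrm{Lip}})$, which is $\lesssim n^{O(1)}$ by the hypothesis $L\|s\|_{\mathrm{Lip}}\lesssim n^{c'}$, so that whole term is $\lesssim(\log n/n)^{2\beta/(2\beta+d)}$ and is absorbed into the variance term with a constant $c=c(d,\beta,\Lambda,c',H_0)$; the generator error $\Delta_{\mathcal{G}}$ is absorbed likewise. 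Adding the $\log(1/\delta)/n$ and $\lambda$ contributions of Theorem~\ref{appendix:thm:general_bound} verbatim completes the proof. The main obstacle I anticipate is steps (i)--(iii): obtaining the H\"{o}lder composition estimate with the sharp $\|1/s\|_{\mathcal{H}^\alpha}$ dependence and the explicit threshold constant $1/(8\sqrt{d})$, and dovetailing it with the ReQU approximation/clipping argument of \cite{puchkin24:_rates} so that the bias appears in the exact $[\,\cdot\,]_+^2$ form and the variance term carries precisely the $(L/\|1/s\|_{\mathcal{H}^\alpha})^2$ prefactor; in particular, handling the anisotropy (smoothness $\beta$ in $\x$ but only $\min\{\alpha,\beta\}$ in $t$) while still attaining the $d$-dimensional rate $(\log n/n)^{2\beta/(2\beta+d)}$ is the delicate point.
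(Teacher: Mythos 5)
Your overall architecture matches the paper's: reduce to Theorem~\ref{appendix:thm:general_bound}, bound $\Delta_{\mathcal{G}}$ via the ReQU approximation results of \cite{puchkin24:_rates}, bound $\Delta_{\widetilde{\mathcal{D}}}$ by a second-order expansion of $\widetilde{D}\mapsto L(q,\widetilde{D})$ around $\widetilde{D}_q$ followed by approximating a rescaled (``$L$-Lipschitz surrogate'') version of $D_q$, which produces the $[\,\cdot\,]_+^2$ bias term and the $(L/\|1/s\|_{\mathcal{H}^\alpha})^2$ prefactor, and absorb the complexity term using $L\|s\|_{\mathrm{Lip}}\lesssim n^{c'}$. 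All of that is consistent with the paper.

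The genuine gap is exactly the point you flag at the end as ``delicate'' and then leave unresolved: how to approximate the extended discriminator on the $(d+1)$-dimensional domain $\mathcal{X}\times[0,1]$ while still attaining the $d$-dimensional rate $(\log n/n)^{2\beta/(2\beta+d)}$. Your step (i) composes first, forming $\widetilde{D}_q(\y,t)=D_q(\y/s_t)$ and estimating its H\"{o}lder norm on $\mathcal{X}\times[0,1]$; if you then apply the ReQU approximation theorem to this composed function, the relevant smoothness is $\min\{\alpha,\beta\}$ and the relevant dimension is $d+1$, so the parameter count is $K^{d+1}$ for accuracy $K^{-\min\{\alpha,\beta\}}$ and the resulting rate is sub-optimal — the paper explicitly warns that ``naive application of the existing theorem leads the sub-optimal order.'' The paper's resolution is architectural, not analytic: it defines $\widetilde{\mathcal{D}}=\{D\circ\psi_{\mathrm{DNN}}\,|\,D\in\mathcal{D}\}$, where $\psi_{\mathrm{DNN}}(\x,t)=\x\psi(t)$ is a \emph{fixed, non-learnable} ReQU network (identity on $\x$, a one-dimensional $\mathcal{H}^\alpha$ approximation $\psi\approx 1/s$, and an exact ReQU multiplication) approximating $\phi_s(\x,t)=\x/s_t$. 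Then $\widetilde{D}\circ\kappa_s(\x,t)\approx D(\x)$, so the only function that must be approximated by a learnable network is the $d$-dimensional $D_q\in\mathcal{H}^\beta(\mathcal{X})$, giving $d_{\widetilde{\mathcal{D}}}=d_{\mathcal{D}}=O(K^d)$ and error $K^{-\beta}$; the factor $\|1/s\|_{\mathcal{H}^\alpha}$ enters only through the Lipschitz budget $\mathcal{D}\subset\mathcal{H}^1(\mathcal{X},\tfrac{L}{4\sqrt{d}\|1/s\|_{\mathcal{H}^\alpha}})$ needed so that the composition stays in $\mathcal{H}^1(\mathcal{X}\times[0,1],L)$. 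Without this factorization your argument does not deliver the stated exponent, so you should replace the composition estimate of step (i) with this explicit construction (or an equivalent dimension-reduction device) before the rest of your assembly goes through.
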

The upper bounds for $\Delta_{\mathcal{G}}$ and $\Delta_{\widetilde{\mathcal{D}}}$ in Theorem~\ref{appendix:thm:general_bound}
 are shown below. 
For $\Delta_{\mathcal{G}}$, we use theoretical analysis in \citep{puchkin24:_rates}. 
When we evaluate $\Delta_{\widetilde{\mathcal{D}}}$, 
naive application of the existing theorem~\citep{puchkin24:_rates} leads the sub-optimal order. 
This is because the input dimension to the discriminator is extended from $d$ to $d+1$. 
In our upper bound, $d+1$ is reduced to $d$. Though it is a minor change, 
but we can obtain the min-max optimal convergence rate for the variance term. 

\begin{remark}
In \citep{belomestny23:_simul}, 
the existence of 
$\mathcal{G}\subset \mathcal{H}_{\Lambda_{\mathcal{G}}}^{2}([0,1]^d,H_{\mathcal{G}})$
is guaranteed to approximate $G_0\in\mathcal{H}_{\Lambda_{\mathcal{G}}}^{1+\beta}([0,1]^d,H_{\mathcal{G}})$. 
As shown at Step~2 in the proof of Theorem~2 in \citep{puchkin24:_rates}, however, 
one can construct $\mathcal{G}$ included in 
$\mathcal{H}_{\Lambda_{\mathcal{G}}}^{1+\beta}([0,1]^d,H_{\mathcal{G}})$.  
Indeed, the DNN with ReQU realizing the normalized spline of degree $q > \beta+1$ meets the condition. 
As a result, $\frac{p_0}{p_0+q}\in\mathcal{H}^\beta(\mathcal{X})$ holds. 
\end{remark}

\begin{proof}
[Proof of Theorem~\ref{appendix:thm:estimation_error_bound}]
For $\Delta_{\mathcal{G}}$, the same argument as in Theorem~2 in~\citep{puchkin24:_rates} holds. 
Suppose that the true generator $G_0$ satisfies $G_0\in\mathcal{H}^{1+\beta}_{\Lambda}([0,1]^d,H_0)$
for $\beta>2, H_0>0$ and $\Lambda>1$. 
Then, there exists a DNN architecture 
$\mathcal{G}\subset\mathcal{H}_{\Lambda_{\mathcal{G}}}^2([0,1]^d,H_{\mathcal{G}}), 
H_{\mathcal{G}}\geq 2H_0, \Lambda_{\mathcal{G}}\geq 2\Lambda$ such that
\begin{align*}
 \Delta_{\mathcal{G}}\leq C_{d,\beta,H_0}\Lambda^{9d}\left(\frac{\log{n}}{n}\right)^{\frac{2\beta}{2\beta+d}},\quad
 d_{\mathcal{G}}\leq C_{d,\beta,H_0}\left(\frac{n}{\log{n}}\right)^{\frac{d}{2\beta+d}}. 
\end{align*}
Let us consider $\Delta_{\widetilde{\mathcal{D}}}$. 
The optimal discriminator $\widetilde{D}_q(\x,t)$ attains the maximum value of 
\begin{align*}
L(q,\widetilde{D}) =\int_{[0,1]^{d+1}} [p_0(\x)\log \widetilde{D}(s_t\x,t)+q(\x)\log(1-\widetilde{D}(s_t\x,t))]\rmd{\mu}\rmd{t}
\end{align*}
At each $(\x, t)$, $\widetilde{D}_q(s_t\x, t)=D_q(\x)$ attains the maximum of the integrand. 
The second order derivative of the concave function 
$v\mapsto p_0(\x)\log v+q(\x)\log(1-v)$ is 
$-\frac{1}{v^2}p_0(\x)-\frac{1}{(1-v)^2}q(\x)$, which is bounded above by $(\|p_0\|_\infty+\|q\|_\infty)/D_{\min}^2$. 
When the latent variable $\z$ for the generator $G\in\mathcal{G}$ is the uniform distribution on $[0,1]^d$, 
Lemma~8 in~\citep{puchkin24:_rates} guarantees that 
$\|q\|_\infty$ for $q\in\mathcal{Q}$ is uniformly bounded above by $\Lambda_{\mathcal{G}}^d$. 
Hence, for any $q\in\mathcal{Q}$, we have
\begin{align}
\label{appendix:bound:Delta_D}
 L(q,\widetilde{D}_q)-L(q,\widetilde{D}) \leq c \sup_{\x\in\mathcal{X},t\in[0,1]}|D_q(\x)-\widetilde{D}(s_t\x,t)|^2,
\end{align}
where $c$ depends only on $D_{\min}, \|p_0\|_\infty$ and $\Lambda^d$. 
To derive the upper bound of $\Delta_{\widetilde{\mathcal{D}}}$, 
we evaluate the right-hand side of the above inequality. 

Let $\mathcal{D}$ be a set of discriminators on $\mathcal{X}$. 
Suppose that $\widetilde{D}\in\widetilde{\mathcal{D}}$ is expressed as the composite function 
$\widetilde{D}(\x,t)=D(\phi_s(\x,t))$ for $D\in\mathcal{D}$ and $\phi_s(\x,t)=\x/s_t$. 
Then, $\widetilde{D}(s_t\x,t)=D(\x)$ holds for $\x\in\mathcal{X}, t\in[0,1]$. 
The function $\phi_s$ is constructed by DNN. 
Let us express $\phi_s$ as the composition of the following two transformations, 
\begin{align*}
 (\x,t) \stackrel{(a)}{\longmapsto} (\x,1/s_t) \stackrel{(b)}{\longmapsto} \x/s_t
\end{align*}
The first map (a) is approximated by DNN. 
Indeed, the identity function $\x\mapsto\x$ is realized by a two-layer NN with ReQU. 
Suppose that $1/s\in\mathcal{H}^\alpha([0,1],H_s)$ for $\alpha>2$. 
Then, due to Theorem~2 in~\citep{belomestny23:_simul}, 
there exists a DNN $\psi(t)$ such that $\psi\in\mathcal{H}^1([0,1],2H_s)$ and 
$\|1/s-\psi\|_\infty \lesssim H_s/K_\psi^\alpha$ for a large $K_\psi\in\mathbb{N}$, 
where $K_\psi$ is the parameter that determines the size of the width for the DNN. 
The second map (b), i.e., the product of two real values, is exactly realized by a two-layer NN with ReQU, 
in which all the parameters $\theta$ satisfy $\|\theta\|_\infty\leq 1$. 
Such DNN meets the condition of the one studied in \citep{belomestny23:_simul,puchkin24:_rates}. 
For the DNN defined by $\psi_{\mathrm{DNN}}(\x,t):=\x\psi(t)$, we have 
\begin{align*}
 \|\phi_s-\psi_{\mathrm{DNN}}\|_\infty = \max_{\x,t}\max_i|x_i/s_t-x_i\psi(t)| \leq
 \|1/s-\psi\|_\infty \lesssim \frac{H_s}{K_\psi^\alpha}. 
\end{align*}
and we can confirm that $\psi_{\mathrm{DNN}}\in\mathcal{H}^1(\mathcal{X}\times[0,1],4H_s)$. 
Since $\phi_s$ is determined from $s_t$, 
$\psi_{\mathrm{DNN}}$ does not need to have learnable parameters. 

Again, Theorem~2 in~\citep{belomestny23:_simul} guarantees that 
there exists a DNN model $\mathcal{D}\subset\mathcal{H}^1(\mathcal{X}, \frac{L}{4\sqrt{d} H_s})$
such that 
\begin{align*}
\inf_{D\in\mathcal{D}} \bigg\|\underbrace{
 \frac{L}{8\sqrt{d} H_s}\frac{D_q}{\|D_q\|_{\mathcal{H}^\beta}}}_{\bar{D}_q}
 -D\bigg\|_\infty
 \lesssim
 \frac{L}{H_s}\frac{1}{K^\beta}
\end{align*}
for a large $K$, since $\bar{D}_q\in\mathcal{H}^\beta(\mathcal{X},\frac{L}{8\sqrt{d}H_s})$. 
In the above, $d$ is absorbed as a constant. 
Let us define $\widetilde{\mathcal{D}}=\{D\circ\psi_{\mathrm{DNN}}\,|\,D\in\mathcal{D}\}$. 
Then, $\widetilde{\mathcal{D}}\subset\mathcal{H}^1(\mathcal{X}\times[0,1], L)$ holds. 
From the above inequalities, we have 
\begin{align*}
 \inf_{\widetilde{D}\in\widetilde{\mathcal{D}}}
 \big\| \bar{D}_q -\widetilde{D}\big\|_\infty
& \lesssim
 \inf_{D\in\mathcal{D}} \|\bar{D}_q-D\|_\infty+
 \|D-D\circ\psi_{\mathrm{DNN}}\|_\infty\\
&\lesssim
 \frac{L}{H_s}
 \frac{1}{K^\beta} 
 +\frac{L}{\sqrt{d}H_s}
 \underbrace{\sup_{\x,t}\max_i|x_i-\psi_{\mathrm{DNN},i}(s_t\x,t)|}_{< \text{\ arbitrary small}}
\lesssim
 \frac{L}{H_s} \frac{1}{K^\beta}. 
\end{align*}
Furthermore, we have 
\begin{align*}
 \big\|D_q-\bar{D}_q \big\|_{\infty}
 =
 \left[ \|D_q\|_{\mathcal{H}^\beta}-\frac{L}{8\sqrt{d}H_s} \right]_+. 
\end{align*}
As a result, we have 
\begin{align}
\label{appendix:bound_tildeD}
 \inf_{\widetilde{D}\in\widetilde{\mathcal{D}}}\|D_q-\widetilde{D}\|_\infty
 \lesssim 
 \left[ \|D_q\|_{\mathcal{H}^\beta}-\frac{L}{8\sqrt{d}H_s} \right]_+
 +\frac{L}{H_s} \frac{1}{K^\beta}.
\end{align}
Combining \eqref{appendix:bound:Delta_D} and \eqref{appendix:bound_tildeD}, we obtain
\begin{align*}
 \Delta_{\widetilde{\mathcal{D}}}
 \lesssim
 \left[ \max_{q\in\mathcal{Q}}\|D_q\|_{\mathcal{H}^\beta}-\frac{L}{8\sqrt{d}\|1/s\|_{\mathcal{H}^\alpha}} \right]_+^2
 +\frac{L^2}{\|1/s\|_{\mathcal{H}^\alpha}^2} \frac{1}{K^{2\beta}}, 
\end{align*}
where $H_s$ is replaced with $\|1/s\|_{\mathcal{H}^\alpha}$. 
Let $d_{\mathcal{D}}$ be the number of parameters for $\mathcal{D}$. Then, 
$d_{\mathcal{D}}=d_{\widetilde{\mathcal{D}}}$ holds from the definition of $\widetilde{\mathcal{D}}$. 
Again, 
we have $d_\mathcal{D}=O((n/\log{n})^{d/(2\beta+d)})$ for $K=(n/\log{n})^{1/(2\beta+d)}$, 
since the dimension of the input vector of $D\in\mathcal{D}$ is $d$. 
Note that $L_{\widetilde{\mathcal{D}}}=L_{\mathcal{D}}$ holds. Lemma~2 and Lemma~4 in~\citep{puchkin24:_rates} guarantees
\begin{align*}
 \log(2n(L_{\mathcal{G}}L(2+\sqrt{d}\|s\|_{{\mathrm{Lip}}})\vee L_{\mathcal{D}}\vee L_{\mathcal{Q}}\vee 1))
 & \lesssim
 \log{n}+\log{L\|s\|_{\mathcal{H}^1}} + \log{\Lambda K} \\
 &\lesssim 
 \log{n}
\end{align*}
under the assumption that $L\|s\|_{\mathcal{H}^1}\lesssim n^{c'}$. 
Substituting all the upper bounds for 
$\Delta_{\mathcal{G}}, \Delta_{\widetilde{\mathcal{D}}}, d_{\mathcal{G}}, d_{\widetilde{\mathcal{D}}}$
and $\log(2n(L_{\mathcal{G}}L(2+\sqrt{d}\|s\|_{\mathrm{Lip}})\vee L_{\mathcal{D}}\vee L_{\mathcal{Q}}\vee 1))$
into \eqref{appendix:general-bound-JS_est} in Theorem~\ref{appendix:thm:general_bound}, 
we obtain the result of Theorem~\ref{appendix:thm:estimation_error_bound}. 

\end{proof}

\subsection{Invertible Data Augmentation}
\label{appendix:Invertible_Data_Augmentation}
The theoretical analysis is similar to that in Section~\ref{appendix:Theorem_estimation_error}. 

\subsubsection*{Estimation Error Bound}
For the invertible data augmentation $\x\mapsto S_t\x$ with the parameter $t\in{T}$, 
we define the map $\kappa_S$ by $\kappa_S(\x,t)=(S_t\x,t)$. 
Let us assume 
$\|\kappa_S(\x,t)-\kappa_S(\x',t')\|\leq \|\kappa_S\|_{\mathrm{Lip}}\|(\x,t)-(\x',t')\|$. 
Then, we have 
$$\|\widetilde{D}\circ \kappa_S\|_{\mathrm{Lip}}\leq L\|\kappa_S\|_{\mathrm{Lip}}.$$
We define 
\begin{align*}
 \Delta_{\widetilde{\mathcal{G}}}=\min_{q\in\mathcal{Q}}\mathrm{JS}(p_0\otimes\pi,q\otimes\pi), \quad 
 \quad 
 \Delta_{\widetilde{\mathcal{D}}}=\max_{q\in\mathcal{Q}}\min_{\widetilde{D}\in\widetilde{\mathcal{D}}}
\{L(q,\widetilde{D}_q)-L(q,\widetilde{D})\}. 
\end{align*}
Then, we have
$\Delta_{\widetilde{\mathcal{G}}}=\Delta_{\mathcal{G}}=\min_{q\in\mathcal{Q}}\mathrm{JS}(p_0,q)$. 
\begin{theorem}
 Assume (AG), (AD), and (Aq) in Assumption~\ref{appendix:assump_Lip}. 
Let $u\in [-1,1]^{d_{\mathcal{G}}}$ and $\theta\in [-1,1]^{d_{\widetilde{\mathcal{D}}}}$. 
 Then, for any $\delta\in(0,1)$, with probability at least $1-\delta$, it holds that 
\begin{align*}
 \mathrm{JS}(p_0,\widehat{q}_\lambda)
 &\lesssim 
 \Delta_{\mathcal{G}}+\Delta_{\widetilde{\mathcal{D}}}+
 \frac{(d_{\mathcal{G}}+d_{\widetilde{\mathcal{D}}})
 \log(2n(L_{\mathcal{G}} L\|\kappa_S\|_{\mathrm{Lip}}\vee L_{\widetilde{\mathcal{D}}}\vee L_{\mathcal{Q}}\vee 1))}{n}
 +\frac{\log(8/\delta)}{n}
 +\lambda. 
\end{align*}
\end{theorem}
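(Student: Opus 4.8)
The plan is to mirror, essentially verbatim, the proof of Theorem~\ref{appendix:thm:general_bound}, replacing the data–scaling map by the general invertible map $\kappa_S(\x,t)=(S_t\x,t)$ and tracking the single quantity that changes, namely the Lipschitz modulus of the composed discriminator class. First I would use that $S_t$ is invertible: the push-forwards of $p_0\otimes\pi$ and of $q\otimes\pi$ under $\kappa_S$ have the same JS divergence as $p_0\otimes\pi$ and $q\otimes\pi$, so that $\mathrm{JS}(p_0,\widehat q_\lambda)=\mathrm{JS}(p_0\otimes\pi,\widehat q_\lambda\otimes\pi)$ and the reference quantities satisfy $\Delta_{\widetilde{\mathcal{G}}}=\Delta_{\mathcal{G}}=\min_{q\in\mathcal{Q}}\mathrm{JS}(p_0,q)$, as already noted above. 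This lets me define $L(q,\widetilde D)$ through $\kappa_S$ exactly as in \eqref{app:eqn:proper_population_loss} and identify $\widetilde D_q\circ\kappa_S=D_q$ as its pointwise maximizer, so that the argument does not see the augmentation except through $\kappa_S$.

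Second, I would carry out the same four-term decomposition: writing $\check D_q$, $\widehat D_q$, $\bar q$ as in the definition preceding Theorem~\ref{appendix:thm:general_bound},
\begin{align*}
\mathrm{JS}(\widehat q_\lambda,p_0)-\mathrm{JS}(\bar q,p_0)
\le\Delta_{\widetilde{\mathcal{D}}}
+\underbrace{L(\widehat q_\lambda,\check D_{\widehat q_\lambda})-\widetilde L(\widehat q_\lambda,\check D_{\widehat q_\lambda})}_{T_1}
+\underbrace{\widetilde L(\widehat q_\lambda,\check D_{\widehat q_\lambda})-\widetilde L(\bar q,\check D_{\bar q})}_{T_2}
+\underbrace{\widetilde L(\bar q,\check D_{\bar q})-L(\bar q,\check D_{\bar q})}_{T_3}.
\end{align*}
For $T_1+T_3$ I invoke Theorem~1 (Eqs.~(5.10)–(5.11)) of \cite{puchkin24:_rates} with the extended generator class $\widetilde{\mathcal{G}}$ and the composed discriminator class $\widetilde{\mathcal{D}}\circ\kappa_S=\{\widetilde D\circ\kappa_S\mid\widetilde D\in\widetilde{\mathcal{D}}\}$; the only input to that theorem affected by $S_t$ is the Lipschitz constant of a member of the discriminator class, and by (AD) together with the assumed bound on $\kappa_S$ one has $\|\widetilde D\circ\kappa_S\|_{\mathrm{Lip}}\le L\|\kappa_S\|_{\mathrm{Lip}}$. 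For $T_2$ I exploit the optimality of $\widehat q_\lambda$:
$\widetilde L(\widehat q_\lambda,\check D_{\widehat q_\lambda})-\lambda\widetilde{\mathbb{V}}[\check D_{\widehat q_\lambda}]\le\widetilde L(\widehat q_\lambda,\widehat D_{\widehat q_\lambda})-\lambda\widetilde{\mathbb{V}}[\widehat D_{\widehat q_\lambda}]\le\widetilde L(\bar q,\widehat D_{\bar q})-\lambda\widetilde{\mathbb{V}}[\widehat D_{\bar q}]$, and since $0\le\widetilde{\mathbb{V}}[D]\le1$ this gives $T_2\le\widetilde L(\bar q,\widehat D_{\bar q})-\widetilde L(\bar q,\check D_{\bar q})+\lambda$, whose first part is bounded by Eq.~(5.12) of \cite{puchkin24:_rates}, again with $L$ replaced by $L\|\kappa_S\|_{\mathrm{Lip}}$.

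Third, I would tally the parameter counts and constants: because $\kappa_S$ is fixed and carries no learnable parameters, $d_{\widetilde{\mathcal{G}}}=d_{\mathcal{G}}$ and $d_{\widetilde{\mathcal{D}}\circ\kappa_S}=d_{\widetilde{\mathcal{D}}}$, and the Lipschitz constants $L_{\mathcal{G}},L_{\widetilde{\mathcal{D}}},L_{\mathcal{Q}}$ of the extended models are unchanged, with the only modification being $L\rightsquigarrow L\|\kappa_S\|_{\mathrm{Lip}}$ inside the metric-entropy term. Substituting these bounds into the decomposition and adding $\mathrm{JS}(\bar q,p_0)=\Delta_{\mathcal{G}}$ yields the claimed inequality. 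The step that needs the most care is verifying that the composed class $\widetilde{\mathcal{D}}\circ\kappa_S$ genuinely satisfies the structural hypotheses under which Eqs.~(5.10)–(5.12) of \cite{puchkin24:_rates} were derived — boundedness into $[D_{\min},1-D_{\min}]$, the parameter-Lipschitz bound $\|\widetilde D_{\theta_1}\circ\kappa_S-\widetilde D_{\theta_2}\circ\kappa_S\|_\infty\le L_{\widetilde{\mathcal{D}}}\|\theta_1-\theta_2\|_\infty$, and that precomposition with the fixed map $\kappa_S$ does not inflate the covering-number estimates — so that the cited bounds transfer with only the stated substitution; the remainder is bookkeeping identical to the proof of Theorem~\ref{appendix:thm:general_bound}.
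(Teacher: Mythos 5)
Your proposal is correct and follows essentially the same route as the paper: the identical four-term decomposition $\Delta_{\widetilde{\mathcal{D}}}+T_1+T_2+T_3$, the bounds (5.10)--(5.12) of \cite{puchkin24:_rates} applied to $\widetilde{\mathcal{G}}$ and $\widetilde{\mathcal{D}}\circ\kappa_S$, the optimality-plus-$\widetilde{\mathbb{V}}\le 1$ argument giving $T_2\le \widetilde{L}(\bar q,\widehat D_{\bar q})-\widetilde{L}(\bar q,\check D_{\bar q})+\lambda$, and the observation that only the Lipschitz constant changes from $L$ to $L\|\kappa_S\|_{\mathrm{Lip}}$ while $d_{\widetilde{\mathcal{G}}}=d_{\mathcal{G}}$ and $d_{\widetilde{\mathcal{D}}\circ\kappa_S}=d_{\widetilde{\mathcal{D}}}$. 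Your closing caveat about verifying that precomposition with the fixed map $\kappa_S$ preserves the structural hypotheses (range in $[D_{\min},1-D_{\min}]$, parameter-Lipschitz bound) is exactly the point the paper handles implicitly, so nothing is missing.
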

\begin{proof}
The estimation accuracy of $\widehat{q}_\lambda$ is assessed by the Jensen-Shannon divergence, 
\begin{align*}
&\phantom{=} \mathrm{JS}(\widehat{q}_{\lambda},p_0)-\mathrm{JS}(\bar{q},p_0)\\
& =
 \mathrm{JS}(\widehat{q}_{\lambda}\otimes\pi,p_0\otimes\pi)-\mathrm{JS}(\bar{q}\otimes\pi,p_0\otimes\pi)\\
&\leq
 \mathrm{JS}(\widehat{q}_{\lambda}\otimes\pi,p_0\otimes\pi)-\log2-L(\bar{q},\check{D}_{\bar{q}})\\
&=
 L(\widehat{q}_{\lambda},\widetilde{D}_{\widehat{q}})
 -L(\widehat{q}_{\lambda},\check{D}_{\widehat{q}_{\lambda}})
 +L(\widehat{q}_{\lambda},\check{D}_{\widehat{q}_{\lambda}}) -L(\bar{q},\check{D}_{\bar{q}})\\
&=
 \min_{\widetilde{D}\in\widetilde{\mathcal{D}}}
 \{L(\widehat{q}_{\lambda},\widetilde{D}_{\widehat{q}_\lambda}) -L(\widehat{q}_{\lambda},\widetilde{D})\}
 +L(\widehat{q}_{\lambda},\check{D}_{\widehat{q}_{\lambda}}) -L(\bar{q},\check{D}_{\bar{q}})\\
&\leq
 \Delta_{\widetilde{\mathcal{D}}}
 +L(\widehat{q}_{\lambda},\check{D}_{\widehat{q}_{\lambda}})
 -L(\bar{q},\check{D}_{\bar{q}})\\
&=
 \Delta_{\widetilde{\mathcal{D}}}
 + 
\underbrace{
 L(\widehat{q}_{\lambda},\check{D}_{\widehat{q}_{\lambda}})
 -
 L_n(\widehat{q}_{\lambda},\check{D}_{\widehat{q}_{\lambda}})}_{T_1}
 + 
 \underbrace{
 L_n(\widehat{q}_{\lambda},\check{D}_{\widehat{q}_{\lambda}})
 -
 L_n(\bar{q},\check{D}_{\bar{q}})
 }_{T_2} 
 + 
 \underbrace{
L_n(\bar{q},\check{D}_{\bar{q}})
 -
 L(\bar{q},\check{D}_{\bar{q}})}_{T_3}. 
\end{align*}
Theorem~1 in~\cite{puchkin24:_rates} is applied 
to the expression \eqref{app:eqn:proper_population_loss} 
with the model $\widetilde{\mathcal{G}}$ and 
$\widetilde{\mathcal{D}}\circ\kappa_S=\{\widetilde{D}\circ\kappa_S|\widetilde{D}\in\widetilde{\mathcal{D}}\}$. 
The upper bound of $T_1+T_3$ is given by (5.10) and (5.11) in~\cite{puchkin24:_rates} 
for $\widetilde{\mathcal{G}}$ and $\widetilde{\mathcal{D}}\circ\kappa_S$. 
From the definition of $\widehat{q}_\lambda$, we have 
\begin{align*}
 L_n(\widehat{q}_\lambda,\check{D}_{\widehat{q}_\lambda}) -\lambda{\mathbb{V}}_n[\check{D}_{\widehat{q}_\lambda}]
 \leq 
 L_n(\widehat{q}_\lambda,\widehat{D}_{\widehat{q}_\lambda}) -\lambda{\mathbb{V}}_n[\widehat{D}_{\widehat{q}_\lambda}]
 \leq 
 L_n(\bar{q},\widehat{D}_{\bar{q}}) -\lambda{\mathbb{V}}_n[\widehat{D}_{\bar{q}}]. 
\end{align*}
Hence, $T_2\leq  L_n(\bar{q},\widehat{D}_{\bar{q}})-L_n(\bar{q},\check{D}_{\bar{q}})+\lambda$. 
Here we use the fact that $\mathbb{V}_n[D]\leq1$ for $0<D<1$. 
The upper bound of $L_n(\bar{q},\widehat{D}_{\bar{q}})-L_n(\bar{q},\check{D}_{\bar{q}})$
is given by Eq~(5.12) in~\cite{puchkin24:_rates} for $\widetilde{\mathcal{G}}$ and
 $\widetilde{\mathcal{D}}\circ\kappa_S$. 
From the definition of the extended models, we have 
$d_{\widetilde{\mathcal{G}}}=d_{\mathcal{G}}$ and $d_{\widetilde{\mathcal{D}}\circ\kappa_S}=d_{\widetilde{\mathcal{D}}}$. 
Also, the Lipschitz constants in Assumption~\ref{appendix:assump_Lip} except $L$ are maintained for the
 extended models, while $L$ is replaced with $L\|\kappa_S\|_{\mathrm{Lip}}$.  
From the above argument, we obtain the upper bound of $\mathrm{JS}(p_0,\widehat{q}_\lambda)$. 
\end{proof}

\begin{theorem}
\label{theorem:convergence_rate_GAN_DA}
 Suppose that Assumption~\ref{appendix:assump_Lip} holds. 
 Let $G_0:[0,1]^d\rightarrow\mathcal{X}$ 
 be the generator of the data distribution $p_0\in\mathcal{H}^{\beta}(\mathcal{X})$ for a $\beta>2$. 
 Suppose $G_0\in\mathcal{H}_{\Lambda}^{1+\beta}([0,1]^d, H_0)$ for constants $H_0>0, \Lambda>1$. 
 For the DA $S_t, t\in{T}$, 
 we assume 
 that the function $\phi_{S^{-1}}(\x,t)=S_t^{-1}\x$ satisfies
 $\phi_{S^{-1}}\in\mathcal{H}^\alpha(\mathcal{X}\times T)$, i.e., $\|\phi_{S^{-1}}\|_{\mathcal{H}^\alpha}<\infty$ 
 for an $\alpha>2$. 
 Suppose that the parameter $t$ of the DA has the probability density $\pi(t)$ on a subset of Euclidean space $T$. 
 Suppose that the function $\kappa_{S}(\x,t)=(S_t\x,t)$ has a finite Lipschitz constant
 for the $\ell_2$-norm, i.e., 
 $\|\kappa_{S}(\x,t)-\kappa_{S}(\x',t')\|\leq \|\kappa_{S}\|_{\mathrm{Lip}}\|(\x,t)-(\x',t')\|$. 
 Furthermore, suppose that 
 a positive constant $L$ and $\kappa_S$ satisfies 
 $L\|\kappa_S\|_{\mathrm{Lip}}\lesssim n^{c}$, where $c$ is a constant. 
 Then, there exist DNNs, $\mathcal{G}$ and $\mathcal{D}$, with ReQU activation function such that 
\begin{align*}
\text{Generator}:\ &\mathcal{G}\subset \mathcal{H}_{\Lambda_{\mathcal{G}}}^{2}([0,1]^d,H_{\mathcal{G}}),\ 
 \ \text{with}\ H_{\mathcal{G}}\geq 2H_0,\ \Lambda_{\mathcal{G}}\geq 2\Lambda, \ \text{and}\\
\text{Discriminator}:\ 
&\widetilde{\mathcal{D}}\subset \mathcal{H}^{1}(\mathcal{X}\times[0,1],L)\cap\mathcal{U}_{D_{\min}}(\mathcal{X}\times[0,1]), \ 
 0<D_{\min}\leq \Lambda^{-d}/(\Lambda^{-d}+\Lambda^{d}), 
\end{align*}
such that for any $\delta\in(0,1)$, with probability at least $1-\delta$, $\widehat{q}_\lambda$ satisfies the inequality 
\begin{align*}
 \mathrm{JS}(p_0,\widehat{q}_\lambda)
& \lesssim
 \left[ \max_{q\in\mathcal{Q}}\|D_q\|_{\mathcal{H}^\beta}-\frac{L}{4\sqrt{d}(1+\|\phi_{S^{-1}}\|_{\mathcal{H}^\alpha})} \right]_+^2
 + \left\{\left(\frac{L}{1+\|\phi_{S^{-1}}\|_{\mathcal{H}^\alpha}}\right)^2+c_0+\frac{\log(L\|\kappa_S\|_{\mathrm{Lip}})}{\log{n}}\right\}
 \left(\frac{\log{n}}{n}\right)^{\frac{2\beta}{2\beta+d}}\\
&\phantom{\lesssim} + \frac{\log(1/\delta)}{n} +\lambda\\
& \lesssim
 \left[ \max_{q\in\mathcal{Q}}\|D_q\|_{\mathcal{H}^\beta}-\frac{L}{4\sqrt{d}(1+\|\phi_{S^{-1}}\|_{\mathcal{H}^\alpha})} \right]_+^2
 + \left\{\left(\frac{L}{1+\|\phi_{S^{-1}}\|_{\mathcal{H}^\alpha}}\right)^2+c_0\right\}
 \left(\frac{\log{n}}{n}\right)^{\frac{2\beta}{2\beta+d}} + \frac{\log(1/\delta)}{n} +\lambda, 
\end{align*}
where $c_0$ is a constant depending only on $d,\beta,H_0,\Lambda$, and $c$.  
\end{theorem}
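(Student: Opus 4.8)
The plan is to follow the proof of Theorem~\ref{appendix:thm:estimation_error_bound} essentially verbatim, replacing the scaling map $\phi_s(\x,t)=\x/s_t$ by the general inverse map $\phi_{S^{-1}}(\x,t)=S_t^{-1}\x$. The preceding general theorem has already reduced matters to $\mathrm{JS}(p_0,\widehat{q}_\lambda)\lesssim\Delta_{\mathcal{G}}+\Delta_{\widetilde{\mathcal{D}}}+(d_{\mathcal{G}}+d_{\widetilde{\mathcal{D}}})\log(2n(L_{\mathcal{G}}L\|\kappa_S\|_{\mathrm{Lip}}\vee L_{\widetilde{\mathcal{D}}}\vee L_{\mathcal{Q}}\vee 1))/n+\log(8/\delta)/n+\lambda$, so it remains to (i) exhibit ReQU DNN classes controlling the approximation errors $\Delta_{\mathcal{G}},\Delta_{\widetilde{\mathcal{D}}}$, (ii) bound the parameter counts $d_{\mathcal{G}},d_{\widetilde{\mathcal{D}}}$, and (iii) simplify the logarithmic factor.

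For $\Delta_{\mathcal{G}}$ I would quote Step~2 of the proof of Theorem~2 in \citep{puchkin24:_rates} unchanged: since $G_0\in\mathcal{H}^{1+\beta}_\Lambda([0,1]^d,H_0)$, there is a ReQU DNN class $\mathcal{G}$ — and, following the Remark in Section~\ref{appendix:Theorem_estimation_error}, one may take $\mathcal{G}\subset\mathcal{H}^{1+\beta}_{\Lambda_{\mathcal{G}}}([0,1]^d,H_{\mathcal{G}})$ so that $D_q=p_0/(p_0+q)\in\mathcal{H}^\beta(\mathcal{X})$ — with $\Delta_{\mathcal{G}}\lesssim\Lambda^{9d}(\log n/n)^{2\beta/(2\beta+d)}$, $d_{\mathcal{G}}\lesssim(n/\log n)^{d/(2\beta+d)}$, and, by Lemma~8 of \citep{puchkin24:_rates}, $\sup_{q\in\mathcal{Q}}\|q\|_\infty\le\Lambda_{\mathcal{G}}^d$.

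The core of the argument is $\Delta_{\widetilde{\mathcal{D}}}$. As in~\eqref{appendix:bound:Delta_D}, strict concavity of $v\mapsto p_0\log v+q\log(1-v)$ together with the uniform bounds on $\|p_0\|_\infty$, $\|q\|_\infty$ and the lower bound $D_{\min}$ gives $L(q,\widetilde{D}_q)-L(q,\widetilde{D})\le c\sup_{\x,t}|D_q(\x)-\widetilde{D}(S_t\x,t)|^2$. I would then parametrize $\widetilde{D}=D\circ\psi_{\mathrm{DNN}}$, with $\psi_{\mathrm{DNN}}$ a \emph{parameter-free} ReQU DNN approximating $\phi_{S^{-1}}$ and $D\in\mathcal{D}$ a ReQU DNN on $\mathcal{X}$, so that $\widetilde{D}(S_t\x,t)=D(\psi_{\mathrm{DNN}}(S_t\x,t))\approx D(S_t^{-1}S_t\x)=D(\x)$. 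Invoking Theorem~2 of \citep{belomestny23:_simul} twice — once to get $\psi_{\mathrm{DNN}}\in\mathcal{H}^1(\mathcal{X}\times T)$ with $\|\phi_{S^{-1}}-\psi_{\mathrm{DNN}}\|_\infty\lesssim\|\phi_{S^{-1}}\|_{\mathcal{H}^\alpha}K_\psi^{-\alpha}$, once to approximate the rescaled target $\bar{D}_q=\frac{L}{4\sqrt{d}(1+\|\phi_{S^{-1}}\|_{\mathcal{H}^\alpha})}D_q/\|D_q\|_{\mathcal{H}^\beta}\in\mathcal{H}^\beta(\mathcal{X})$ to error $\lesssim\frac{L}{1+\|\phi_{S^{-1}}\|_{\mathcal{H}^\alpha}}K^{-\beta}$ — and combining these with $\|D_q-\bar{D}_q\|_\infty=[\|D_q\|_{\mathcal{H}^\beta}-\frac{L}{4\sqrt{d}(1+\|\phi_{S^{-1}}\|_{\mathcal{H}^\alpha})}]_+$ via the triangle inequality, then squaring and maximizing over $q$, yields $\Delta_{\widetilde{\mathcal{D}}}\lesssim[\max_q\|D_q\|_{\mathcal{H}^\beta}-\frac{L}{4\sqrt{d}(1+\|\phi_{S^{-1}}\|_{\mathcal{H}^\alpha})}]_+^2+\frac{L^2}{(1+\|\phi_{S^{-1}}\|_{\mathcal{H}^\alpha})^2}K^{-2\beta}$. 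The normalization constant is chosen exactly so that $\widetilde{\mathcal{D}}=\{D\circ\psi_{\mathrm{DNN}}:D\in\mathcal{D}\}\subset\mathcal{H}^1(\mathcal{X}\times T,L)\cap\mathcal{U}_{D_{\min}}$, which simultaneously controls $L_{\widetilde{\mathcal{D}}}$; for $B$ augmentations the same step produces the factor $\sqrt{B}+\max_i\|\phi_{S_i^{-1}}\|_{\mathcal{H}^{\alpha_i}}$ by concatenating the approximants.

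To finish: $\psi_{\mathrm{DNN}}$ has no learnable parameters, so $d_{\widetilde{\mathcal{D}}}=d_{\mathcal{D}}$, and because $D$ acts on the $d$-dimensional domain $\mathcal{X}$ (not $\mathcal{X}\times T$), taking $K=(n/\log n)^{1/(2\beta+d)}$ gives $d_{\mathcal{D}}=O((n/\log n)^{d/(2\beta+d)})$; hence $(d_{\mathcal{G}}+d_{\widetilde{\mathcal{D}}})/n\lesssim(\log n/n)^{2\beta/(2\beta+d)}$ and the residual approximation term becomes $\frac{L^2}{(1+\|\phi_{S^{-1}}\|_{\mathcal{H}^\alpha})^2}(\log n/n)^{2\beta/(2\beta+d)}$. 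Lemmas~2 and~4 of \citep{puchkin24:_rates} bound $L_{\mathcal{G}},L_{\mathcal{D}},L_{\mathcal{Q}}$ polynomially in $n$, so under $L\|\kappa_S\|_{\mathrm{Lip}}\lesssim n^c$ the logarithm satisfies $\log(2n(\cdots))\lesssim\log n+\log(L\|\kappa_S\|_{\mathrm{Lip}})$; substituting all of this into the general bound produces the first displayed inequality, and absorbing the $O(1)$ quantity $\log(L\|\kappa_S\|_{\mathrm{Lip}})/\log n$ into the constant $c_0$ yields the second. I expect the main obstacle to be this dimension-reduction step: one must verify carefully that routing all $t$-dependence through the fixed, parameter-free map $\psi_{\mathrm{DNN}}$ keeps $d_{\widetilde{\mathcal{D}}}$ at the $d$-dimensional rate, while the composite $D\circ\psi_{\mathrm{DNN}}$ still lies in $\mathcal{H}^1(\mathcal{X}\times T,L)$ with the prescribed Lipschitz budget and still recovers the unnormalized $D_q$ whenever $\|D_q\|_{\mathcal{H}^\beta}\le L/(4\sqrt{d}(1+\|\phi_{S^{-1}}\|_{\mathcal{H}^\alpha}))$ — a delicate propagation of Hölder norms through a composition.
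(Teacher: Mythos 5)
Your proposal follows the paper's own proof essentially step for step: the same reduction via the general oracle bound, the same quotation of Theorem~2 of \citep{puchkin24:_rates} for $\Delta_{\mathcal{G}}$, the same concavity argument bounding $\Delta_{\widetilde{\mathcal{D}}}$ by a squared sup-norm approximation error, the same factorization $\widetilde{D}=D\circ\psi_{\mathrm{DNN}}$ with a parameter-free ReQU approximant of $\phi_{S^{-1}}$ (which is exactly how the paper achieves the dimension reduction from $d+1$ to $d$ and the replacement of $H_S$ by $1+\|\phi_{S^{-1}}\|_{\mathcal{H}^\alpha}$), and the same handling of the logarithmic factor under $L\|\kappa_S\|_{\mathrm{Lip}}\lesssim n^{c}$. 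This matches the paper's argument, so no further comparison is needed.
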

The convergence rate of the variance term in $\mathrm{JS}(p_0,\widehat{q}_\lambda)$ 
is the same as the min-max optimal rate for $p_0\in\mathcal{H}^\beta(\mathcal{X},H)$. 
Note that we assume that the true distribution is realized by the generator $G_0$, meaning that
the class of true probability densities is slightly restricted 
in the set of probability densities in $p_0\in\mathcal{H}^\beta(\mathcal{X},H)$. 
\begin{remark}
 The H\"{o}lder norm of the map $\phi_{S^{-1}}(\x,t)=S_t^{-1}\x$ controls the bias-variance trade off. 
 \begin{itemize}
 \item $\|\phi_{S^{-1}}\|_{\mathcal{H}^\alpha}$ is large: the bias is large and variance is small. 
 \item $\|\phi_{S^{-1}}\|_{\mathcal{H}^\alpha}$ is small: the bias is small and variance is large.
\end{itemize}
\end{remark}
The upper bounds of $\Delta_{\mathcal{G}}$ and $\Delta_{\widetilde{\mathcal{D}}}$ are shown below. 
For $\Delta_{\mathcal{G}}$, we use theoretical analysis in \cite{puchkin24:_rates}. 
When we evaluate $\Delta_{\widetilde{\mathcal{D}}}$, 
naive application of the existing theorem~\cite{puchkin24:_rates} leads the sub-optimal order. 
This is because the input dimension to the discriminator is extended from $d$ to $d+1$. 
In our upper bound, $d+1$ is reduced to $d$. Though it is a minor change, 
but we can obtain a \emph{possible} min-max optimal convergence rate for the variance term. 

\begin{proof}
[Proof of Theorem~\ref{theorem:convergence_rate_GAN_DA}]
For $\Delta_{\mathcal{G}}$, the same argument as in Theorem~2 of~\cite{puchkin24:_rates} holds. 
Suppose that the true generator $G_0$ satisfies $G_0\in\mathcal{H}^{1+\beta}_{\Lambda}([0,1]^d,H_0)$
for $\beta>2, H_0>0$ and $\Lambda>1$. 
Then, there exists a DNN architecture $\mathcal{G}$ depending on $n,H_0,\beta,d$ such that
\begin{align*}
 \Delta_{\mathcal{G}}\leq C_{d,\beta,H_0}\Lambda^{9d}\left(\frac{\log{n}}{n}\right)^{\frac{2\beta}{2\beta+d}},\quad
 d_{\mathcal{G}}\leq C_{d,\beta,H_0}\left(\frac{n}{\log{n}}\right)^{\frac{d}{2\beta+d}}. 
\end{align*}

Let us consider $\Delta_{\widetilde{\mathcal{D}}}$. 
The optimal discriminator $\widetilde{D}_q(\x,t)$ attains the maximum value of 
\begin{align*}
L(q,\widetilde{D}) =
 \int_{\mathcal{X}\times{T}}
 \pi(t)
 [p_0(\x)\log \widetilde{D}\circ\kappa_S(\x,t)+q(\x)\log(1-\widetilde{D}\circ\kappa_S(\x,t))]\rmd{\mu}\rmd{t}
\end{align*}
At each $(\x, t)$, $\widetilde{D}_q\circ\kappa_S(\x,t)=D_q(\x)$ attains the maximum of the integrand. 
The second order derivative of the concave function 
$v\mapsto p_0(\x)\log v+q(\x)\log(1-v)$ is 
$-\frac{1}{v^2}p_0(\x)-\frac{1}{(1-v)^2}q(\x)$, which is bounded above by $(\|p_0\|_\infty+\|q\|_\infty)/D_{\min}^2$. 
When the latent variable $\z$ for the generator $G\in\mathcal{G}$ is the uniform distribution on $[0,1]^d$, 
Lemma~8 of~\cite{puchkin24:_rates} guarantees that 
$\|q\|_\infty$ for $q\in\mathcal{Q}$ is uniformly bounded above by $\Lambda_{\mathcal{G}}^d$. 
Hence, for any $q\in\mathcal{Q}$, we have
\begin{align*}
 L(q,\widetilde{D}_q)-L(q,\widetilde{D}) \leq c \sup_{\x\in\mathcal{X},t\in{T}}|D_q(\x)-\widetilde{D}\circ\kappa_S(\x,t)|^2,
\end{align*}
where $c$ depends on $D_{\min}, \|p_0\|_\infty, \Lambda^d$. 
To derive the upper bound of $\Delta_{\widetilde{\mathcal{D}}}$, 
we evaluate the right-hand side of the above inequality. 

Let $\mathcal{D}$ be a set of discriminators on $\mathcal{X}$. 
Suppose that $\widetilde{D}\in\widetilde{\mathcal{D}}$ is expressed as the composite function 
$\widetilde{D}(\x,t)=D(\phi_{S^{-1}}(\x,t))$ for $D\in\mathcal{D}$ and $\phi_{S^{-1}}(\x,t):=S_t^{-1}\x$. 
Then, $\widetilde{D}\circ\kappa_S(\x,t)=\widetilde{D}(S_t\x,t)=D(\x)$ holds for $\x\in\mathcal{X}, t\in{T}$. 
The function $\phi_{S^{-1}}$ is approximated by DNN. 
Suppose that $\phi_{S^{-1}}\in \mathcal{H}^{\alpha}(\mathcal{X}\times{T}, H_S)$ with $H_S\geq 1$. 
Then, 
due to Theorem~2 in~\cite{belomestny23:_simul}, 
there exists a DNN $\psi_{\mathrm{DNN}}$ such that $\psi_{\mathrm{DNN}}\in\mathcal{H}^1(\mathcal{X}\times{T},2H_S)$ and 
\begin{align*}
 \max_{i\in[d]}\|\phi_{S^{-1},i}-\psi_{\mathrm{DNN},i}\|_\infty \lesssim H_S/K_0^\alpha
\end{align*}
for a large $K_0\in\mathbb{N}$, which determines the width of DNN to be $O((K_0+\alpha)^d)$. 
Since $\phi_{S^{-1}}$ is determined from the prespecified DA, 
the DNN, $\psi_{\mathrm{DNN}}$, does not need to have learnable parameters. 

Note that for 
$D\in\mathcal{H}^1(\mathcal{X}, \frac{L}{2\sqrt{d}H_S})$ and 
$\psi_{\mathrm{DNN}} \in \mathcal{H}^1(\mathcal{X}\times{T}, 2H_S)$, we have
$D\circ\psi_{\mathrm{DNN}}\in\mathcal{H}^1(\mathcal{X}\times{T}, L)$. 
Indeed, 
\begin{align*}
 |D\circ\psi_{\mathrm{DNN}}(\x,t)-D\circ\psi_{\mathrm{DNN}}(\x',t')|
&\leq  
 \frac{L}{2\sqrt{d}H_S}\cdot 1\wedge\|\psi_{\mathrm{DNN}}(\x,t)-\psi_{\mathrm{DNN}}(\x',t')\|\\
&\leq  
 \frac{L}{2H_S}\cdot 1\wedge\|\psi_{\mathrm{DNN}}(\x,t)-\psi_{\mathrm{DNN}}(\x',t')\|_\infty\\
&\leq   
 \frac{L}{2H_S}\cdot 1\wedge 2H_S(1\wedge \|(\x,t)-(\x',t')\|)\\
&\leq   
 \frac{L}{2H_S}\cdot (2H_S)\wedge 2H_S(1\wedge \|(\x,t)-(\x',t')\|)\\
&= L \cdot1\wedge\|(\x,t)-(\x',t')\|. 
\end{align*}
Furthermore, $\|D\circ\psi_{\mathrm{DNN}}\|_\infty\leq \|D\|_{\infty}\leq \frac{L}{2\sqrt{d}H_S}\leq L$ holds. 
Hence, we have $D\circ\psi_{\mathrm{DNN}}\in\mathcal{H}^1(\mathcal{X}\times{T}, L)$. 
Again, Theorem~2 in~\cite{belomestny23:_simul} guarantees that 
there exists a DNN model $\mathcal{D}\subset\mathcal{H}^1(\mathcal{X}, \frac{L}{2\sqrt{d} H_S})$ such that 
\begin{align*}
\inf_{D\in\mathcal{D}} \bigg\|\underbrace{
 \frac{L}{4\sqrt{d} H_S}\frac{D_q}{\|D_q\|_{\mathcal{H}^\beta}}}_{\bar{D}_q}
 -D\bigg\|_\infty
 \lesssim
 \frac{L}{H_S}\frac{1}{K^\beta}
\end{align*}
for a large $K$, since $\bar{D}_q\in\mathcal{H}^\beta(\mathcal{X},\frac{L}{4\sqrt{d}H_S})$. 
In the above, $\frac{1}{4\sqrt{d}}$ is absorbed as a constant in the inequality. 
Let us define $\widetilde{\mathcal{D}}=\{D\circ\psi_{\mathrm{DNN}}\,|\,D\in\mathcal{D}\}$. 
Then, $\widetilde{\mathcal{D}}\subset\mathcal{H}^1(\mathcal{X}\times{T}, L)$ holds. 
From the above inequalities, we have 
\begin{align*}
 \inf_{\widetilde{D}\in\widetilde{\mathcal{D}}}
 \big\| \bar{D}_q -\widetilde{D}\big\|_\infty
& \lesssim
 \inf_{D\in\mathcal{D}} \|\bar{D}_q-D\|_\infty
 +
  \|D-D\circ\psi_{\mathrm{DNN}}\|_\infty\\
&\lesssim
 \frac{L}{H_S}
 \frac{1}{K^\beta} 
 +\frac{L}{\sqrt{d}H_S}
 \max_{i\in[d]}\sup_{\x,t}
 \underbrace{
 |\phi_{S^{-1},i}(S_t\x,t)-\psi_{\mathrm{DNN},i}(S_t\x,t)|}_{< \text{\ arbitrary small}
 }
\lesssim
 \frac{L}{H_S} \frac{1}{K^\beta}. 
\end{align*}
Furthermore, we have 
\begin{align*}
 \big\|D_q-\bar{D}_q \big\|_{\infty}
 =
 \left[ \|D_q\|_{\mathcal{H}^\beta}-\frac{L}{4\sqrt{d}H_S} \right]_+. 
\end{align*}
As a result, we have 
\begin{align*}
 \inf_{\widetilde{D}\in\widetilde{\mathcal{D}}}\|D_q-\widetilde{D}\|_\infty
 \lesssim 
 \left[ \|D_q\|_{\mathcal{H}^\beta}-\frac{L}{4\sqrt{d}H_S} \right]_+
 +\frac{L}{H_S} \frac{1}{K^\beta}.
\end{align*}
Combining all the inequalities above, we obtain
\begin{align*}
 \Delta_{\widetilde{\mathcal{D}}}
 \lesssim
 \left[ \max_{q\in\mathcal{Q}}\|D_q\|_{\mathcal{H}^\beta}-\frac{L}{4\sqrt{d}(1+\|\phi_{S^{-1}}\|_{\mathcal{H}^\alpha})} \right]_+^2
 +\frac{L^2}{(1+\|\phi_{S^{-1}}\|_{\mathcal{H}^\alpha})^2} \frac{1}{K^{2\beta}}, 
\end{align*}
where $H_S$ is replaced with $1+\|\phi_{S^{-1}}\|_{\mathcal{H}^\alpha}$. 
Let $d_{\mathcal{D}}$ be the number of parameters for $\mathcal{D}$. Then, 
$d_{\mathcal{D}}=d_{\widetilde{\mathcal{D}}}$ holds from the definition of $\widetilde{\mathcal{D}}$. 
Again, 
we have $d_\mathcal{D}=O((n/\log{n})^{d/(2\beta+d)})$ for $K=(n/\log{n})^{1/(2\beta+d)}$, 
since the dimension of the input vector of $D\in\mathcal{D}$ is $d$. 
Note that $L_{\widetilde{\mathcal{D}}}=L_{\mathcal{D}}$ holds. Lemma~2 and Lemma~4 of~\cite{puchkin24:_rates} guarantees
\begin{align*}
 \log(2n(L_{\mathcal{G}}L\|\kappa_S\|_{\mathrm{Lip}}\vee L_{\mathcal{D}}\vee L_{\mathcal{Q}}\vee 1))
 \lesssim \log\Lambda{n}
 \lesssim \log\Lambda\log{n}. 
\end{align*}
Substituting all ingredients, we obtain the inequality of the theorem. 
\end{proof}

\subsubsection*{Composition of Multiple DAs}
Next, let us consider the usage of multiple DAs. 
Let $S_{i,t_i}:\mathcal{X}\rightarrow \mathcal{X}, t_i\in{T_i}, i\in[B]$ be $B$ DAs.
We assume that $S_{i,t_i}\mathcal{X}\subset\mathcal{X}$ and 
$S_{i,t_i}::\mathcal{X}\rightarrow S_{i,t_i}\mathcal{X}$ is invertible for any $i\in[B]$ and $t_i\in{T_i}$. 
Let $\e_i, i\in[B]$ be the canonical basis in $\Rbb^B$, e.g., $\e_1=(1,0\ldots,0)\in\Rbb^B$, etc, 
and $E=\{\e_1,\ldots,\e_B\}\subset\Rbb^B$. 
Let $\Delta_B$ be $\{\bm{a}=(a_1,\ldots,a_B)\in\Rbb^B\,|\,\bm{0}\leq\bm{a},\sum_{i=1}^{B}a_i\leq 1\}$. 
For the collection of invertible DAs, $S=\{S_i\}_{i\in[B]}$, let us define 
the map $\kappa_S$ for $(\x,\bm{t},\bm{a})\in\mathcal{X}\times\prod_{i=1}^B{T_i}\times\Delta_B$ 
for $\t=(t_1,\ldots,t_B)\in{\prod_{i=1}^B{T_i}}$ and $\bm{a}=(a_1,\ldots,a_B)$ by 
\begin{align*}
\kappa_S(\x,\bm{t},\bm{a})=(\sum_{i}a_iS_{i,t_i}\x,\bm{t},\bm{a})
\in \mathcal{X}\times\prod_{i=1}^B{T_i}\times\Delta_B. 
\end{align*}
Furthermore, the map $\phi_{S^{-1}}$ on $\mathcal{X}\times\prod_{i\in[B]}T_i\times\Delta_B$ is defined by 
\begin{align*}
 \phi_{S^{-1}}(\x,\bm{t}, \bm{a})=\sum_{i}a_i S_{i,t_i}^{-1}\x. 
\end{align*}
From the definition, $\phi_{S^{-1}}\circ \kappa_S(\x,\bm{t},\e)=\x$ holds for any
$(\x,\bm{t},\e)\in\mathcal{X}\times\prod_{i}T_i\times{E}$. 
For the extended sample $\widetilde{\x}=\kappa_S(\x,\bm{t},\e_i)$, one can identify the DA, $S_{i,t}$, operated to $\x$. 

We construct a DNN that approximates $\phi_{S^{-1}}$. 
The Lipschitz constant of $\widetilde{D}\circ\kappa_S(\x,\bm{t},\bm{a})$ for the extended discriminator
$\widetilde{D}$ such that $\|\widetilde{D}\|_{\mathrm{Lip}}\leq L$
is given as follows: 
\begin{align*}
 \|\kappa_S(\x,\bm{t},\bm{a})-\kappa_S(\x',\bm{t}',\bm{a}')\|
& \leq
 \|\sum_{i}a_iS_{i,t_i}\x-\sum_{i}a_i' S_{i,t_i'}\x'\|+\|(\t,\bm{a})-(\t',\bm{a}')\|\\
& \leq
 \|\sum_{i}(a_i-a_i')S_{i,t_i}\x\|+\|\sum_{i}a_i' (S_{i,t_i'}\x'-S_{i,t_i}\x)\|+\|(\t,\bm{a})-(\t',\bm{a}')\|\\
& \leq
 \underbrace{\sum_{i}|a_i-a_i'|}_{\leq \sqrt{B}\|\bm{a}-\bm{a}'\|}\underbrace{\|S_{i,t_i}\x\|}_{\leq\sqrt{d}}+\sum_{i}|a_i'|\|S_{i,t_i'}\x'-S_{i,t_i}\x\|+\|(\t,\bm{a})-(\t',\bm{a}')\|\\
& \leq
 \sqrt{Bd}\|\bm{a}-\bm{a}'\|+\sum_{i}|a_i'|\|\kappa_{S_i}\|_{\mathrm{Lip}}\|(\x,\t)-(\x',\t')\|+\|(\t,\bm{a})-(\t',\bm{a}')\|\\
& \leq
 \sqrt{Bd}\|\bm{a}-\bm{a}'\|+\max_i\|\kappa_{S_i}\|_{\mathrm{Lip}}\|(\x,\t)-(\x',\t')\|+\|(\t,\bm{a})-(\t',\bm{a}')\|\\
& \leq
 (\sqrt{Bd}+1 + \max_i\|\kappa_{S_i}\|_{\mathrm{Lip}})\|(\x,\t,\bm{a})-(\x',\t',\bm{a}')\|. 
\end{align*}
Note that for $\mathcal{X}=[0,1]^d$, 
j$\|S_{i,t_i}\x\|\leq \sqrt{d}$\footnote{$\sqrt{d}$ in the inequality can be replaced with the radius of $\mathcal{X}$ in $\ell_2$-norm.}. 
Hence, we have $\|\widetilde{D}\circ\kappa_S\|_{\mathrm{Lip}}\leq L(\sqrt{Bd}+1+\max_{i\in[B]}\|\kappa_{S_i}\|_{\mathrm{Lip}})$. 

Suppose $\phi_{S_i^{-1}}\in \mathcal{H}^{\alpha_i}(\mathcal{X}\times T_i, H_{i})$ for $i\in[B]$. 
Each $\phi_{S_i^{-1}}$ is approximated by a DNN $\psi_i\in\mathcal{H}^1(\mathcal{X}\times T_i,2H_i)$
in the sense that $\|\phi_{S_i^{-1}}-\psi_i\|_\infty\lesssim H_i/K_i^{\alpha_i}$. 
Let us consider the DNN $\psi$ defined by 
\begin{align*}
\psi:(\x,\t,\bm{a})\mapsto (\psi_1(\x,t_1),\ldots,\psi_B(\x,t_B),\bm{a}) \mapsto \sum_{i=1}^{B}a_i\psi_i(\x,t_i). 
\end{align*}
Since the addition and multiplication are exactly expressed by ReQU-based DNN, the second operation is exactly realized by DNN with ReQU. 
\begin{align*}
 \|\phi_{S^{-1}}(\x,\t,\bm{a})-\psi(\x,\t,\bm{a})\|_\infty
 \leq \sum_{i}|a_i|\|\phi_{S_i^{-1}}-\psi_i\|_\infty \lesssim \max_i \frac{H_i}{K_i^{\alpha_i}}. 
\end{align*}
The above upper bound can be arbitrary small when $\min_i K_i$ is sufficiently large. 
The $\mathcal{H}^1$ norm of $\psi$ is evaluated as follows: 
\begin{align*}
 \|\psi(\x,\t,\bm{a})-\psi(\x',\t',\bm{a}')\|_\infty
 &\leq
 \sum_{i}|a_i-a_i'|\!\!\!\!\!\!\!\!\!\!\!\!\!\!\!\!
 \underbrace{\|\psi_i(\x,t_i)\|_\infty}_{\qquad\qquad \leq 1\ \text{since $\psi_i(\x,t_i)\in\mathcal{X}$}}
 \!\!\!\!\!\!\!\!\!\!\!\!\!\!\!\!\!\!
 +
 \sum_{i}|a_i'|\|\psi_i(\x,t_i)-\psi_i(\x',t_i')\|_\infty\\
 &\leq
 \|\bm{a}-\bm{a}'\|_1
 +
 \sum_{i}|a_i'|\|\psi_i(\x,t_i)-\psi_i(\x',t_i')\|_\infty\\
 &\leq
 2\sqrt{B}\cdot 1\wedge\|\bm{a}-\bm{a}'\| + 2\max_i H_i \cdot 1\wedge\|(\x,t_i)-(\x',t_i')\|\\
 &\leq
2(\sqrt{B} + \max_iH_i)\cdot 1\wedge\|(\x,\t,\bm{a})-(\x',\t',\bm{a}')\|.
\end{align*}
In the above we use the inequality 
$\|\bm{a}-\bm{a}'\|_1\leq 2(1\wedge \|\bm{a}-\bm{a}'\|_1)\leq 2(1\wedge\sqrt{B}\|\bm{a}-\bm{a}'\|)\leq 2\sqrt{B}(1\wedge\|\bm{a}-\bm{a}'\|)$ 
for $\bm{a},\bm{a}'\in\{\bm{b}:\bm{b}\geq0,\sum_ib_i\leq 1\}$. 
Hence, $\psi\in\mathcal{H}^1(2H_S)$ with $H_S=\sqrt{B}+\max_{i\in[B]}H_i$. 
One can set $H_i=\|\phi_{S_i^{-1}}\|_{\mathcal{H}^{\alpha_i}}$. 

For $\widetilde{D}(\x,\t,\bm{a})=D\circ\psi(\x,\t,\bm{a})$ with 
$D\in\mathcal{H}^{1}(\mathcal{X},\frac{L}{2\sqrt{d}(\sqrt{B}+\max_i H_i)})$ agrees to 
$\widetilde{D}\in\mathcal{H}^{1}(\mathcal{X}\times\prod_i T_i\times \Delta_B, L)$
as proved in Theorem~\ref{theorem:convergence_rate_GAN_DA}.

Eventually, for $L(\sqrt{Bd}+1+\max_{i\in[B]}\|\kappa_{S_i}\|_{\mathrm{Lip}})\lesssim n^c$ we obtain
\begin{align*}
 \mathrm{JS}(p_0,\widehat{q}_\lambda)
& \lesssim
 \left[ \max_{q\in\mathcal{Q}}\|D_q\|_{\mathcal{H}^\beta}-\frac{L}{4\sqrt{d}(\sqrt{B}+\max_{i\in[B]}\|\phi_{S_i^{-1}}\|_{\mathcal{H}^{\alpha_i}})} \right]_+^2\\
 &+ \left\{\frac{L^2}{
(\sqrt{B}+\max_{i\in[B]}\|\phi_{S_i^{-1}}\|_{\mathcal{H}^{\alpha_i}})^2
 }+c+\frac{\log(L(\sqrt{Bd}+1+\max_{{i\in[B]}}\|\kappa_{S_i}\|_{\mathrm{Lip}})) }{\log{n}}\right\}
 \left(\frac{\log{n}}{n}\right)^{\frac{2\beta}{2\beta+d}}\\
&\phantom{\lesssim} + \frac{\log(1/\delta)}{n} +\lambda\\
& \lesssim
 \left[ \max_{q\in\mathcal{Q}}\|D_q\|_{\mathcal{H}^\beta}-\frac{L}{4\sqrt{d}(\sqrt{B}+\max_{i\in[B]}\|\phi_{S_i^{-1}}\|_{\mathcal{H}^{\alpha_i}})} \right]_+^2\\
 &+ \left\{\frac{L^2}{
(\sqrt{B}+\max_{i\in[B]}\|\phi_{S_i^{-1}}\|_{\mathcal{H}^{\alpha_i}})^2}+c\right\}
 \left(\frac{\log{n}}{n}\right)^{\frac{2\beta}{2\beta+d}}
 + \frac{\log(1/\delta)}{n} +\lambda
\end{align*}

\bibliographystyle{apalike}

\end{document}